\theoremstyle{plain}
\newtheorem{theorem}{Theorem}[section]
\newtheorem{proposition}[theorem]{Proposition}
\theoremstyle{definition}
\newtheorem{definition}[theorem]{Definition}
\theoremstyle{remark}
\Crefname{equation}{Equation}{Equations}
\Crefname{figure}{Figure}{Figures}
\Crefname{tabular}{Table}{Tables}
\Crefname{section}{Section}{Sections}
\DeclareMathOperator{\Beta}{\mathrm{Beta}}
\DeclareMathOperator{\Bernoulli}{\mathrm{Bernoulli}}
\DeclareMathOperator{\BetaBern}{\mathrm{BetaBern}}
\DeclareMathOperator{\BernMix}{\mathrm{BernMix}}
\DeclareMathOperator{\Bern}{\mathrm{Bern}}
\DeclareMathOperator*{\argmax}{arg\,max}
\DeclareMathOperator*{\argmin}{arg\,min}
\DeclareMathOperator{\KL}{\mathrm{KL}}
\DeclareMathOperator{\MI}{\mathrm{MI}}
\DeclareMathOperator{\MAMI}{\mathrm{PAMI}}
\newcommand{\Lmax}{{L_{\text{max}}}}
\def\cX{{\cal X}}
\def\vs{{\mathbf{s}}}
\def\ud{{\mathrm{d}}}
\def\vx{{\mathbf{x}}}
\def\eps{\varepsilon} %
\newcommand\numberthis{\addtocounter{equation}{1}\tag{\theequation}}
\newcommand{\DqN}{{\mathcal{D}_{N}}}
\newcommand{\ZERO}{{\textsc{zero}}\xspace}
\newcommand{\ZEROs}{{\textsc{zero}s}\xspace}
\newcommand{\ONE}{{\textsc{one}}\xspace}
\newcommand{\ONEs}{{\textsc{one}s}\xspace}
\newcommand{\pbayes}{{p_{\textrm{B}}}}
\newcommand{\vL}{{v_{\textrm{L}}}}
\newcommand{\vR}{{v_{\textrm{R}}}}
\newcommand{\pdot}{{p_{(\cdot)}}}
\def \thetitle {Why Is Prompting Hard?\\ Understanding Prompts on Binary Sequence Predictors}
\begin{document}

\runningtitle{Understanding prompts on binary sequence predictors}

\runningauthor{Li K. Wenliang, Anian Ruoss, Jordi Grau-Mayo, Marcus Hutter, Tim Genewein}

\twocolumn[

\aistatstitle{\thetitle}

\aistatsauthor{Li K. Wenliang\textnormal{*\textsuperscript{\dag} } \And Anian Ruoss\textnormal{*}\And Jordi Grau-Moya\textnormal{*}\And  Marcus Hutter\textnormal{*} \And Tim Genewein\textnormal{*\textsuperscript{\dag}}}

\aistatsaddress{*Google DeepMind ~~~~~ \textsuperscript{\dag}Correspondence to \texttt{\{kevinliw,timgen\}@google.com}}
]

\begin{abstract}
Frontier models can be prompted or conditioned to do many tasks, but finding good prompts is not always easy, nor is understanding some performant prompts.
We view prompting as finding the best conditioning sequence on a near-optimal sequence predictor.
On numerous well-controlled experiments, we show that unintuitive optimal conditioning sequences can be better understood given the pretraining distribution, which is not usually available.
Even using exhaustive search, reliably identifying optimal prompts for practical neural predictors can be surprisingly difficult.
Popular prompting methods, such as using demonstrations from the targeted task, can be surprisingly suboptimal.
Using the same empirical framework, we analyze optimal prompts on frontier models, revealing patterns similar to the binary examples and previous findings. 
Taken together, this work takes an initial step towards understanding optimal prompts, from a statistical and empirical perspective that complements research on frontier models.
\end{abstract}

\doparttoc %
\faketableofcontents %

\section{INTRODUCTION}
Successful frontier models are pretrained over large datasets, which are generated by authors and creators of diverse knowledge domains, individual styles and embedded sentiments. The distribution over these high-level \emph{latent factors}, together with the data distribution of each source, implicitly defines a hierarchical generative process, or meta-distribution. As a notable example, text data can be regarded as a meta-distribution of tokens used to pretrain large language models (LLMs).
Theoretically, minimizing the next-token prediction error on the meta-distribution yields a Bayes-optimal predictor for the meta-distribution, as clearly shown by~\citet{ortega2019meta} using binary sequences.
One hallmark feature of such a predictor is its capability to be steered through human-readable prompts to execute various tasks. In the case of LLMs, prompting~\citep{brown2020language} triggers implicit \emph{inference} 
over the latent factors desired for a specific behavior~\citep{xieexplanation,wang2023large,jiang2023latent,wies2023learnability,arora2024bayesian}. Nonetheless, heuristic and handcrafted prompts often fall short of expectations, while certain demonstrably powerful prompts appear eccentric and warrant deeper fundamental understandings. 
Previous work on (binary) sequence predictors studied mostly unconditioned predictive performance~\citep[e.g.,][]{ortega2019meta,mikulik2020meta,genewein2023memory,bhattamishra2020ability,wei2022statistically,deletang2022neural}; however, how sequence predictors behave under controlled conditioning is not well-studied, nor do we know what the optimal conditioning sequences are for a given downstream task.

While many factors, such as Transformer architecture and post-training, are at play, we provide robust and fundamental insights on prompting from a Bayesian meta-learning perspective. Specifically, we study prompting on binary sequence predictors with a carefully designed framework, complementing frontier model research. Using a large collection of idealized and neural predictors (>1000 instances, see \cref{sec:compute}), we observe unexpected characteristics in optimal prompts, even in simplified scenarios. For example, consider a binary generator pretrained on sequences of i.i.d.\ coin flips, with $\mathbb{P}(\mathrm{HEADS})$ randomly chosen for each sequence. How would one prompt it to generate $70\%$ heads? A natural heuristic prompt is a sequence of $70\%$ heads; however, as we will show, this is not always the most effective prompt: the optimal prompt depends on the pretraining distribution, which is often unknown and overlooked in the past.
Employing this framework, we address further fundamental questions: Are longer prompts better? Does more data help identify optimal prompts? Can expert behavior be induced by expert demonstrations? Not always, as we elaborate in the upcoming sections.
Our main contributions are:
\vspace{-1em}
\begin{enumerate}
    \itemsep -0.2em 
    \item We establish a rigorous \emph{framework} for analyzing optimal conditioning sequences (prompts), encompassing notions of optimality, experimental designs, and prompt visualizations.
    \item Applying this framework to examples of binary data generators (DGs), we collected numerically-guaranteed optimal prompts by exhaustive search (>250k in total). Optimal prompts on seemingly trivial examples can be \emph{atypical} for the task, partly due to biases in the usually unknown pretraining DG; interpreting the prompts can thus be hard.
    \item Given a finite task dataset, we show that optimal prompts can be \emph{unreliable} to find, leading to inconsistent results for every draw of the dataset. The reliability follows \emph{unintuitive trends} depending on the task setup, e.g., increasing the task dataset size does not always improve reliability.
    \item Using an in-context learning setup involving two-arm bandits, we show that prompting by expert demonstrations is \emph{less effective} compared to optimal prompts; knowing the pretraining distribution is crucial in comprehending performant prompts.
    \item We adapt this framework for studying prompts on real LLMs, and reveal interesting patterns consistent with findings on the binary examples. 
\end{enumerate}
\paragraph{Limitations and scope.} 
We remark upfront that, although our binary data tasks make our results clear and interpretable, they are drastic simplifications of natural language data. 
Consequently, while our findings offer foundational insights, most of our results (excluding our LLM investigations) are not tied to specific frontier models, preventing definitive claims about their exact behaviors at scale. To ensure optimality, our exhaustive search fully explores the prompt space, and comparisons with practical partial-search methods are out of scope. This paper takes a Bayesian meta-learning view of sequence predictors, and we do not fully study the effects of post-training or neural architecture; as a result, a comprehensive theory of prompting in real LLMs is beyond the current aim. Extended limitations are discussed in \cref{sec:limitations}.

\section{RELATED WORK}
A large body of work aims to find effective prompts. Prompt engineering~\citep[e.g.,][]{
wei2022chain,liu2023pre,chen2023unleashing,marvin2023prompt,sahoo2024systematic,song2024communication,khattab2023dspy,xu2024prompting}
improves performance using intuitions from system design, search and planning. Prompt optimization~\citep[e.g.,][]{
pryzant2023automatic,wang2023promptagent,fernando2023promptbreeder,guo2023connecting,wu2024prompt,hao2024optimizing}
instead employs variants of discrete optimization and search techniques. %
Another use of prompting, particularly in agentic or robotic tasks, is by providing expert demonstrations to induce desired policies for the task 
~\citep[e.g.][]{dong2022survey,ruoss2024lmact,agarwal2024many,laskincontext}.
While much focus has been put on improving the performance of the resulting prompts, we do not know whether they are actually optimal, since the prompt space on text is enormous. It has also been challenging to \emph{interpret} the prompts by relating to the desired tasks, and to understand concretely why they work
~\citep{webson2022prompt,daras2022discovering,murr2023testing,he2024does},
why some prompts work better than others despite no obvious difference to humans~\citep{kojima2022large,mizrahi2024state}, and why some prompts that are intuitively expected to work may disappoint~\citep{zamfirescu2023johnny,khurana2024and}.
These difficulties are exacerbated by the variations across architectures, pretraining mixtures, finetuning processes, and nuanced parameters that vary across public frontier models~\citep{chen2023chatgpt}.
The unintuitive nature of prompting has caused safety and ethical concerns
~\citep{wei2024jailbroken,wu2023jailbreaking,xu2024comprehensive,cherepanova2024talking,liuautodan,zou2023universal}.
It is thus worth taking a step back to understand this issue more fundamentally.

\begin{figure*}[t]
    \centering
    \includegraphics[page=1,width=0.95\textwidth]{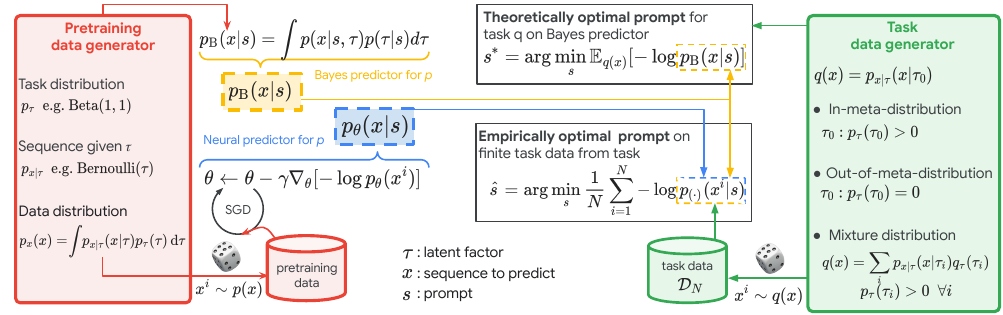}
    \caption{Framework for obtaining optimal prompts under pretraining and task data generators.
    }
    \label{fig:experiment_figure}
    \vspace{-0.2cm}
\end{figure*}

\section{BACKGROUND}
To study prompting under well-controlled conditions, we design synthetic data generators (DGs) with latent factors (akin to~\citet{xieexplanation,jiang2023latent}) to define pretraining distributions and (downstream) tasks. These DGs,
from Bernoulli sequences in \cref{sec:CIB-DG_methods,sec:CIB-DG_results} to bandit problems in \cref{sec:bandit}, 
 induce a (meta-)distribution over binary token sequences. We then adapt this framework to study real LLMs in \cref{sec:real_llms} where the latent factors are given but the pretraining distribution over texts is unknown.

To draw a sequence, the DG first draws a random latent value $\tau$ from a distribution $p_\tau(\tau)$; such as sampling a bias of a coin from a distribution on [0, 1]. Then, sample a sequence of length $T\in\mathbb{N}^+$ from a conditional distribution $p_{x|\tau}(x_{1:T}|\tau)$, such as a sequence of flips from a coin with bias $\tau$.
The distribution induced by this DG is then the marginal (mixture)
$
    p_x(x_{1:T}) = \int p_{x|\tau}(x_{1:T}|\tau) \ud p_\tau(\tau)\,.
$
The sequence $x_{1:T}$ is composed of binary tokens from the alphabet $\mathcal{A}:=\{0,1\}$. 
Subscripts are omitted when no confusion should arise.
We consider (piecewise) conditionally independent DGs where the conditional $p_{x|\tau}$ satisfies the condition that,
for some lengths $T, L\in\mathbb{N}^+$, we have $p_{x|\tau}(s_{1:L}x_{1:T} | \tau)=p_{x|\tau}(s_{1:L}|\tau)p_{x|\tau}(x_{1:T} | \tau)$ for all sequences $s_{1:L}\in\mathcal{A}^L$ and $x_{1:T}\in\mathcal{A}^T$.
In this case, when only $s_{1:L}$ is given to predict an unknown $x_{1:T}$, the Bayes-optimal predictor (or Bayes predictor) under the sequence distribution $p_x$ is
\begin{equation}\label{eq:analytical_predictive}
\textstyle
  \pbayes(x_{1:T}|s_{1:L}):=\int p_{x|\tau}(x_{1:T}|\tau) \ud p_{\tau|x}(\tau|s_{1:L}) \,,
\end{equation}
where $p_{\tau|x}(\tau|s_{1:L})=p_\tau(\tau)p_{x|\tau}(s_{1:L}|\tau) / p_x(s_{1:L})$ is the posterior over the latent factors given $s_{1:T}$. We regard $s_{1:T}$ as the \emph{prompt} for the sequence(-to-predict) $x_{1:T}$.
Conditional independence helps interpret $s$, and 
is a common assumption for synthetic datasets~\citep{xieexplanation,jiang2023latent,wang2023large}.
\subsection{Conditionally Independent Bernoulli DGs}

Our first set of experiments in \cref{sec:CIB-DG_methods} uses DGs from the family
of conditionally independent Bernoulli DGs (CIB-DGs), where
$p_\tau$ has support on the unit interval, and 
$p_{x|\tau}(x_{1:T} | \tau)=\prod_{t} \mathrm{Bernoulli}(x_t; \tau)$.
The Bayes predictor of a CIB-DGs has the property that 
two prompts $s_{1:L}$ and $s'_{1:L}$ are equivalent in the sense that $\pbayes(\cdot | s_{1:L})\equiv \pbayes(\cdot| s'_{1:L})$ if they have the same counts of zeros and ones, defined as 
\begin{equation}\label{eq:counts}
\begin{aligned}
S_0(s_{1:L})\!:=\!\sum_{t=1}^L\!\mathds{1}[s_t\!=\!0],~
S_1(s_{1:L})\!:=\!\sum_{t=1}^L\!\mathds{1}[s_t\!=\!1].
\end{aligned}
\end{equation}
This permutation invariance reduces the cost of prompt search on the Bayes predictor, but is unlikely to hold for neural networks with standard architectures~\citep{mikulik2020meta}.
We define the particular CIB-DGs and their Bayes predictors used in our first set of experiments:
\begin{definition}
$\Bern(\tau)$ is the CIB-DG 
with $p_\tau=\delta_{\tau}$, where $\delta_\tau$ is Dirac delta.
The Bayes predictor evaluated on $x_{1:T}$ is trivially $\Bern(\tau)$ itself, that is,
$$
  \pbayes(x_{1:T}|s_{1:L})=p_x(x_{1:T})=\tau^{S_1(x_{1:T})}(1-\tau)^{S_0(x_{1:T})}
$$
\end{definition}
\begin{definition}\label{thm:bernoulli-mixture}
$\BernMix(w, \tau_1, \tau_2)$ 
$p_\tau=(1-w)\delta_{\tau_1} + w\delta_{\tau_2}$ for $0<\tau_1<\tau_2<1$. 
In our experiments we use equal mixture weights, and write: $\BernMix(\tau_1,\tau_2):=\BernMix(0.5, \tau_1,\tau_2)$.
Its Bayes predictor is $\pbayes(\cdot|s_{1:L})=\BernMix(w_L(s_{1:L}), \tau_1,\tau_2)$, where
\begin{gather}\label{eq:mixture_weight_posterior}
w_L(s_{1:L})^{-1}=1+\left(\frac{\tau_1}{\tau_2}\right)^{S_1(s_{1:L})}\!\left(\frac{1-\tau_1}{1-\tau_2}\right)^{S_0(s_{1:L})}.
\end{gather}
\end{definition}
\begin{definition}\label{thm:beta-bernoulli}
$\BetaBern(\alpha,\beta)$ is the CIB-DG with $p_\tau=\mathrm{Beta}(\alpha, \beta)$.
The Bayes predictor is
$\pbayes(\cdot|s_{1:L}) = \BetaBern(\alpha+S_1(s_{1:L}), \beta + S_0(s_{1:L})).$
\end{definition}
Despite the apparent simplicity, these DGs can be adapted to generate simple text data, such as synthetic movie reviews: let $\tau$ be the sentiment over movies, with 0 being most negative and 1 most positive, and take a categorical $p_{x|\tau}$ over adjectives conditioned on the sentiment; a template for reviews can be \{``I have never seen such a \_ story'',  ``The visuals are \_.'', ``How \_ is the cast!'', \ldots \}, which can be concatenated to form a full review. Predicting the next adjective then resembles binary sequence prediction. Using binary tokens makes the prompts more interpretable and visualizable than using natural language tokens, and allows us to better appreciate the challenges of prompting. 

\subsection{Neural Predictors and Meta-Learning}\label{sec:neural_predictor}
In our experiments, we train autoregressive neural predictors on samples from meta-distributions.
Denote such a predictor by $p_\theta(\cdot)$ where $\theta$ are parameters. The objective for a single sequence 
$x_{1:T}$ is
$
   - \log p_\theta(x_{1:T}).
$
The latent $\tau$ is resampled for each $x_{1:T}$.
Under realizability and convergence~\citep{ortega2019meta}, neural predictors can \emph{meta-learn} to predict sequences by adapting to different latent factors, giving predictions that are indistinguishable from the Bayes predictor over $x\sim p_x$~\citep{wenliang2018neural,mikulik2020meta,genewein2023memory,wu2023many,grau2024learning}.
However, this theory does not predict or describe the optimal prompts from (near-)optimal predictors when $x$ is drawn from a task distribution $q$ (see \cref{sec:super_bayes}). This motivates our empirical framework below.
We note upfront that, the results derived from simplifications of CIB-DGs cannot exhaustively explain or accurately predict the behaviors of full-scale LLMs. 

\section{INVESTIGATION FRAMEWORK}\label{sec:CIB-DG_methods}
Our framework is illustrated in \cref{fig:experiment_figure}, which is first applied to CIB-DGs  in \cref{sec:CIB-DG_results} before more complex DGs.
For each pretraining DG $p$, we obtain the ideal Bayes predictor $\pbayes$ (by \cref{eq:analytical_predictive}) and practical neural predictors $p_\theta$ of recurrent and Transformer-based architectures; see \cref{sec:neural_predictor_detail}.
We then prompt a predictor towards a task specified by a task DG $q$. The task is directly manipulated by the DG, distinct from the activation or features from a network~\citep[e.g.][]{mittal2024does,hendel-etal-2023-context,todd2024function}, which is discovered rather than directly controlled.

\vspace{-0.5em}
\paragraph{Theoretically optimal prompt $s^*$.} Given the Bayes predictor $\pbayes$ of a pretraining DG $p$, the quality of a prompt $s_{1:L}$ towards a task DG $q$ is
\begin{equation}\label{eq:log_loss_L}
\textstyle
   \mathcal{L}(\pbayes, q, s_{1:L}) := -\sum_{x\in\mathcal{A}^T}q(x)\log \pbayes(x|s_{1:L})\,.
\end{equation}
The \emph{theoretically optimal} prompts of length $L$ and up to length $\Lmax$ are, respectively,
\begin{equation}\label{eq:optimal_prompt_L}
\begin{gathered}
    s^*_{1:L}(\pbayes, q) := \argmin_{s_{1:L}} \mathcal{L}(\pbayes, q, s_{1:L}); \\
    s^*_\Lmax(\pbayes,q) := \argmin_{\ell\in\{1,...,\Lmax\},s_{1:\ell}} \mathcal{L}(\pbayes, q, s_{1:\ell}).
\end{gathered}
\end{equation}
Both $s^*_{1:L}$ and $s^*_\Lmax$ depend on the full distributions of the pretraining and task DGs. In this work, we find them by evaluating \cref{eq:log_loss_L} for each possible prompt (exhaustive search) under the length constraints; see \cref{sec:prompt_search}. Previous work~\citep{bhargava2023s,renze2024benefits,kusano2024longer,wang2025towards,lester2021power}
reported better performance of shorter prompts on LLMs
(but see~\citep{liu2025effects,leidinger2023language}) without exhaustive search; here, we investigate this by explicitly controlling the prompt length.  

\vspace{-0.3em}
\paragraph{Empirically optimal prompt $\hat s\,$.}
Given a dataset $\DqN:=\{x_{1:T}^{i}\}_{i=1}^N$ of $N$ sequences from $q$, we optimize the prompt for a predictor $\pdot\in\{\pbayes, p_\theta\}$ under the empirical version of the loss \eqref{eq:log_loss_L}:
\begin{equation}\label{eq:log_loss_L_est}
\textstyle
   \mathcal{\hat{L}}(\pdot, \DqN, s_{1:L}) := -\frac{1}{N}\sum_{i=1}^N \log \pdot(x_{1:T}^{i}|s_{1:L}).
\end{equation}
The resulting \emph{empirically} optimal prompts are denoted by $\hat{s}_{1:L}$ and $\hat{s}_{\Lmax}$ under the respective length constraints before. Each exhaustive search uses a \emph{fixed} $\DqN$, and we repeat for different draws of $\DqN$ to give a distribution of $\hat s$.
Both the theoretical $s^*$ and empirical $\hat{s}$ depend on the pretraining and task DGs, the sequence length $T$, and the prompt length $L$ or $\Lmax$; the empirical $\hat{s}$ depends also on the predictor $\pdot$ and dataset size $N$.
Each configuration forms a \emph{prompt setup}, which we vary experimentally.
Previous work on prompt search~\citep[e.g.][]{deng2022rlprompt,pryzant2023automatic,hao2024optimizing} did not systematically study the effects of these hyperparameters, while brittleness of optimized prompts is known~\citep{he2024does,li2023robust,ngweta2025towards,shi2024robustness,peng2025dlpo}.

To interpret optimal prompts, we consider two classes of task distributions: the task DGs $q$ can be either \emph{in-meta-distribution} (IMD) w.r.t.\ $p$, where $q\in\mathcal{M}_p\!:=\!\{p_{x|\tau}(\cdot|\tau_0)\,|\,p_\tau(\tau_0) \!>\! 0\}$, or \emph{out-of-meta-distribution} (OOMD) w.r.t.\ $p$ ($q\notin\mathcal{M}_p$).
See \cref{tab:opt_prompt_summary} for examples illustrated by CIB-DGs.
This separation formalizes the notion of a task being ``within'' or ``outside'' the pretraining distribution in frontier model research~\citep{wei2021pretrained,krishna2023downstream,wang2023large,petrov2024when}.
In the IMD case, we can relate the prompt to the task sequence $x$:
\begin{proposition}\label{thm:info}
Prompting a predictor $p$ by $s^*$ for all $q\in\mathcal{M}_p$ maximizes the mutual information between the prompt $s$ and the sequence $x$ penalized by a ``prompt alignment'' with the predictor $p$.
\end{proposition}
\vspace{-0.3em}
This informal version of \cref{thm:info_formal}, proved in \cref{sec:info}, reveals that the theoretical $s^*$ must induce a desired distribution on $x$, but is also affected by the predictor $p$ and in turn the pretraining distribution. Specifically, the theoretically optimal prompt is, in general, \emph{not} the most likely sequence or a sample from $q$.
As such, we do not expect the optimal prompts to be interpretable based on the task distribution alone.
Meanwhile, the empirical $\hat{s}$ is stochastic due to randomness in $\DqN$ and pretraining. We visualize its   distribution for each prompt setup. To measure the reliability of empirical $\hat s$, we estimate the probability that $\hat{s}$ matches $s^*$, or \emph{proportion correct} (see \cref{sec:proportion_correct}), averaged over dataset draws and network instances.  
We also visualize the loss ``landscape'' (see \cref{sec:loss_landscape}) of each prompt setup. A sharper loss ``landscape'' around $s^*$ can improve reliability, and \emph{vice versa}. 
A flat landscape can lead to worse identifiability. Although the near-optimal prompts are acceptable for performance, inconsistent results often complicate interpretation.

\vspace{-0.5em}
\paragraph{Advantages of simplified setup.} Despite its obvious simplicity, using binary sequences and small-scale neural predictors brings the following advantages that are infeasible at full-scale LLMs:
\vspace{-0.7em}
\begin{enumerate}\setlength{\itemsep}{-2pt}
    \item The optimal Bayes predictor is available: the idealized limit any model or data scaling approaches;
    \item Exhaustive search to obtain numerically guaranteed optimal prompts;
    \item Visualizing the prompt distribution as the prompt set up changes;
    \item Repeating experiments over many random seeds to produce robust results.
\end{enumerate}
\begin{table*}[t]
\footnotesize
\centering
\caption{Summary of results on conditionally independent Bernoulli data generators. 
\label{tab:opt_prompt_summary}
}\vspace{-0.5em}
\begin{tabular}{ll|ccc}
\hline
\multirow{2}{*}{\textbf{Pretraining $p$}} & \multirow{2}{*}{\textbf{Task $q$}}            & \multicolumn{1}{c}{\multirow{2}{*}{\textbf{$s^*$ matches $\tau$ in $q$?}}}& \multicolumn{2}{c}{\textbf{$\hat s = s^*$ reliably }} \\
                                             &                                                      & \multicolumn{1}{c}{}                                              & \textbf{Bayes}    & \textbf{Neural}    \\ \hline
\multirow{2}{*}{$\BernMix(0.2, 0.7)$}        & $\Bern(0.7)\in\mathcal{M}_p$                                &  No, extreme counts                           & No                & No                                     \\
                                             & $\Bern(0.6)\notin\mathcal{M}_p$                             & {No, unintuitive optimal length}              & No                & No                                     \\\hline
\multirow{2}{*}{$\BetaBern(1, 1)$}           & $\Bern(\tau)\in\mathcal{M}_p$                               & {Yes if $p(\tau)$ uniform, $N, T$ large }     & Yes               & Yes?                                    \\
                                             & $\BernMix(\tau_1, \tau_2)\notin\mathcal{M}_p$               & {Yes, for mean bias}                          & Yes               & Yes?                                   \\ \hline
\end{tabular}
\end{table*}
\begin{figure*}[t] %
    \centering
    \includegraphics[width=0.45\textwidth]{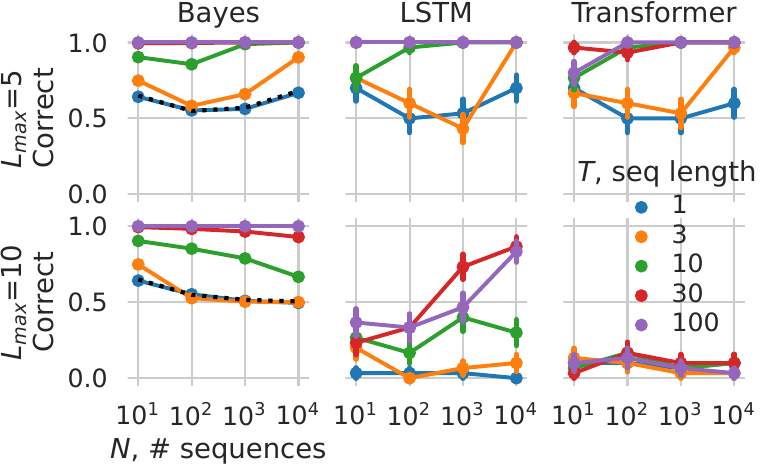}\hfill
    \includegraphics[width=0.54\textwidth]{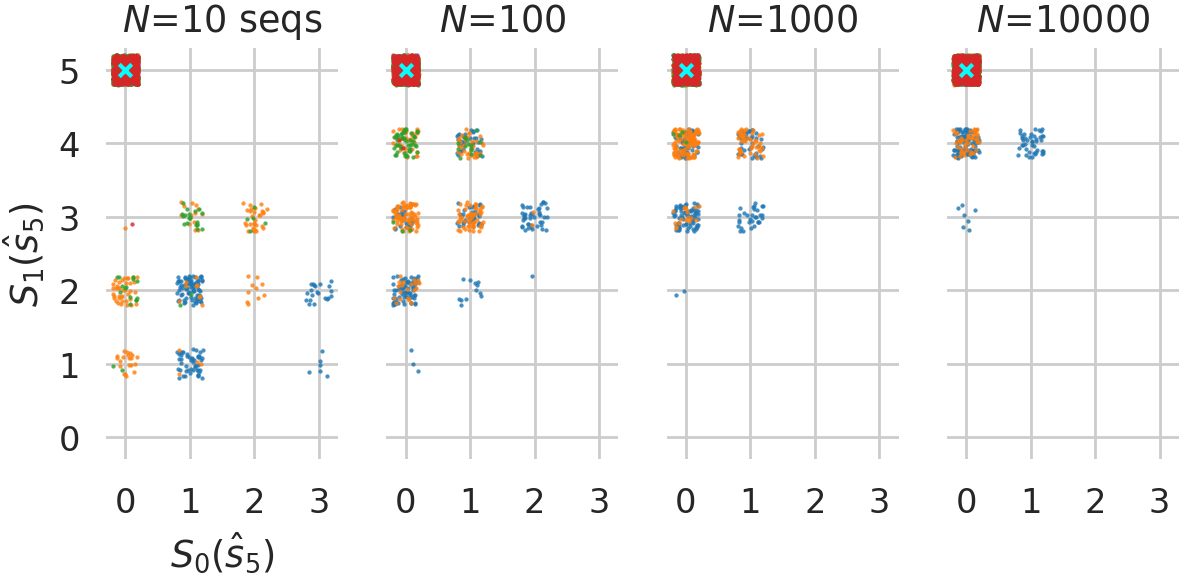}
    \caption{
    Results for a pretraining DG $p=\BernMix(0.2, 0.7)$ and a task DG $q=\Bern(0.7)$.
    Left, the proportion correct for Bayes predictor ($10^3$ seeds per data point)
    and two neural predictors (30 seeds per data point). Error bars show 1 SEM.
    The black dotted line is the theoretical value for $T=1$ (\Cref{sec:bernmix_bern_0.7_nonmonotonic}).
    More data does not always improve reliability. 
    Additional results are in \cref{sec:bernmix_bern_0.7_correct}.
    Right, empirically optimal $\hat{s}$ at $\Lmax=5$ for the Bayes predictor 
    for different values of $T$ (colors) and $N$ (panels); 
    100 repetitions per setting. The cyan cross  shows the correct all-\ONE $s^*_5$. The set of $\hat{s}$ moves towards $s^*$, but the proportion of $\hat{s}=s^*$ does not necessarily increase.
    }
    \label{fig:sensitivity}
    \vspace{-1em}
\end{figure*}

\section{RESULTS ON CIB-DGS}\label{sec:CIB-DG_results}
We take four pairs of pretraining and task CIB-DGs ($p$ and $q$), listed in \cref{tab:opt_prompt_summary}; the motivation is to cover as many prompt setups as possible, including discrete/continuous pretraining $\tau$, and IMD/OOMD tasks, while being able to easily visualize the optimal prompts.
For each pair, to find $s^*$, we sweep sequence length $T\in\{1,3,10,30,100\}$, and maximum prompt length $\Lmax\in\{5, 10, 15\}$; to find $\hat s$, we additionally sweep optimization datasets consisting of $N\in\{10^1,10^2,10^3,10^4\}$ sequences, and 7 predictor types (3 in main text).
For the Bayes predictor, we draw $\DqN$ $10^3$ times with different seeds; 
for each neural predictor type, we pretrain 30 instances to be near-optimal (\cref{fig:pretrain_kl} in \cref{sec:additional_results}) and find $\hat s$ for each on drawn $\DqN$. All neural predictors are trained using Jax/Haiku~\citep{jax2018github,deepmind2020jax,haiku2020github}, with parameters initialized by their default method, and optimized by Adam~\citep{kingma2015adam} with their default hyperparameters; see \cref{sec:compute} for detail.  

\begin{figure*}[t] %
    \centering
    \includegraphics[width=0.45\textwidth]{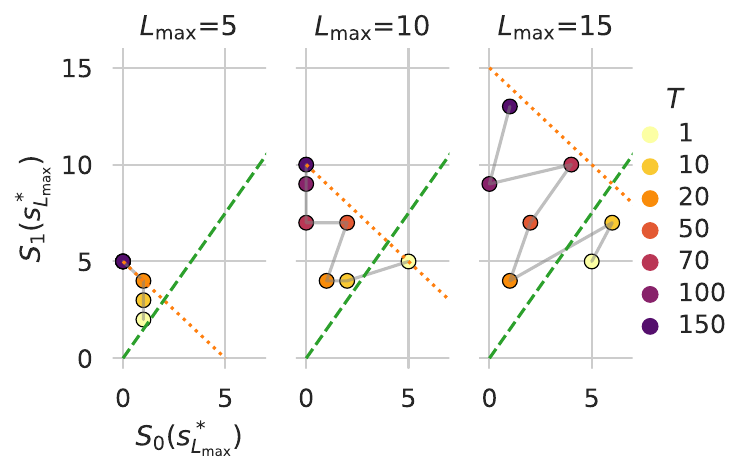}~~~
    \includegraphics[width=0.53\textwidth]{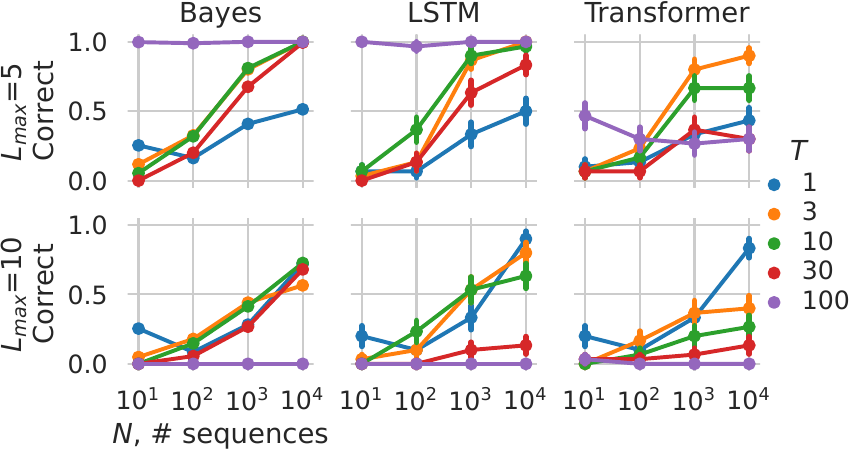}
    \vspace{-0.5em}
    \caption{Results for $p=\BernMix(0.2, 0.7)$ and $q=\Bern(0.6)$. 
    Left, each circle represents the \ZERO/\ONE counts of the theoretical $s^*$. 
    The orange dotted line indicates $\Lmax$.
    The green dashed line marks 60\% \ONEs. The $s^*$ jumps unpredictably before converging to all-\ONE.
    Right, the proportion correct of $\hat{s}=s^*$.
    They mostly increase except for $T=100$ when $\Lmax=10$.
    \Cref{fig:sensitivity_0.6_supp} shows additional results.
    }
    \label{fig:length_dependence}
    \vspace{-0.5em}
\end{figure*}
\subsection{Optimal Prompt is Atypical}\label{sec:sensitivity_to_sampling}
Consider a pretraining DG $p=\BernMix(0.2, 0.7)$ and task DG $q=\mathrm{Bern}(0.7)$ ($q\in\mathcal{M}_p$).
The theoretical $s^*$ is always a sequence of all \ONEs, since it causes the posterior $p_{\tau|x}$ to concentrate on 0.7 most rapidly. However, such a prompt is atypical to the task DG, and would be unintuitive without knowing $p$. Optimized natural language prompts can also be surprising for the given task %
~\citep[e.g.][]{daras2022discovering,he2024does,melamed2025demystifying};
which may be more effective at shifting the model's belief than more intuitive
prompts.

How likely can we find such a prompt empirically? As \cref{fig:sensitivity}(left) shows, the empirical $\hat s$ is mostly correct for longer sequences when $\Lmax=5$, but not at $\Lmax=10$ for the Transformer, even with a large $N$ and $T$. 
In addition, increasing sequence length $T$ can \emph{lower} the proportion correct, even when prompting the Bayes predictor; see \Cref{sec:bernmix_bern_0.7_nonmonotonic} for an explanation.
In addition, \Cref{sec:bernmix_bern_0.7_landscape} shows a flat loss ``landscape'': there are many suboptimal prompts with similar losses \eqref{eq:log_loss_L} as $s^*$. Thus, the randomness in sampling $\DqN$ can shift the optimum to a different prompt near $s^*$. On frontier models, prompt search typically uses a small batch size ($\approx 200$)~\citep{deng2022rlprompt,pryzant2023automatic,fernando2023promptbreeder,hao2024optimizing}, raising the question how reliable the optimized prompts are.

How can the proportion correct decrease with increasing $N$? \Cref{fig:sensitivity}(right) shows the distribution of empirical $\hat{s}$ for $\Lmax=5$. As $N$ increases, the \emph{support} of this distribution approaches $s^*$, but the \emph{probability} of matching $s^*$ may decrease (e.g., yellow dots). \Cref{sec:sensitivity_emp_dist} presents the distribution of $\hat s$ from all predictors, showing non-convergence to $s^*$, with inconsistent counts of \ZEROs and \ONEs across runs.
Note that the prompting strategy of using samples from the task $q\in\mathcal{M}_p$ is most likely suboptimal: sampling the all-\ONE sequence from the task DG is increasingly unlikely as $T$ increases.
Knowledge of the pretraining DG is crucial to come up with and make sense of $s^*$.

\begin{figure*}[t]
    \centering
    \includegraphics[height=0.21\textheight]{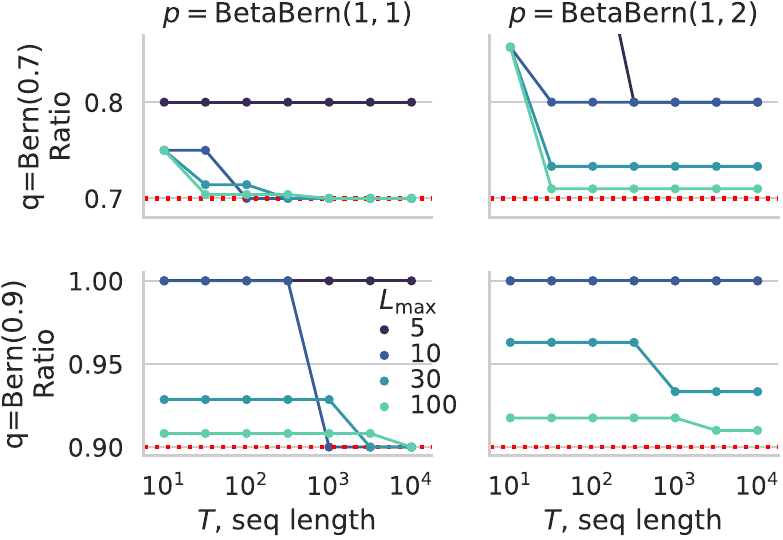}~~~~
    \includegraphics[height=0.21\textheight]{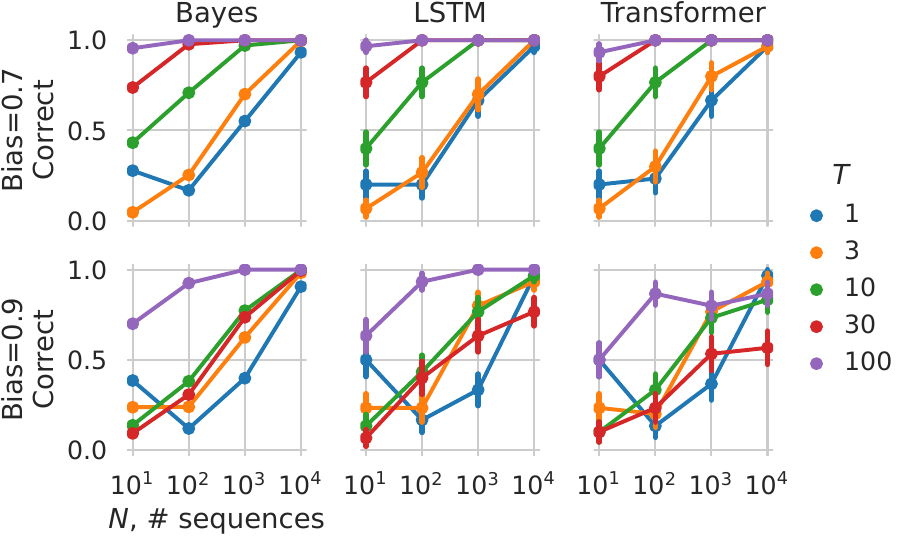}
    \vspace{-0.8em}
    \caption{
    for $p=\BetaBern(1, \beta)$ with $\beta\in\{1, 2\}$, and $q=\Bern(\tau)$ with $\tau\in\{0.7, 0.9\}$.
    Left, the ratio of \ONEs in the theoretical $s^*$. Red dotted line shows true bias of $q$. The ratio goes towards the task $\tau$ when $N$ and $T$ are large and $p_\tau$ uniform.
    Right, proportion correct ($\beta=1$) increases as $N$ increases. \Cref{fig:beta_categorical_supp} has more results.
    }
    \label{fig:beta_categorical}
    \vspace{-1em}
\end{figure*}

\subsection{Are Shorter Prompts Better?}\label{sec:ood_prompting}
In the previous example, the optimal prompt had maximal length $\Lmax$. Are longer prompts better? Let $p=\BernMix(0.2, 0.7)$ as before, and now $q=\Bern(0.6)$ ($q\notin\mathcal{M}_p$).
\Cref{fig:length_dependence}(left) shows that the ratio of \ONEs in the theoretical $s^*$ appears irrelevant to $\tau=0.6$ in $q$, so interpreting it given $q$ alone is futile. 
See \cref{sec:bernmix_bern_0.6_length} for an explanation.
More interestingly, at a given maximum length constraint $\Lmax$, the theoretical $s^*$ can shorten or lengthen with no obvious pattern as $T$ increases (\cref{fig:bernmix_bern_0.6_optimal_length_vs_Lmax}).
It can leave an \emph{impression} that the best prompts remain short even when longer prompts are tested, but, as we gradually increase $\Lmax$, $s^*_\Lmax$ never shortens and can suddenly jump to a longer one. 
These results suggest that previously reported higher efficacy of shorter natural language prompts ~\citep{bhargava2023s,renze2024benefits,kusano2024longer,wang2025towards,lester2021power} could be due to insufficient search at longer prompt lengths, which is inevitable given the much larger prompt space. 
\Cref{fig:length_dependence}(right) shows that, for $\Lmax=10$, the empirical $\hat s$ gets more unreliable as $T$ increases, regardless of the dataset size and the predictor, due to the flat loss ``landscape'' (\cref{sec:bernmix_bern_0.6_landscape}). 
The distribution of $\hat s$ does \emph{not} always concentrate around $s^*$, complicating the interpretation of empirically optimized prompts.

\subsection{Continuous Latent Factors}\label{sec:continuous_latent}
\vspace{-0.2cm}
We now consider a pretraining $p=\BetaBern(1, 1)$ (uniform over $[0, 1]$) and two tasks $q=\Bern(\tau)$ with $\tau\in\{0.7, 0.9\}$ ($q\in\mathcal{M}_p$). Intuitively, the theoretical $s^*_\Lmax$ should have roughly $\tau\Lmax$ \ONEs and $(1-\tau)\Lmax$ \ZEROs.
As \cref{fig:beta_categorical}(left) shows, this only holds for large enough $T$ and $\Lmax$.
The detailed mismatch for shorter $T$ can be explained if we know the pretraining $p$; see \cref{sec:betabern_bern_shorterT}.
In addition, \Cref{fig:beta_categorical}(right) shows a close match in the proportion correct between all predictors, in contrast to previous cases. Overall, increasing $T$ and $N$ helps identify the theoretical $s^*$ more reliably, consistent with the sharper loss ``landscape'' in \cref{sec:betabern_bern_landscape} compared to previous cases.
However, uniformity on $\tau$ is unlikely to hold for real languages that exhibit biases in multiple domains~\citep{SONG2023103277,kotek2023gender}.
Indeed, the ratio of \ONEs in $s^*$ deviates from the true bias in $q$ if the $p_\tau$ is not uniform, 
such as when $p=\BetaBern(1, 2)$; see \cref{fig:beta_categorical}(left) and \cref{sec:betabern_bern_supp}.
Methods that control for the bias~\citep[e.g.][]{gagne2023inner,shirafuji2024bias,delobelle2022fairdistillation} could improve interpretability and reliability of the prompts. 
In all previous examples, the task DG was a coin with fixed bias ($\Bern(\tau)$).
In \cref{sec:betabern_bernmix}, we present an example where $q=\BernMix(\tau_1, \tau_2)$ is a mixture, which requires switching between two values, rather than a continuum, of $\tau$.
The optimal prompt length varies depending on $q$, and would again be difficult to explain without knowing the pretraining DG.
As a side note, since $p_{\tau|x}$ is always single-mode during pretraining, prompting can only partially improve performance if $|\tau_1-\tau_2|$ is large. This leads to an idea of using a mixture of prompted predictors, each focusing on one $\tau$, as done in~\citep{choi-etal-2023-smop,dun2025sweeping}. 

\section{MORE COMPLEX BINARY SEQUENCES}
The summarizing \Cref{tab:opt_prompt_summary} suggests that optimal prompts may be more interpretable if $p_\tau$ is uniform and $q\in\mathcal{M}_p$. However, this is not always the case, as we show in \Cref{sec:switching} with non-i.i.d.\ DGs. Going beyond conditional independence, we constructed DGs with more complex structures in \Cref{sec:hierarchical,sec:topic_model} to simulate basic features of natural language. The optimal prompts are harder to understand, suitable for more experienced readers. Below, we apply our framework to an in-context learning scenario and real LLMs. 

\subsection{Bandit Decision-Maker}\label{sec:bandit}
We have seen that optimal prompts may not be typical. This is relevant to popular in-context (reinforcement) learning approaches, where one prompts a predictor with expert demonstrations on example problems in a problem class (e.g.\ some chess games) to induce expert behavior on new problems from the class (a new chess game)
\citep[e.g.,][]{luo2024context,xu2023expertprompting,WangPQLZWGGN00024,ruoss2024lmact,dai2024context,huang2022language,yao2023react,costarelli2024gamebench,mirchandani2023large,agarwal2024many,dai2024context,laskincontext}.
However, ensuring optimality for the LLM prompts is virtually impossible.
In our final experiment, we setup a well-controlled problem class suitable for binary sequence predictors as the agents, and prompt the binary agents (including a Bayes predictor) to attain maximal returns, rather than minimal log-losses used before.

The problem class is a two-arm Bernoulli bandit (\Cref{fig:bandit}).
In each episode, problems (games) are created by drawing the latent reward probabilities of the left and right arms ($\vL$ and $\vR$) i.i.d.\ from $\text{Uniform}([0, 1])$.
Then, at each step $t$, the agent chooses an action $a_t\in\{\text{L},\text{R}\}$ given past actions and rewards, and obtains a new reward $r_t\sim\Bernoulli(v_{a_t})$. The goal is to find a policy that maximizes the expected total return $\mathbb{E}[\sum_{i=1}^T r_i]$
for $T=300$, estimated over many episodes.
\begin{figure*}[t]
    \centering
    \includegraphics[width=0.98\textwidth]{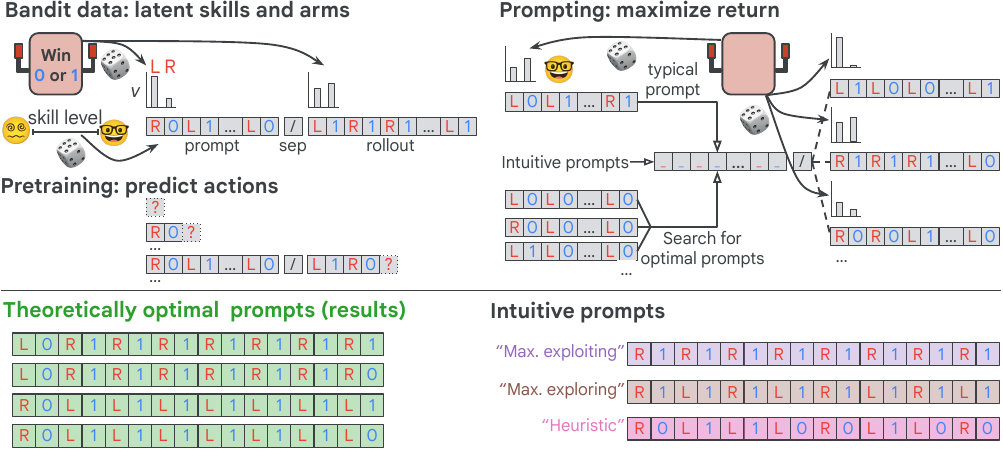}\\
    \vspace{1mm}
    \includegraphics[width=0.98\textwidth]{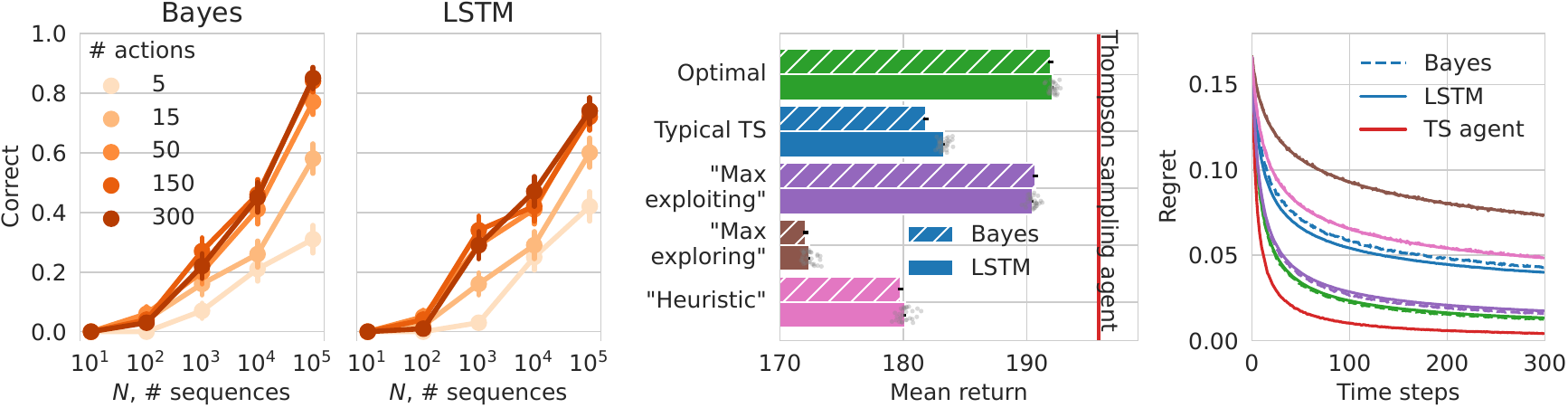}
    \caption{Upper, experiment design for the bandit task. 
    Lower, 4 equivalent theoretical $s^*$, 3 heuristic prompts, and performance metrics on the bottom panels: Left two panels, the proportion correct
    increases slowly for the Bayes and LSTM predictors; error bars show 1 SEM over 100 pretraining runs with different weight initializations. Right two panels, the mean return and mean instantaneous regret of different prompts. We use the $\hat s$ found on $N=10^5$ rollouts. Grey dots show 30 out of 100 runs.
    }
    \label{fig:bandit}
    \vspace{-0.8em}
\end{figure*}
\vspace{-1em}
\paragraph{Agent mixture.} We design a distribution of agents with different skill levels $\tau$ (hidden from predictors), simulating the diverse skills in the text corpora of LLMs.
At time $t$, each agent maintains the counts of rewarded $S_{b,1,t}$ and unrewarded $S_{b, 0,t}$ actions on each arm ($b\in\{\text{L},\text{R}\}$) up to time $t$, and takes an action $a_{t+1}=\argmax_{b\in\{\text{L},\text{R}\}}\{v_{b,t,\tau}\}$,
where 
$
   v_{b,t,\tau} \sim \textrm{Beta}(\tau S_{b,1,t}+1, \tau S_{b,0,t}+1) \text{~for~} b\in\{\text{L},\text{R}\}.
$
Here, $\tau\in[0,1]$ is the latent skill, 
with $\tau=0$ giving uniformly random actions and $\tau=1$ corresponding to the asymptotically optimal Thompson sampling (TS) agent
\citep{thompson1933likelihood,agrawal2012analysis}, see \cref{sec:bandit_details_skills} for details of the skill levels.

\vspace{-0.5em}
\paragraph{Pretraining.}
Given a trajectory predictor, the purpose of a prompt is to induce a \emph{skill} level, not the arm probabilities $v$'s, 
to solve new problems of the bandit class with unknown $v$'s.
Thus, we design each pretraining trajectory by concatenating a prompt segment and a rollout segment, separated by a special token (see \cref{fig:bandit} and \cref{alg:bandit_pretrain}). The prompt segment consists of 8 actions and rewards by an agent of a randomly sampled skill. The rollout segment consists of 300 actions and rewards with the same skill but $v$'s redrawn.
Actions in the prompt segment are thus informative about the skill only. 
Since actions and rewards are binary and interleaved, the sequence can be represented as a binary string.  Trajectories from this DG are used to pretrain neural predictors, with log-loss measured only on the action tokens. 
We also obtain a Bayes predictor (see \cref{sec:bandit_details_bayes}).
The predictor then acts stochastically according to the predicted probabilities at each action step.
Since the agent mixture contains the expert TS agent, prompting for maximal return can be regarded as IMD.  

\vspace{-0.5em}
\paragraph{Prompting methods.}
We prompt each predictor to maximize the expected total return in the rollout segment, from which we also estimate the instantaneous regret $\mathbb{E}[\max_{a}v_a - v_{a_t}]$ at each time step. 
We compare three prompting approaches.
First, we exhaustively search through all binary prompts on a predictor, running $N$ rollouts to estimate the return, giving $\hat{s}$.
Using a large $N$, we found four equivalent theoretically optimal prompts $s^*$ shown in \cref{fig:bandit}. Surprisingly, they show little exploration but strong exploitation to the more rewarding arm. We address this pattern in \cref{sec:bandit_details_optimal_prompts}, using knowledge of the pretraining distribution. 
Second, we prompt the model by demonstrations from the expert TS agent ($\tau=1$).
Finally, we handcraft several intuitive prompts: a ``maximally exploring'', a ``maximally exploiting'', and a ``heuristic'' prompt handcrafted by an author (\cref{fig:bandit}).
The latter two approaches mimic common methods of in-context learning on LLMs.

\vspace{-0.5em}
\paragraph{Results.}
\Cref{fig:bandit}(bottom left) shows the reliability (proportion correct) of the empirical $\hat s$ found on the Bayes and LSTM predictors. It increases very slowly as $N$ increases, reaching around $0.8$ when using as many as $10^5$ rollouts to estimate the expected return of each prompt.
\Cref{sec:bandit_interp} shows that these suboptimal prompts can lead to largely inconsistent interpretations in terms of a classical behavioral metric~\citep{bruner1957perceptual,nowak1993strategy}.
The performances of the three prompt types are in \cref{fig:bandit}(bottom). Typical prompts from the TS expert and the ``heuristic'' prompt both yield lower return than the theoretically and empirically optimal prompts, and are even worse than the ``max. exploiting'' prompt. The ``max. exploring'' prompt gives the lowest return. These results show that the common practice of prompting by expert demonstrations does not induce expert behavior as effectively as the optimal prompt, or even the ``maximally exploiting'', which is surprising given that the latter does not encourage exploration at all. 
Therefore, even using uniform priors over the latent and ensuring the task is IMD gives unintuitive prompts. 
The effectiveness of ``wrong'' examples has been reported~\citep{min2022rethinking,yang2024prompts} on LLMs, which could be due to subtle interactions between latent factors in the unknown pretraining distribution, as suggested in \cref{sec:bandit_details_optimal_prompts}.
\Cref{sec:bandit_typical_prompt} shows that the optimal prompt uses 1/4 of the length of expert demonstrations for the same expected return, demonstrating again its desirable high efficiency.

\subsection{Real LLMs}\label{sec:real_llms}

We adapt our framework above to study prompts on LLMs, using GPT-2~\citep{radford2019language}, Gemma-3~\citep{team2025gemma}, and Gemini 2.5 Pro~\citep{comanici2025gemini} and the IMDB reviews~\citep{imdb}.
Movie reviews may not exactly satisfy the conditional independence assumptions in our synthetic datasets: people do not always start writing a movie review with a statement or summary that reveals the sentiment. In addition, longer prompts can become less consistent with reviews for specific films or genres; for example, if a prompt mentions quality of animation, this would be inconsistent with non-animated film reviews, even if the sentiments are both positive. Nonetheless, IMDB dataset offers a handle to a single latent variable (sentiment) governing the review text, and has sufficient linguistic diversity compared to other synthetic languages dataset~\citep[e.g.,][]{xieexplanation,yuan2021synthbio}.

The Bayes predictor for movie reviews is unknown, and extrapolating our findings above to predict the LLM results is also hard; instead, our goal is to explore properties of optimized prompts. 
First, we finetune GPT-2 and Gemma-3 on the training set of IMDB, and ask Gemini 2.5 Pro to come up with a large number of prompts of variable lengths ($\approx \Lmax$) that can induce positive or negative (task $\tau$) movie reviews. As a sanity check, we verified that these prompts do modulate the log-losses of predicting reviews of either sentiment as intended. Then, we evaluate the conditional log-loss of the reviews with variable length cut-offs ($8\le T \le 512$) and dataset size ($250 \le N \le 25000$) given each of the prompts. This allows us to explore many basic properties of optimized prompts as before. The results are presented in \cref{sec:real_llms_details}. In summary, we found that the prompt sentiment is not always consistent with the intended sentiment of the reviews, and it also depends on the pretrained model. As discovered in binary sequences, the reliability of identifying the optimal prompt is very low at $N=250$, while $N\approx 1700$ is needed for a 50\% proportion correct. Finally, the optimal prompt length tends to be very short, as discovered before in the literature.

\section{CONCLUSIONS}

Through a large set of carefully designed experiments, we revealed fundamental patterns of optimal prompts in conceptually simple binary sequences.
Our well-controlled experiments provided ample examples of binary optimal prompts and visualizations, and revealed interesting fundamental properties of optimal prompts on LLMs. This work thus presented a plausible candidate theory that (partially) accounts for the surprising prompts seen on frontier models. 

Since the pretraining distribution is often unknown, interpreting optimal prompts in terms of the task alone is hard: these prompts may be atypical sequences under the task distribution, hard to find reliably given a finite dataset, and may vary with complex patterns depending on the prompt setup.
Further, demonstrations from the task, as often used for in-context learning, are less efficient at inducing the intended behavior than some unintuitive prompts.
These results are robust across a large number of predictors and their instances, including the idealized Bayes predictors. This indicates that some of the issues will persist at any scale, including that of frontier models, and thus contribute to previously observed difficulties in interacting with those models.
We provide a practical guidance in \cref{sec:practical_guidance}, elaborate on how our results relate to previous findings on prompting LLMs in \cref{sec:related_findings}, and discuss detailed limitations and potential future work in \cref{sec:limitations}.

\subsubsection*{Acknowledgements}
We thank the following collaborators for useful discussions and valuable feedback: Laurent Orseau, Grégoire Delétang, Peter Dayan, Noémi Éltető, and Mark Rowland.

\printbibliography

\section*{Checklist}

\begin{enumerate}

  \item For all models and algorithms presented, check if you include:
  \begin{enumerate}
    \item A clear description of the mathematical setting, assumptions, algorithm, and/or model. [Yes/No/Not Applicable]
    \item An analysis of the properties and complexity (time, space, sample size) of any algorithm. [Yes/No/Not Applicable]
    \item (Optional) Anonymized source code, with specification of all dependencies, including external libraries. [Yes/No/Not Applicable]
  \end{enumerate}

  \item For any theoretical claim, check if you include:
  \begin{enumerate}
    \item Statements of the full set of assumptions of all theoretical results. [Yes]. \cref{thm:info_formal}
    \item Complete proofs of all theoretical results. [Yes]
    \item Clear explanations of any assumptions. [Yes]     
  \end{enumerate}

  \item For all figures and tables that present empirical results, check if you include:
  \begin{enumerate}
    \item The code, data, and instructions needed to reproduce the main experimental results (either in the supplemental material or as a URL). [Yes, the experiments are clearly described; also see \cref{alg:bandit_pretrain}. We will not provide code.] 
    \item All the training details (e.g., data splits, hyperparameters, how they were chosen). [Yes] \Cref{sec:real_llms_details}
    \item A clear definition of the specific measure or statistics and error bars (e.g., with respect to the random seed after running experiments multiple times). [Yes] In all figures we defined the error bars. Also in \cref{sec:compute}.
    \item A description of the computing infrastructure used. (e.g., type of GPUs, internal cluster, or cloud provider). [Yes] \Cref{sec:compute}
  \end{enumerate}

  \item If you are using existing assets (e.g., code, data, models) or curating/releasing new assets, check if you include:
  \begin{enumerate}
    \item Citations of the creator If your work uses existing assets. [Yes]
    \item The license information of the assets, if applicable. [Yes]
    \item New assets either in the supplemental material or as a URL, if applicable. [Not Applicable]
    \item Information about consent from data providers/curators. [Not Applicable]
    \item Discussion of sensible content if applicable, e.g., personally identifiable information or offensive content. [Not Applicable]
  \end{enumerate}

  \item If you used crowdsourcing or conducted research with human subjects, check if you include:
  \begin{enumerate}
    \item The full text of instructions given to participants and screenshots. [Not Applicable]
    \item Descriptions of potential participant risks, with links to Institutional Review Board (IRB) approvals if applicable. [Not Applicable]
    \item The estimated hourly wage paid to participants and the total amount spent on participant compensation. [Not Applicable]
  \end{enumerate}

\end{enumerate}

\clearpage
\appendix
\thispagestyle{empty}

\onecolumn
\aistatstitle{\thetitle:\\  Supplementary Materials}

\part{} %
\parttoc %

\section{METHOD DETAILS}

\subsection{Near-Bayes-Optimal Prediction Can Lead to ``Super-Bayes'' Performance on Certain Tasks.}\label{sec:super_bayes}
Previous work has shown that well-trained neural predictors behave almost identically to the Bayes predictor in terms of their predictions given data from the pretraining distribution $p$~\citep[e.g.][]{mikulik2020meta,genewein2023memory,grau2024learning}. However, these results do not imply a bound on the neural predictor by the same Bayes predictor given an intended task. Specifically, even though we have the optimality bound from positivity of the KL divergence:
$$
    \mathbb{E}_{s\sim p} [\log p_\theta(s)] \le \mathbb{E}_{s\sim p} [\log p(s)] \,,
$$
this does \emph{not} imply that the log-loss under a potentially different task distribution $q$ preserve the ordering:
$$
    \mathbb{E}_{s\sim q} [\log p_\theta(s)] \not\le \mathbb{E}_{s\sim q} [\log p(s)] \,.
$$
This holds even if $q\in\mathcal{M}_p$. This happens when $q$ favors specific regions of the sequence space where the neural predictor $p_\theta$ is accidentally better than the Bayes predictor $\pbayes$ that is optimal for the pretraining $p$ but not for the task $q$.

\subsection{Training Neural Predictors}\label{sec:neural_predictor_detail}

The neural predictors share the same overall structure, involving the following stages:
\begin{enumerate}
    \itemsep0em
    \item Map each binary tokens $x_t\in\{0,1\}$ to embeddings $e_t\in\mathbb{R}^h$ through $e_t = W_\text{emb} x_t$ where $W_\text{emb}\in\mathbb{R}^{h\times 2}$, and $h\in\mathbb{N}^+$ is the hidden size.
    \item Sequentially map $h_{1:T}$ through some neural architecture, called the \emph{torso}, such as an LSTM and a multi-head attention, to obtain hidden activations $u_t\in\mathbb{R}^h$ for each $t$.
    \item For each $t$, map $u_t$ through the fully connected MLP to $v_t\in\mathbb{R}^h$ that is usually found after the attention layer in a transformer block~\citep{vaswani2017attention}.
    \item For each $t$, map $v_t$ to output logits through a linear map.
\end{enumerate}
There is also a residual connection from step 2 to 3 and from 3 to 4.
The different neural architectures differ only by the torso. This maintains a flexible enough architecture for different tasks while controlling for the model complexity between different architectures.

For the torso, we use the following variants of recurrent networks and transformers:
\begin{enumerate}
    \itemsep0em
    \item Vanilla recurrent neural networks~\citep{elman1990finding};
    \item Long-short term memory (LSTM)~\citep{hochreiter1997long}, reported in main text;
    \item sLSTM~\citep{beck2024xlstm}
    \item Softmax-attention transformer (Transformer)~\citep{vaswani2017attention}, reported in main text;
    \item Linear transformer~\citep{katharopoulos2020transformers}
    \item Another variant of Linear transformer we refer to as Inner-product transformer (IP transformer)~\citep{li2020linear,shen2021efficient}
\end{enumerate}

We found that step 3 above is crucial for transformer architectures to perform some of the tasks, although this was not essential for LSTMs to perform well. We did not use normalization or dropout layers for simplicity. These hyperparameters for all networks and pretraining DGs are listed in \cref{tab:hypers}.
During pretraining, we made sure that the sequence length was long enough to avoid bad generalization over unseen sequence lengths~\citep{deletang2022neural,anil2022exploring} in downstream tasks.

\clearpage

\begin{table}
\centering
\footnotesize
\caption{Hyperparameters for neural predictors. $^*$For vanilla RNN, we train 5 million steps. \label{tab:hypers}}
\begin{tabular}{l|ccc|c|ccc}
               & \multicolumn{3}{c|}{\textbf{Common parameters}} & \textbf{Recurrent}              & \multicolumn{3}{c}{\textbf{Transformer}}        \\
\textbf{Pretraining DG} & hidden size $h$             & \# steps     &   $T$    & learning rate     & \# head & \# layer & learning rate     \\
\hline
BernMix        & 128             & 300k      &  180    & $1\times 10^{-4}$         & 1       & 1        & $1\times 10^{-4}$         \\
BetaBern       & 128             & 500k$^*$  &  180    & $1\times 10^{-4}$         & 1       & 1        & $1\times 10^{-4}$         \\
Switching      & 128             & 3 m        &  180    & $3\times 10^{-5}$         & 8       & 1        & $3\times 10^{-5}$ \\
Bandit         & 256             & 1 m        &  630    & $1\times 10^{-4}$         & 8       & 2        & $3\times 10^{-5}$
\end{tabular}
\end{table}

\subsection{Prompt Search Method}\label{sec:prompt_search}

On the CIB-DG experiments in \cref{sec:CIB-DG_methods}, both the prompt $s$ and the sequence(-to-predict) $x$ are permutation invariant under the Bayes predictor $\pbayes$, which means that the counts of \ZEROs and \ONEs in \cref{eq:counts} form sufficient summary of the prompt and the sequence under $\pbayes$. This drastically reduces the search space of the theoretically optimal prompt on the Bayes predictor. Also, by using the equivalent binomial distribution defined over the counts, the summation in \cref{eq:log_loss_L} can also be reduced for the Bayes predictor. This makes searching for $s^*(\pbayes, \cdot)$ and $\hat s(\pbayes, \cdot, \cdot)$ very efficient. However, these savings do not apply to neural predictors, so we searched through all possible prompts and sequences.

The DGs in bandit task (\cref{sec:bandit}) and the switching task (\cref{sec:switching}) have no nontrivial symmetry in the space of all prompts and all sequences/rollouts, so we exhaustively searched through all possible prompts and sequences/rollouts to find $s^*$. This is also done on finite datasets, we also searched through all possible prompts on the given task dataset to find $\hat s$.

\subsection{Information-Theoretic Justification}\label{sec:info}
We show that prompt optimization over the objective in \cref{eq:log_loss_L} is equivalent to maximizing an information-theoretic objective relating the prompt and the sequence-to-predict for the IMD case $q\in\mathbb{M}_p$.
In this subsection only, to reduce notational clutter and avoid specifying lengths $T$ and $L$, we temporarily define $\vx$ as the sequence and $\vs$ as the prompt. The latent variable $\tau$ is not restricted to a scalar. We \emph{do} still assume that $\vx$ is conditionally independent of $\vs$ given $\tau$.
As before, the pretraining DG is then $p(\vx):=\int p_\tau(\tau)p_{x|\tau}(\vx|\tau) \ud \tau$.

Denote the pretrained predictor by $p_{(\cdot)}(\vx|\vs)$, which can be the Bayes predictor $\pbayes(\vx|\vs)$ or a pretrained neural predictor $p_\theta(\vx|\vs)$.
In addition, define the \emph{prompting strategy} as $\nu(\vs|\tau)$ mapping from a given $\tau$ to a distribution over the prompt. This is a strategy because this maps our intended task $\tau$ to a prompt sequence that is passed to the sequence predictor. We also allow this strategy to be stochastic. This strategy leads to a prompt policy-augmented joint distribution over the prompt and the sequence: 
\begin{equation}\label{eq:joint_nu_vec}
p^\nu(\vs, \vx):= \int p_\tau(\tau)\nu(\vs|\tau)p_{x|\tau}(\vx|\tau)\ud \tau\,.
\end{equation}
The prompt distribution $\nu$ need not be the same as $p_{x|\tau}$, which would suggest using samples from 
$p_{x|\tau}$ as the prompt. By construction, the augmentation leaves the marginal over $\vx$ intact in the absence of any prompt: $p^\nu(\vx)=p(\vx)$.

What would be a best prompting strategy? First, there should be high mutual information between $\vx$ and $\vs$ under $p^\nu$, so that varying $\vs$ effectively manipulates the distribution over $\vx$. 
However, this condition alone is not sufficient.
Consider the strategy that maps some partition over the space of the task variable $\tau$ to distinct sequences of $\vs$ unrelated to the pretraining distribution.
This prompt strategy can achieve a high mutual information, but $\vs$ completely ignores any statistical regularities in $p(\vx)$ baked into the predictor $p_{(\cdot)}$. Thus, this strategy is unlikely to steer the predictor $p_{(\cdot)}$ in the same way as it conditions the pretraining DG (through the task latent $\tau$).

To fix this, the best prompting strategy must ensure that the prompt must condition the predictor in a similar way to how it conditions the pretraining DG, i.e.\ $p_{(\cdot)}(\vx|\vs)$ must be close to $p(\vx|\vs)$, or aligned with the predictor. Combining these ideas together we arrive at the following objective.
\begin{definition}
The predictor-aligned mutual information (PAMI) is defined as
\begin{equation}\label{eq:mami}
\MAMI(\vs, \vx):= \MI_{p^\nu}(\vs; \vx) - \mathbb{E}_{\vs\sim p^\nu}[\KL[p^\nu(\vx|\vs)\|p_{(\cdot)}(\vx|\vs)]]\,.
\end{equation}
\end{definition}
This objective trades off the specification of $\vx$ through $\vs$ under $p^\nu$ and the alignment between conditioning on $p^\nu$ and on $p_{(\cdot)}$. Though heuristically defined, the optimal strategy under PAMI is one that optimizes the predictive log-likelihood under all specified (IMD) tasks.

\begin{proposition}\label{thm:info_formal}
The deterministic prompt distribution $\nu(\vs|\tau)=\delta_{\vs^*(\tau)}$ centered at 
$$
\vs^*(\tau):=\argmax_\vs \sum_\vx p(\vx|\tau)\log p_{(\cdot)}(\vx|\vs)
$$
for all $\tau$ maximizes $\MAMI(\vs;\vx)$.
\end{proposition}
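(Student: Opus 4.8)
The plan is to expand the objective $\MAMI(\vs;\vx)$ using the rewritten form in \cref{eq:mami_mi_form} and observe that maximizing over the prompting strategy $\nu$ decouples across latent values $\tau$. Starting from
\[
\MAMI(\vs,\vx) = H[p(\vx)] + \sum_{\vs,\vx} p^\nu(\vs,\vx)\log p(\vx|\vs),
\]
the first term $H[p(\vx)]$ does not depend on $\nu$, so I only need to maximize the second term. Substituting the definition \cref{eq:joint_nu_vec} of $p^\nu(\vs,\vx) = \int p_\tau(\tau)\nu(\vs|\tau)P(\vx|\tau)\ud\tau$, the second term becomes
\[
\int p_\tau(\tau)\left[\sum_\vs \nu(\vs|\tau)\sum_\vx P(\vx|\tau)\log p(\vx|\vs)\right]\ud\tau .
\]
Here I should be careful that the paper writes $\bar p(\vx|\vs)$ inside the $\argmax$ defining $\vs^*(\tau)$, whereas the simplified MAMI expression has $p(\vx|\vs)$ (the Bayes-conditioned distribution) — the cancellation in \cref{eq:mami_mi_form} replaced $\bar p$ by $p$. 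So I would either assume $\bar p = p$ (the Bayes predictor case, which is the natural reading) or note that the proposition's $\argmax$ should be read with $p(\vx|\vs)$; I'll flag this and proceed with $p(\vx|\vs)$, i.e. $\vs^*(\tau) := \argmax_\vs \sum_\vx p(\vx|\tau)\log p(\vx|\vs)$, which is consistent with the derived form.

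The key step is then a pointwise-in-$\tau$ optimization: for each fixed $\tau$, the inner bracket $\sum_\vs \nu(\vs|\tau)\, g(\vs,\tau)$ with $g(\vs,\tau) := \sum_\vx P(\vx|\tau)\log p(\vx|\vs)$ is a convex combination (average over $\vs$) of the values $g(\vs,\tau)$, since $\nu(\cdot|\tau)$ is a probability distribution. Any average of a function is at most its maximum, so the bracket is maximized by placing all mass on an $\vs$ achieving $\max_\vs g(\vs,\tau)$ — that is, $\nu(\vs|\tau) = \delta_{\vs^*(\tau)}$. Because $p_\tau(\tau)\ge 0$, choosing $\nu$ to maximize the bracket simultaneously for (almost) every $\tau$ maximizes the whole integral; there is no coupling across different $\tau$, so the pointwise optimum is a global optimum. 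This establishes that $\nu(\vs|\tau)=\delta_{\vs^*(\tau)}$ attains the supremum of $\MAMI$.

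The main obstacle is mostly bookkeeping rather than mathematical depth: making sure the $\argmax$ is well-defined (the prompt space $\mathcal{A}^L$ is finite, so the max is attained, though possibly non-uniquely — any measurable selection works), handling the $\bar p$ versus $p$ discrepancy cleanly, and being careful that "for all $\tau$" in the statement really means "for $p_\tau$-almost every $\tau$" since the value of $\nu$ on a $p_\tau$-null set is irrelevant. If the paper intends $\bar p \ne p$ (a genuine neural predictor), then the KL term in \cref{eq:mami} does not fully cancel and one must recheck the algebra; in that case the cleanest route is to redo the cancellation keeping $\bar p$ explicit, whereupon the relevant quantity to maximize pointwise is exactly $\sum_\vx p(\vx|\tau)\log\bar p(\vx|\vs)$, matching the proposition's $\argmax$ verbatim, and the same convex-combination argument closes it. I would present the proof in that generality so the statement reads exactly as written.
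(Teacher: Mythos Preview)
Your proposal is correct and follows essentially the same approach as the paper's proof: drop the $\nu$-independent entropy term, expand the remaining term via the joint $p^\nu$, and maximize pointwise in $\tau$ by putting all mass of $\nu(\cdot|\tau)$ on the argmax. You are also right to flag the $\bar p$ versus $p$ discrepancy in \cref{eq:mami_mi_form}---the paper's own proof silently writes $\bar p(\vx|\vs)$ in the expanded integral, consistent with the proposition as stated, so your ``general'' route keeping $\bar p$ explicit is exactly what the paper does.
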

\begin{proof}
PAMI can be rewritten into a form similar to the conventional mutual information:
\begin{align*}
\MAMI(\vs, \vx) &=\sum_{\vs, \vx} p^\nu(\vs,\vx)\log\frac{p^\nu(\vx|\vs)}{p^\nu(\vx)} - \sum_{\vs,\vx}p^\nu(\vs,\vx)\log\frac{p^\nu(\vx|\vs)}{p_{(\cdot)}(\vx|\vs)} =\sum_{\vs, \vx} p^\nu(\vs,\vx)\log\frac{p_{(\cdot)}(\vx|\vs)}{p^\nu(\vx)} \\
&=H[p(\vx)] + \sum_{\vs, \vx}  p^\nu(\vs,\vx)\log p_{(\cdot)}(\vx|\vs).\numberthis\label{eq:mami_mi_form}
\end{align*}
The first term in \cref{eq:mami_mi_form} is independent of $\nu$, so we only need to show that $\delta_{\vx^*(\tau)}$ maximizes the second term. This term expands to
$$
\int p_\tau(\tau)\sum_\vs \nu(\vs|\tau)\sum_\vx p_{x|\tau}(\vx|\tau) \log p_{(\cdot)}(\vx|\vs) \ud \tau\,.
$$
The Dirac delta measure $\delta_{\vx^*(\tau)}$ assigns all mass to $\vs^*(\tau)$ that maximizes $\sum_\vx p_{x|\tau}(\vx|\tau)\log p_{(\cdot)}(\vx|\vs)$ for each given $\tau$. This holds for all $\tau$, and thus $\delta_{\vs^*(\tau)}$ maximizes the second term of \cref{eq:mami_mi_form}.
\end{proof}

Now it may be obvious that PAMI is simply a re-arrangement of the log-likelihood,
the decomposition of prompting into Informativeness (MI) and Alignment (KL) perfectly encapsulates the reason by prompting by intuition usually fails: humans tend to only optimize for the mutual information, completely ignoring the KL penalty imposed by the model's unknown pretraining distribution. An optimal prompt might look like gibberish to a human, but it perfectly hacks the pretraining priors to reduce that KL penalty.

\subsection{Optimal Prompt and Criterion for an Empirically Optimal Prompt to be Correct}\label{sec:proportion_correct}
For each prompt setup in our experiment, we found multiple theoretical $s^*$'s that give the same expected log-loss \eqref{eq:log_loss_L}.
For CIB-DGs, we found empirically that the \emph{counts} in the theoretical $s^*$ are unique.
As such, an empirically optimized prompt for CIB-DG experiments on neural predictors is deemed correct if it has the same counts (\cref{eq:counts}) as the theoretically optimal prompt. 
For other DGs (bandit and switching DGs), correctness requires an exact match between $\hat s$ and any one of the theoretical $s^*$'s.

To estimate the proportion correct, we trained a large number of networks with different random seeds, found the empirically optimal prompt $\hat s$ for each network using $\DqN$ drawn with another different seed, and then calculated the empirical ratio of correct $\hat s$'s out of all prompts found on all networks.

\subsubsection{Limitations.}
This definition of optimal prompt uses the Bayes optimal predictor, not the trained neural predictor. In practice, neural predictors introduce further complications when used as a reference model to define optimality, because they can vary depending on many pretraining hyper-parameters (e.g.,\ neural architecture, learning rate, dataset size, any post-training, etc.). As such, given a specific neural predictor $p_\theta$, the optimal prompt for a task DG (evaluated under infinitely large dataset) $s^*_{1:L}(p_\theta,q)$ (similarly defined in \eqref{eq:optimal_prompt_L}) may differ from the optimal prompt $s^*_{1:L}(\pbayes,q)$ under the Bayes predictor, and also from other neural predictor instances.

Another consequence of using a model-dependent optimality is that the optimal log-loss under a neural predictor can be lower than that of the Bayes predictor
$$
\max_s \mathbb{E}_{q(x)}[\log p_\theta(x|s)] > \max_s \mathbb{E}_{q(x)}[\log \pbayes(x|s)].
$$
This notion of practical optimality thus depends on the network instance, with nuances beyond the scope of the Bayesian meta-learning perspective in this work. Future work that takes a more architectural perspective should investigate this important issue further.

\subsection{The Loss ``Landscape'' of Bayes Predictor}\label{sec:loss_landscape}
Why is it difficult to identify the optimal prompt with a finite task dataset? We hypothesize that this is because there are many suboptimal prompts that are only slightly worse compared to the optimal one in terms of the expected log-loss \eqref{eq:log_loss_L}. To test this, we take the Bayes predictor for each prompt setup, compute expected log-loss given all possible prompts with length $L\le\Lmax$. We also subtract this log-loss with the best possible log-loss, giving the Kullback--Leibler divergence
\begin{equation}\label{eq:kl_divergence}
\text{KL}[q(\cdot)\| p(\cdot|s_{1:L})] := \mathcal{L}(p_{(\cdot)}, q, s_{1:L}) - \mathcal{L}(q, q, s_{1:L}).
\end{equation}
If many prompts yield KL divergences close to the optimal KL divergence (under $s^*$), then it is less likely that the order of these prompts is still preserved when evaluated under a finite dataset, and even less so on approximate neural predictors. 

Since the prompts are discrete with no obvious order, we show the KL divergences of all prompts, sorted in increasing order. Each prompt is then associated with a rank. To see if the optimal prompt has a distinctively smaller KL divergence than other prompts, we plot the KL divergences against the rank of the prompt. 
For CIB-DGs, the prompts expressed in the counts $(S_0(s_{1:L}), S_1(s_{1:L}))$ are ordered, so we show the prompt rank by their counts, and also plot the loss ``landscape'': the KL divergence as the color of each dot representing the counts. We show these results for each of the four CIB-DG experiments below.

\textbf{Limitations} The ``loss-landscape'' is a loosely defined term to describe the visualization method for understanding the reliability of finding the optimal prompt. Importantly, it is a \emph{post hoc} technique that cannot be used to \emph{predict} the reliability given a predictor and a task DG, before running any evaluations. The ability to predict is more challenging due to the nonlinear log-loss function and discrete  domain. For example, the problem of integer  quadratic programming is NP-complete~\citep{vavasis1990quadratic}, let alone predicting its reliability under a noisy dataset. 

\clearpage
\section{ADDITIONAL EXPERIMENTAL RESULTS}\label{sec:additional_results}

For each of pretraining CIB-DGs and the random switching DG, we trained 30 networks with different random seeds for the weight initialization and minibatch sampling.
For the bandit DG, we trained 100 networks. Below, we report, for each pretraining DG, the estimated KL divergence between $p_x$ and $p_\theta$ using samples from $p_x$:
\begin{equation}\label{eq:pretrain_kl}
\frac{1}{N}\sum_{i=1}^N\sum_{t=1}^T \log \left(\frac{\pbayes(x_t|x_{1:t-1})}{p_\theta(x_t|x_{1:t-1})} \right).
\end{equation}
The training sequence length and minibatch size for each pretraining DG $p$ are as shown in \cref{tab:hypers}. In most cases, the network achieved near-zero KL divergence, consistent with previous findings~\citep{mikulik2020meta,genewein2023memory,grau2024learning}.
Under the bandit DG, we use an approximate Bayes predictor described in \cref{sec:bandit_details_bayes}. We see that the LSTM and Transformer outperform this predictor by a small but statistically significant amount.

\begin{figure}[ht]
    \centering
    \includegraphics[height=0.163\textheight]{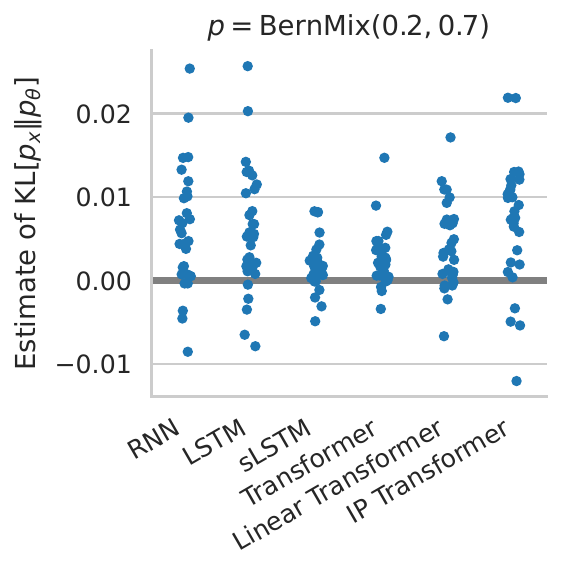}\hfill
    \includegraphics[height=0.16\textheight]{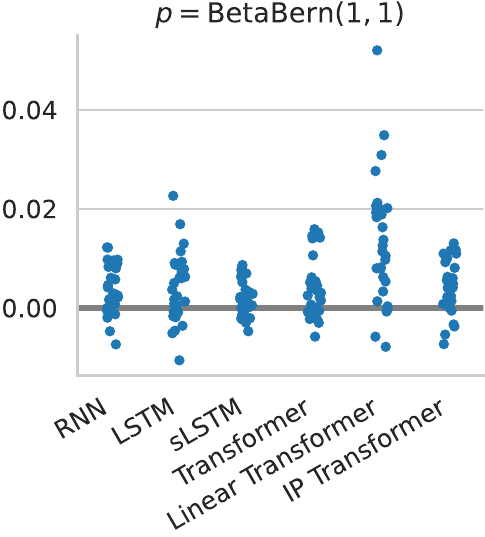}\hfill
    \includegraphics[height=0.16\textheight]{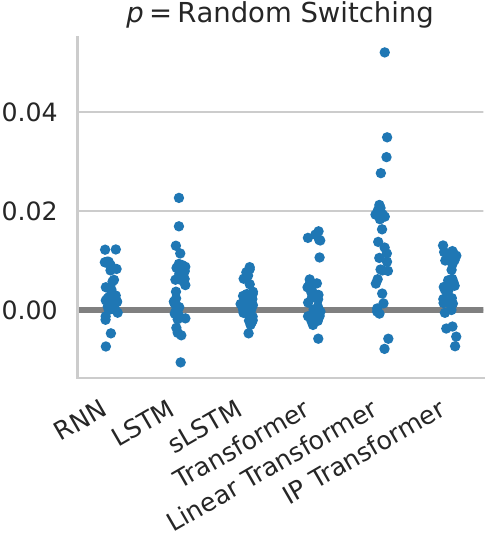}\hfill
    \includegraphics[height=0.16\textheight]{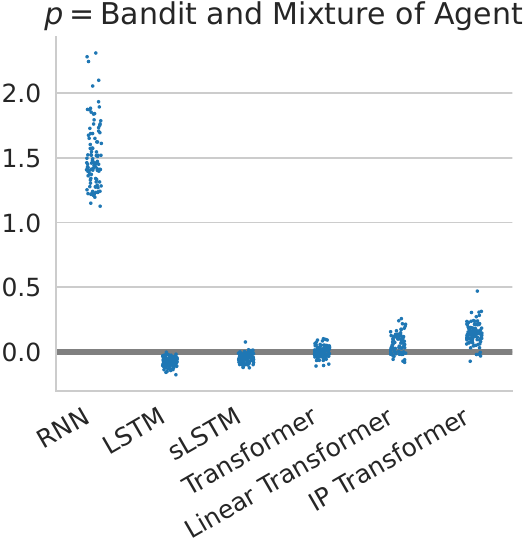}
    \caption{Final pretraining loss under various DGs $p$ and neural predictors. Each dot is an Monte Carlo estimate of the KL divergence through \cref{eq:pretrain_kl} in the last training step, summed (not averaged) over entire training sequences (180 time steps for CIB-DGs and 630 for Bandit experiments; see \cref{tab:hypers}) , then averaged over a random training minibatch.    
    }
    \label{fig:pretrain_kl}
\end{figure}

\subsection{Computational Scale and Resources}\label{sec:compute}

All experiments were implemented using JAX (Apache 2.0)~\citep{jax2018github,deepmind2020jax}, NumPy~\citep[modified BSD,][]{harris2020array} and Haiku~\citep[Apache 2.0,][]{haiku2020github}.
We used in-house computational infrastructure comprising P100 and V100 NVIDIA GPUs, TPUv2, and TPUv3.
In total, we pretrained 3 (2 CIB-DGs + random switching) $\times$ 6 (architectures) $\times$ 30 (seeds) + 1 (Bandit DG) $\times$ 6 (architectures) $\times$ 100 (seeds) = 1140 neural predictors. Each pretraining run takes from several hours to 1 week, with the longest training being on RNN networks to get a low log-loss.

\paragraph{Searching for the theoretically optimal $s^*$.} This required minimal ($<10$ seconds) on the Bayes predictors of CIB-DGs, as we only need to enumerate over the counts of \ZEROs and \ONEs. For the switching task, we searched through all possible prompt sequences of up to length $\Lmax=15$, and evaluated the expectation \cref{eq:log_loss_L} over all possible sequences of length $T\in\{10, 30\}$. For $T=30$, each run took less than 3 days. For the bandit task, we estimated the log-loss using $10^5$ rollouts for all prompts of length 16, and confirmed the best out of the top 20 prompts using 10 million rollouts.

\paragraph{Searching for the empirically optimal prompt.}
For Bayes predictors, on CIB-DG experiments, we searched for the best prompt 2 (pretraining DGs) $\times$ 2 (task DGs) $\times$ 3 ($\Lmax\in\{5, 10, 15\}$) $\times$ 5 ($T\in\{1,3,10,30,100\}$) $\times$ 4 ($N\in\{10, 10^2, 10^3,10^4\}$) $\times$ 1000 (seeds for $\DqN$) $=240000$ times. These are carried out fast over the counts, so we do not count them as a significant computation. 
On the switching task, we searched 4 (task DGs) $\times$ 1 ($\Lmax\in\{15\}$) $\times$ 2 ($T\in\{10,30\}$) $\times$ 4 ($N\in\{10, 10^2, 10^3,10^4\}$) $\times$ 250 (seeds for $\DqN$) $=8000$ times over binary sequences. 
On the bandit task, we
searched 5 ($T\in\{1,3,10,30,100\}$) $\times$ 5 ($N\in\{10, 10^2, 10^3,10^4,10^5\}$) $\times$ 100 (seeds for $\DqN$) $=2500$ times. Each of these took less than a day.

For neural predictors, on CIB-DG experiments, this amounts to 360 (model instances) $\times $ 3 ($\Lmax\in\{5, 10, 15\}$) $\times$ 5 ($T\in\{1,3,10,30,100\}$) $\times$ 4 ($N\in\{10, 10^2, 10^3,10^4\}$)$= 21600$ exhaustive searches on binary sequences (not counts), using a different seed to draw $\DqN$ for each network instance. On the switching experiment, this is 
180 (model instances) $\times $ 1 ($\Lmax\in\{15\}$) $\times$ 2 ($T\in\{10,30\}$) $\times$ 4 ($N\in\{10, 10^2, 10^3,10^4\}$)$= 1440$ exhaustive searches on binary sequences. Each of these runs took less than 3 days.
On the bandit tasks, this is 600 (model instances) $\times$ 1 ($\Lmax\in\{8\}$) $\times$ 5 ($T\in\{5,15,50,150,300\}$) $\times$ 5 ($N\in\{10, 10^2, 10^3,10^4,10^5\}$)$= 15000$ exhaustive searches on binary sequences.
Thus, our results are derived from $>48000$ optimal prompts sequences, all of which are numerically guaranteed to be optimal under the prompt setup. For RNNs and transformers that can be cast as RNNs, this took less than a day. For the softmax transformer, without using any optimized attention computation, each run took around 2 weeks.
We estimate the compute used during the entire project to be around 30 times the compute used to generate the final results.

\subsection{Pretraining on \texorpdfstring{$\BernMix(0.2, 0.7)$}{Bern(0.2, 0.7)}, Prompting towards \texorpdfstring{$\Bern(0.7)$}{Bern(0.7)} (IMD)}\label{sec:bernmix_bern_0.7_supp}

\subsubsection{Proportion Correct}\label{sec:bernmix_bern_0.7_correct}

\Cref{fig:sensitivity_supp} shows the proportion correct results for the task DG $q=\Bern(0.7)$, so $q\in\mathcal{M}_p$. The recurrent networks show similar behaviors to the Bayes predictor, and the Transformers have worse proportion correct at $T=100$ even for a short prompt with length up to $\Lmax$. The proportion correct is lower for longer prompts for all predictors, likely because the prompt of all \ONEs requires a perfect match on each token and is more difficult to identify exactly when the maximum prompt length is longer. 

\begin{figure}[ht] %
    \centering
    \includegraphics[width=\textwidth]{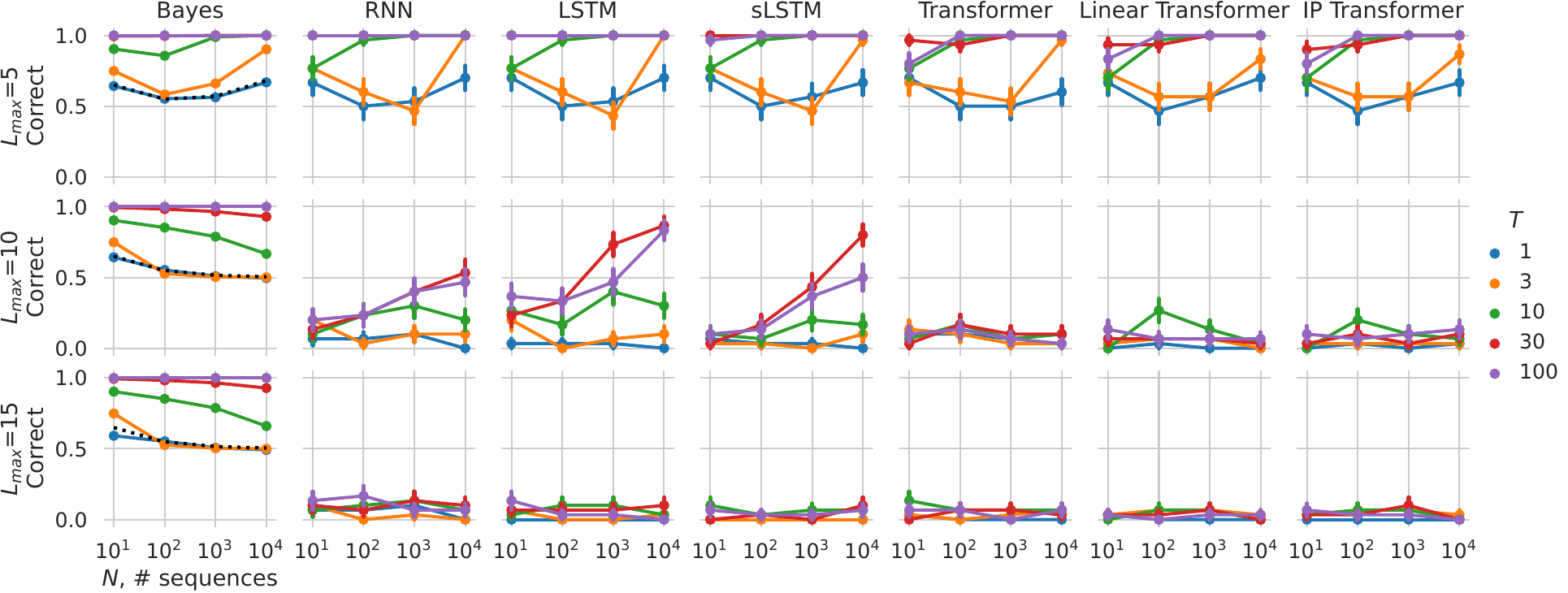}
    \caption{
    The proportion correct of the empirically optimal prompt for 
    $p=\BernMix(0.2, 0.7)$ and $q=\Bern(0.7)$.
    Same as \cref{fig:sensitivity} but with more prompting setups.}
    \label{fig:sensitivity_supp}
\end{figure}

\begin{table}[ht] %
    \centering
    \caption{Prompts of length up to $\Lmax=5$ that induce top 10 values of the posterior belief $p(x_1|s)$ under the pretraining DG $\BernMix(0.2, 0.7)$. \label{tab:tau_hat}}
    \begin{tabular}{l|*{11}{c}}
        \multicolumn{1}{c|}{} & \multicolumn{11}{c}{Prompt counts} \\
        \hline
        $S_0(s)$ & 0 & 0 & 1 & 0 & 1 & 0 & 2 & 1 & 0 & 2 & 1 \\
        $S_1(s)$ & 5 & 4 & 4 & 3 & 3 & 2 & 3 & 2 & 1 & 2 & 1 \\
        $p(x_1=1|s)$ & 0.699 & 0.697 & 0.691 & 0.689 & 0.671 & 0.662 & 0.629 & 0.611 & 0.589 & 0.516 & 0.484 \\
    \end{tabular}
\end{table}

\subsubsection{Non-Monotonicity of Proportion Correct versus \texorpdfstring{$N$}{N}}\label{sec:bernmix_bern_0.7_nonmonotonic}

Let us gain some insights into this phenomenon by considering the case of $T=1$. Here, we can compute theoretically the probability of an empirical prompt being correct and predict this trend as the black dotted line. The short reason is that the task dataset may have an \emph{empirical} ratio of \ONEs below 0.7, which happens with nonzero probability even though the task DG is $\Bern(0.7)$.
When this happens, this dataset may be better explained by a mixture of Bernoulli that has nonzero weight on the component with $\tau_1=0.2$, giving a prediction that is close to the empirical ratio. Such a mixing weight can be induced by a prompt other than the all-\ONE prompt; see \cref{tab:tau_hat} for example prompts with lengths up to $\Lmax=5$. Note that some of the prompt counts in \cref{tab:tau_hat} appear in the clusters shown in \cref{fig:sensitivity}(right, blue dots). At $N=10$, if the empirical mean in the task dataset takes on values of 0.6 or 0.5, the best prompts to induce such biases have counts, respectively, (1 \ZERO \& 2 \ONEs) or (1  \ZERO \& 1 \ONE). Such datasets are responsible for the clusters of empirical prompts.

More precisely, for $T=1$ we can compute the probability that $\hat{s}(\pbayes, \DqN)= s^*(\pbayes,q)$ given a randomly drawn task dataset $\DqN$ from $q=\Bern(0.7)$. 
Recall that the log-loss on the first token following a prompt $s$ is 
\begin{equation}\label{eq:length_1_seq_likelihood}
\begin{gathered}
\hat{\mathcal{L}}(\pbayes, \DqN, s) = -N\hat\tau(\DqN) \log(\bar\tau(s)) - N(1-\hat\tau(\DqN)\log(1-\bar\tau(s))\,,~\text{where}\\
\hat\tau(\DqN) := \frac{1}{N}\sum_{n=1}^N x_1^n\,,  \quad \bar\tau(s):=\pbayes(x_1=1|s)=(1-w_L(s))\tau_1 + w_L(s) \tau_2\,,
\end{gathered}
\end{equation}
and $w_L(s)$ is given in \cref{thm:bernoulli-mixture}.
For prompts of length up to $\Lmax$, it is easy to see that $\hat{s}(\pbayes, \DqN)=s^*(\pbayes,q)$ if and only if, under the Bayes predictor $\pbayes$, the optimal prompt $s^*$ with $\Lmax$ \ONEs gives a lower log-loss than the prompt $s^+$ that has $0$ \ZERO and $(\Lmax-1)$ \ONEs (see \cref{tab:tau_hat} for an example when $\Lmax=5$). Then we have, 
$$
\mathbb{P}(\hat{s}=s^*) = \mathbb{P}( \hat{\mathcal{L}}(\pbayes, \DqN, s^*)< \hat{\mathcal{L}}(\pbayes, \DqN, s^+)).
$$ 
Substituting in \cref{eq:length_1_seq_likelihood} and after some manipulation, we get
\begin{equation*}
\begin{gathered}
\mathbb{P}(\hat{s}=s^*) =
    \mathbb{P}\left(N\hat\tau(\DqN) > \kappa\right)\,,~\text{where}~
    \kappa := N\log\left(\frac{1-\tau^+}{1-\tau^*}\right)
        \bigg/
        \log\left(\frac{\tau^*(1-\tau^+)}{\tau^+(1-\tau^*)}
    \right)
\end{gathered}
\end{equation*}
Noting that $N\hat\tau(\DqN)$ is a binomial distribution $\text{Binom}(0.7, N)$, we can easily find $N\hat\tau(\DqN)$ using its cumulative distribution function. For sequence length $T>1$, the loss becomes more complicated, so is its dependence on $\bar\tau$.

\def\landscapelegend{The leftmost column shows the KL divergence of each prompt sorted in increasing order against the prompt rank (sort indices, or argsort), with lower rank meaning lower KL divergence \eqref{eq:kl_divergence}. The other columns show KL divergence of each individual prompt. The prompts here are all expressed by their counts. See \cref{sec:loss_landscape} for a detailed explanation.
}
\begin{figure}[ht] %
    \centering
    \includegraphics[height=0.25\textheight]{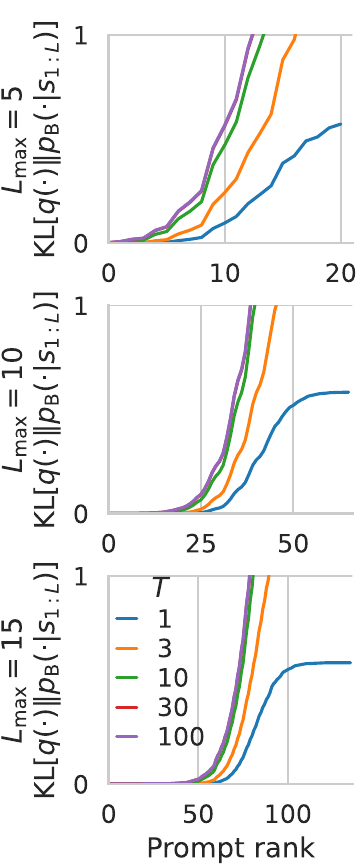}
    \includegraphics[height=0.25\textheight]{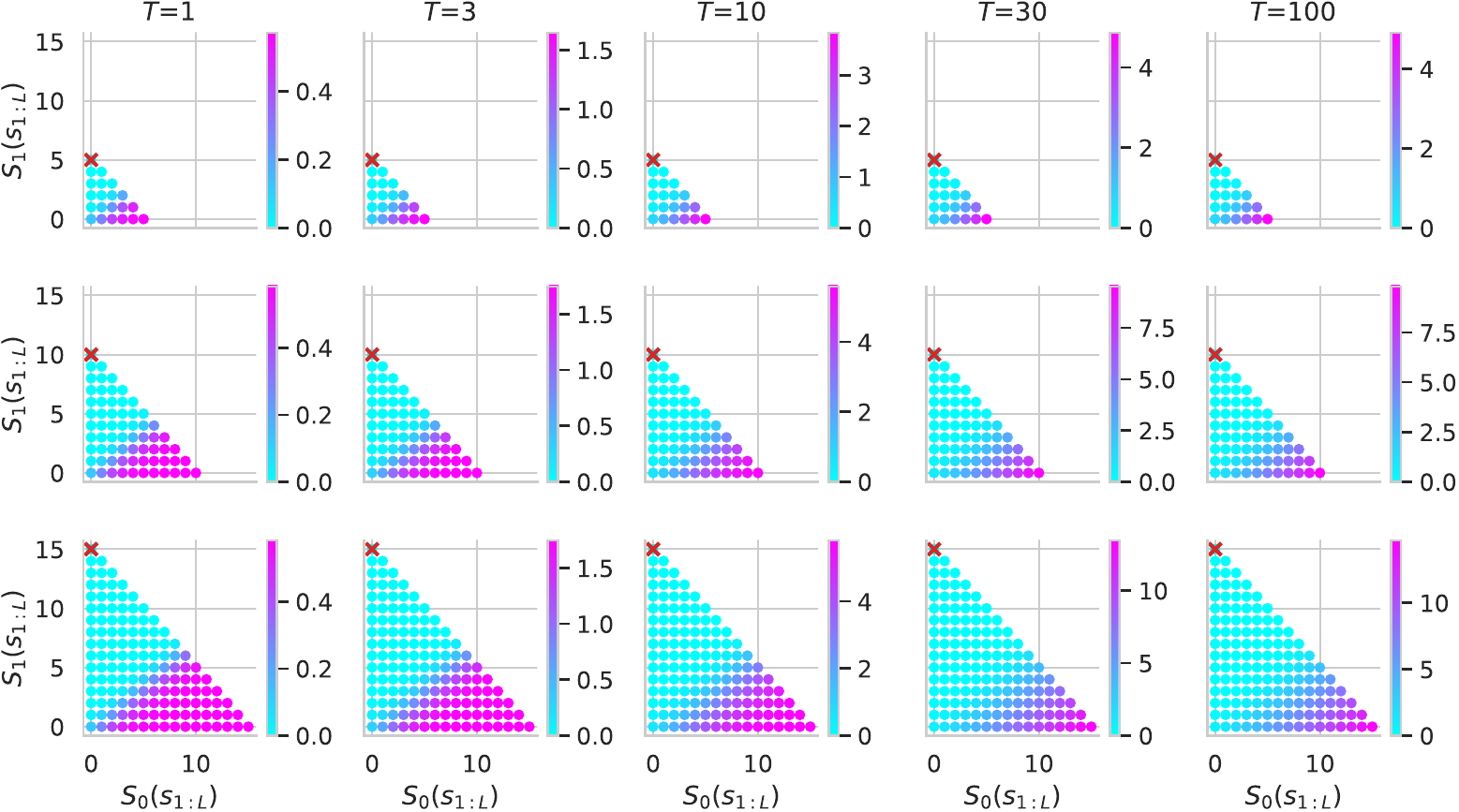}
    \caption{
    The loss ``landscape'' of $p=\BernMix(0.2, 0.7)$ and $q=\Bern(0.7)$. \landscapelegend
    \label{fig:sensitivity_landscape}
    }
\end{figure}

\subsubsection{Loss ``Landscape''}\label{sec:bernmix_bern_0.7_landscape}

We show the KL divergence of all possible prompts in two ways in \cref{fig:sensitivity_landscape}. On the leftmost column, we order the prompts (expressed in counts) according to their values of the KL divergence, producing a rank plot of in ascending order. The best prompt is at rank 0, the second best prompt is at rank 1, etc.
We see that there is a relatively flat region next to the best prompt with $rank 0$, suggesting that it would be more difficult to distinguish the best few prompts.
As $\Lmax$ increases and $T$ decreases, the flat region expands towards the right, meaning that there are more close-to-optimal prompts. This trend is consistent with the results shown in \cref{fig:sensitivity_supp}. The other columns of \cref{fig:sensitivity_landscape} show KL divergence of all possible prompts with length less than $\Lmax$. It is evident that the many prompts around the optimal prompt have very similar KL divergences close to zero. 

\subsubsection{Distribution of Empirically Optimal Prompts}\label{sec:sensitivity_emp_dist}

\Cref{fig:sensitivity_emp_dist} shows the distribution of empirically optimal prompts for different neural predictors. We show the setting of $N=10\,000$ and $T=100$, which is the most reliable setting. At the shortest $\Lmax=5$, the empirical $\hat s$ is mostly correct. At larger $\Lmax$, the empirical $\hat s$ gets further away from $s^*$ with an inconsistent ratio of \ONEs, leading to unreliable identification that hurts the interpretability of the scattered empirical $\hat s$'s.

\def\theempdistcaption{Red cross shows the theoretical $s^*$. The black dashed line shows the boundary set by $\Lmax$.}

\begin{figure}[ht] %
    \centering
    \includegraphics[width=\textwidth]{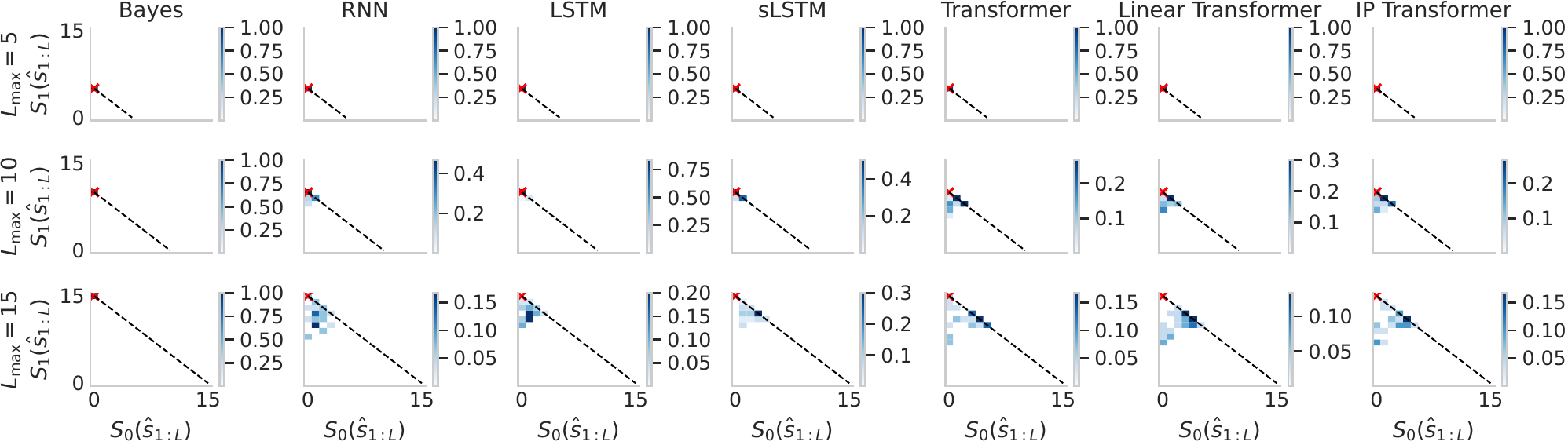}
    \caption{
    The distribution of the prompt counts for $p=\BernMix(0.2, 0.7)$ and $q=\Bern(0.7)$ for different neural predictors. We pick the most reliable prompt setup of $N=10\,000$ and $T=100$.
    \theempdistcaption
    }
    \label{fig:sensitivity_emp_dist}
\end{figure}

\clearpage

\subsection{Pretraining on \texorpdfstring{$\BernMix(0.2, 0.7)$}{BernMix(0.2, 0.7)}, Prompting towards \texorpdfstring{$\Bern(0.6)$}{Bern(0.6)} (OOMD)}\label{sec:bernmix_bern_0.6_supp}
\subsubsection{Zig-Zag Pattern of Theoretically Optimal Prompts}\label{sec:bernmix_bern_0.6_length}
We provide an intuitive explanation for the zig-zag pattern of the theoretical optimal prompts in \cref{fig:length_dependence}. Suppose that we only predict a single $x_1$ for $T=1$, then the best prompt should be one such that $w_L$ in \eqref{eq:mixture_weight_posterior} is as close as possible to 0.6. The counts of \ZEROs and \ONEs in the theoretically optimal prompt for different values of $\Lmax$ are as follows: 1 \ZERO and 2 \ONEs for $3\le\Lmax\le9$, or a prompt of 5 \ZEROs and 5 \ONEs when $10\le\Lmax\le34$, or 16 \ZEROs and 19 \ONEs when $\Lmax\le35$, etc.; see \cref{fig:bernmix_bern_0.6_optimal_length_vs_Lmax}. This explains why the optimal prompt does not take maximal length for a fixed $T$.
Interestingly, the ratio of \ONEs does not converge to 0.6 for up to $\Lmax=10000$.

As $T$ increases, the sequence $x_{1:T}$ continues to update $w_L$, and will eventually, with high probability, converge to 1.0, giving a constant prediction of $p(x_t=1)=0.7$ for large $T$. The prompt can only improve this prediction in the short-term. At $\Lmax=15$, a prompt longer than the optimal length 10 provides more robust predictive ratio (not easily modified by incoming $x$), but this sacrifices the predictability of the first few tokens, which could be better predicted using the 5 \ONEs and 5 \ZEROs optimal for $T=1$. This creates a highly complex interplay between $T$ and the optimal prompt length.

\subsubsection{Unexpected Length Increase in Theoretically Optimal Prompts}\label{sec:bernmix_bern_0.6_length_vs_Lmax}

\Cref{fig:length_dependence} shows the optimal prompt for a fixed $\Lmax$, which has unexpected variations in the optimal length as a function of sequence length. In \cref{fig:bernmix_bern_0.6_optimal_length_vs_Lmax}, we show how the optimal prompt length depends on the upper limit $\Lmax$ for different sequence lengths $T$. We see that the optimal length never decreases, but can jump to $\Lmax$ or stay unchanged as $\Lmax$ increases, again with almost unpredictable pattern. This can result in an illusion that the optimal prompt length is short even though longer prompts have been searched, since theoretically the optimal prompt length does not always increase even if longer prompts are allowed. 

\begin{figure}[h]
    \centering
    \includegraphics[width=0.98\linewidth]{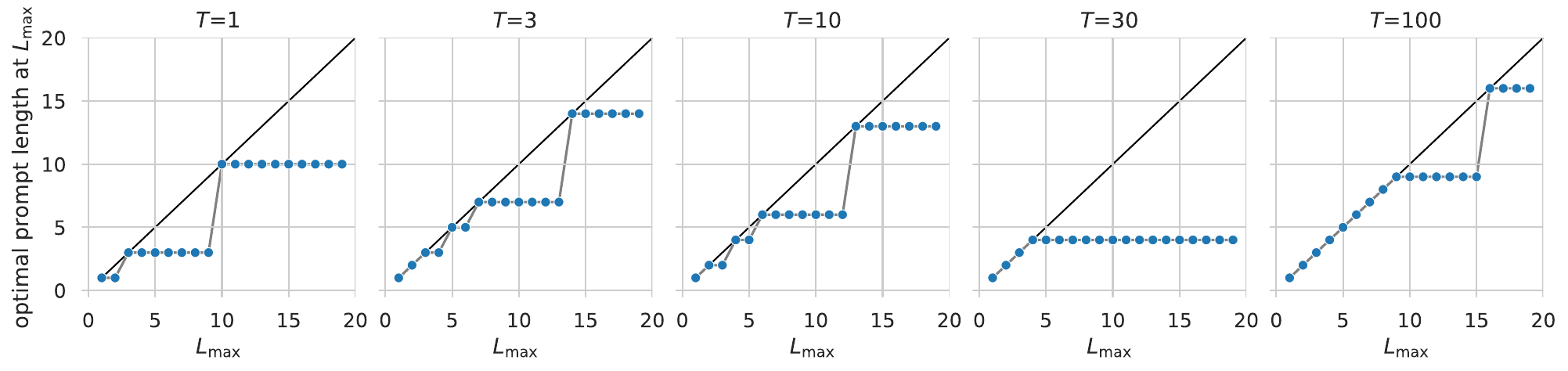}
    \caption{Optimal prompt length as a function of the maximum prompt length $\Lmax$.}
    \label{fig:bernmix_bern_0.6_optimal_length_vs_Lmax}
\end{figure}

\subsubsection{Proportion Correct}

\Cref{fig:sensitivity_0.6_supp} extends the results in \cref{fig:sensitivity}. For $\Lmax=5$, the Transformer predictors have lower proportion correct than other predictors when $T=100$. At $\Lmax=10$ and $\Lmax=15$, all neural predictors got worse than the Bayes predictor when $T$ reaches $30$; and all predictors, including the Bayes predictor, failed to identify $s^*$ when $T=100$.

\begin{figure}[h] %
    \centering
    \includegraphics[width=\textwidth]{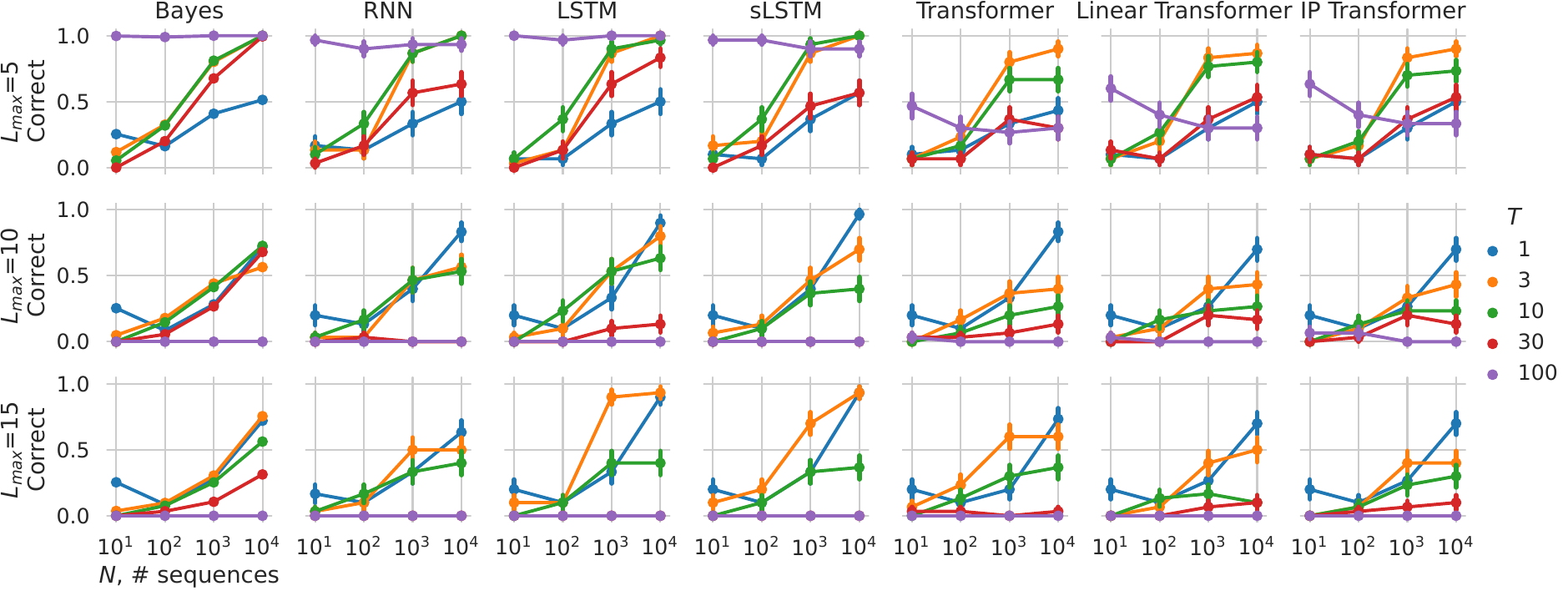}
    \caption{
    The proportion correct of empirically optimal prompt for 
    $p=\BernMix(0.2, 0.7)$ and $q=\Bern(0.6)$.
    Same as \cref{fig:length_dependence} but with more prompting setups.}
    \label{fig:sensitivity_0.6_supp}
\end{figure}

\subsubsection{Distribution of Empirically Optimal Prompts}
\Cref{fig:sensitivity_0.6_emp_dist} shows the distribution of empirically optimal prompts for different neural predictors. 
When the maximum prompt length $\Lmax=5$, the empirical $\hat s$'s are mostly correct. At larger $\Lmax$, some empirical $\hat s$ are close to the $s^*$, but there is a cluster of empirical $\hat s$'s at around (5 \ZEROs and 10 \ONEs) for $\Lmax=15$, which is quite far away from the theoretical $s^*$. In this case, even knowing the pretraining distribution does not explain the existence of this cluster, as the Bayes predictor only has a cluster at the all-\ONE prompt.

\begin{figure}[ht] %
    \centering
    \includegraphics[width=\textwidth]{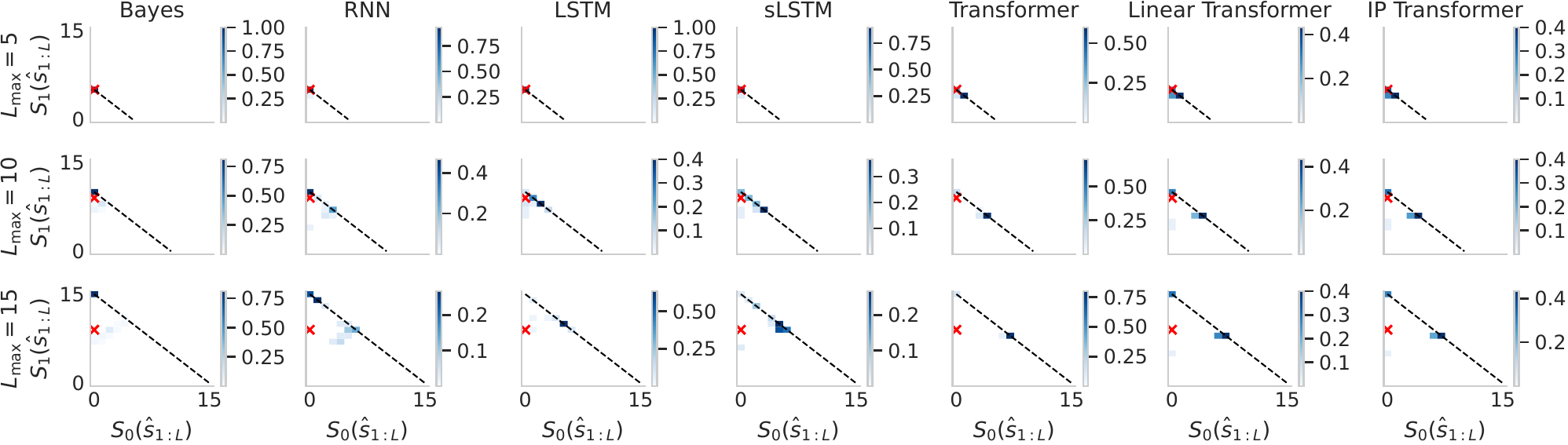}
    \caption{
    The distribution of the prompt counts for $p=\BernMix(0.2, 0.7)$ and $q=\Bern(0.6)$ for different neural predictors. We pick the most reliable prompt setup of $N=10\,000$ and $T=100$.
    \theempdistcaption
    }
    \label{fig:sensitivity_0.6_emp_dist}
\end{figure}

\clearpage
\subsubsection{Loss ``Landscape''}\label{sec:bernmix_bern_0.6_landscape}
\Cref{fig:sensitivity_0.6_landscape} shows the ``loss landscape'' of prompts on the Bayes predictor. In addition to the flat landscape similar to \cref{fig:sensitivity_landscape}, the KL divergence does not go towards zero, which is a sign of OOMD prompting. Thus, the optimal prompts are harder to identify, and the best prompt does not improve the performance much.

\begin{figure}[ht] %
    \centering
    \includegraphics[height=0.26\textheight]{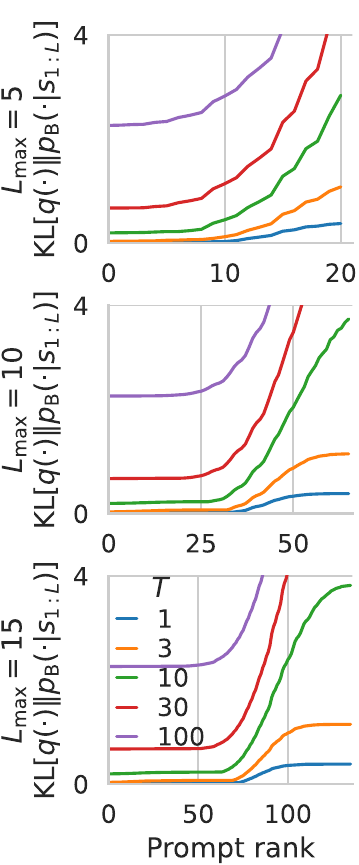}
    \includegraphics[height=0.26\textheight]{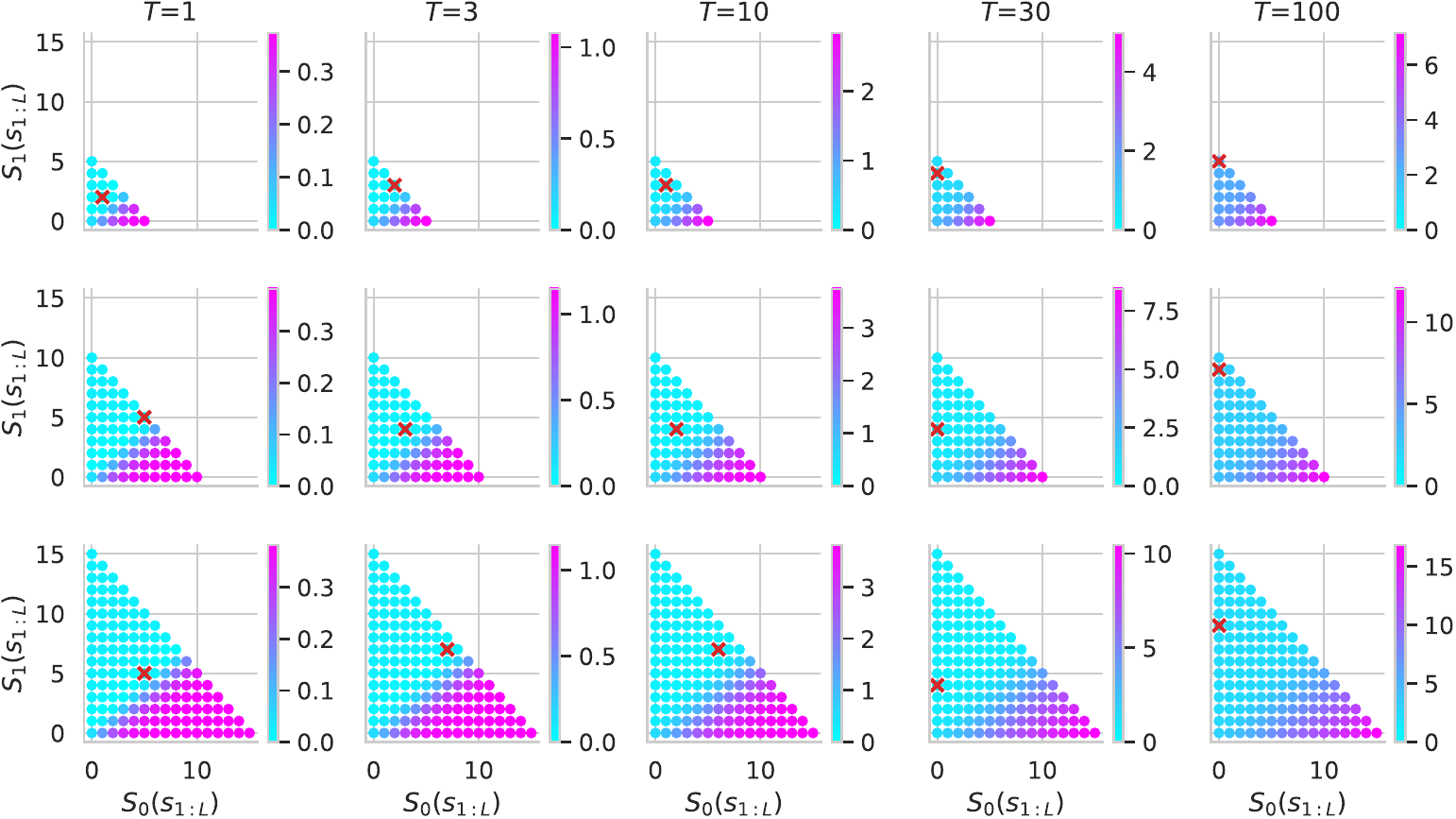}
    \caption{The loss ``landscape'' of $p=\BernMix(0.2, 0.7)$ and $q=\Bern(0.6)$.\landscapelegend
    \label{fig:sensitivity_0.6_landscape}}
\end{figure}

\clearpage
\subsection{Pretraining on \texorpdfstring{$\BetaBern$}{BetaBern}, Prompting towards \texorpdfstring{$\Bern$}{Bern} (IMD)}\label{sec:betabern_bern_supp}

\subsubsection{Proportion Correct}

\Cref{fig:beta_categorical_supp} extends the results in \cref{fig:beta_categorical}. In almost all prompt setups, all neural predictors show very similar trends compared to the Bayes predictor. 
\begin{figure}[ht] %
    \centering
    \includegraphics[width=\textwidth]{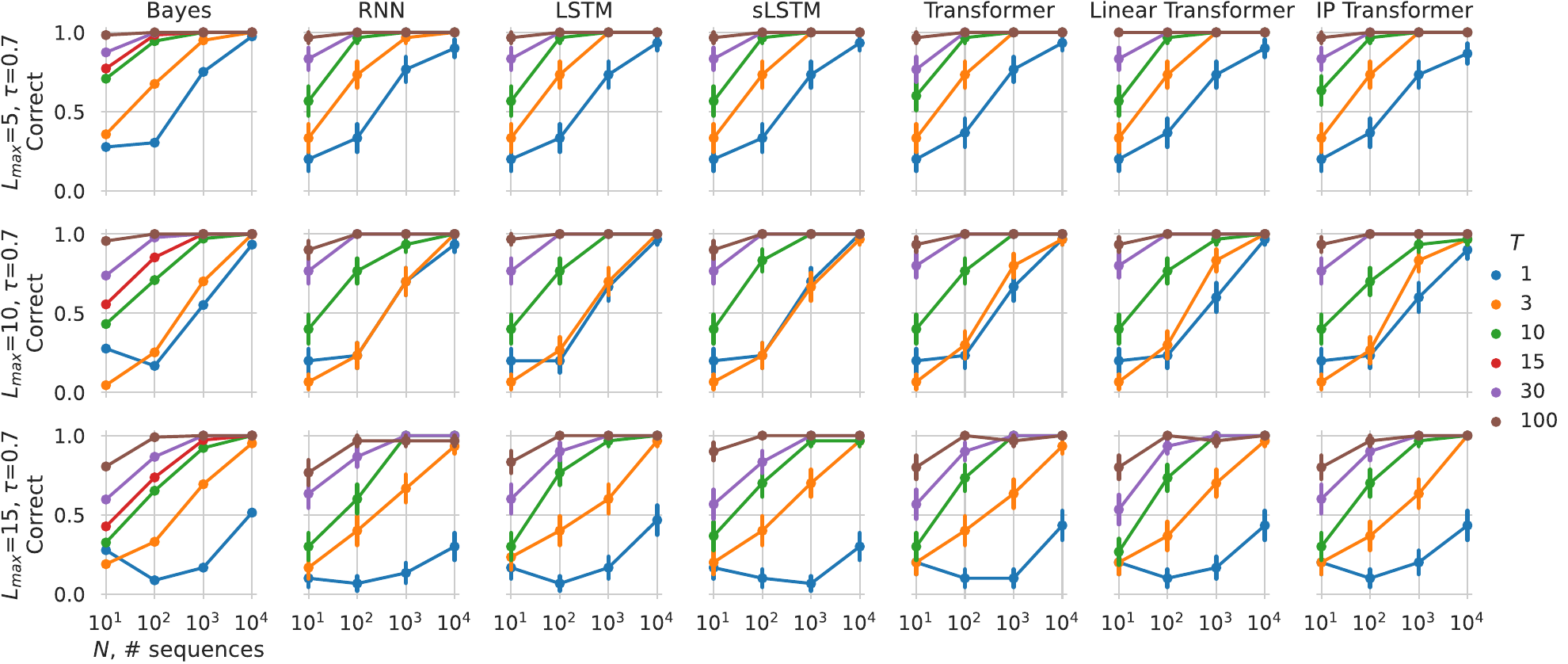}\\
    \vspace{0.5cm}
    \includegraphics[width=\textwidth]{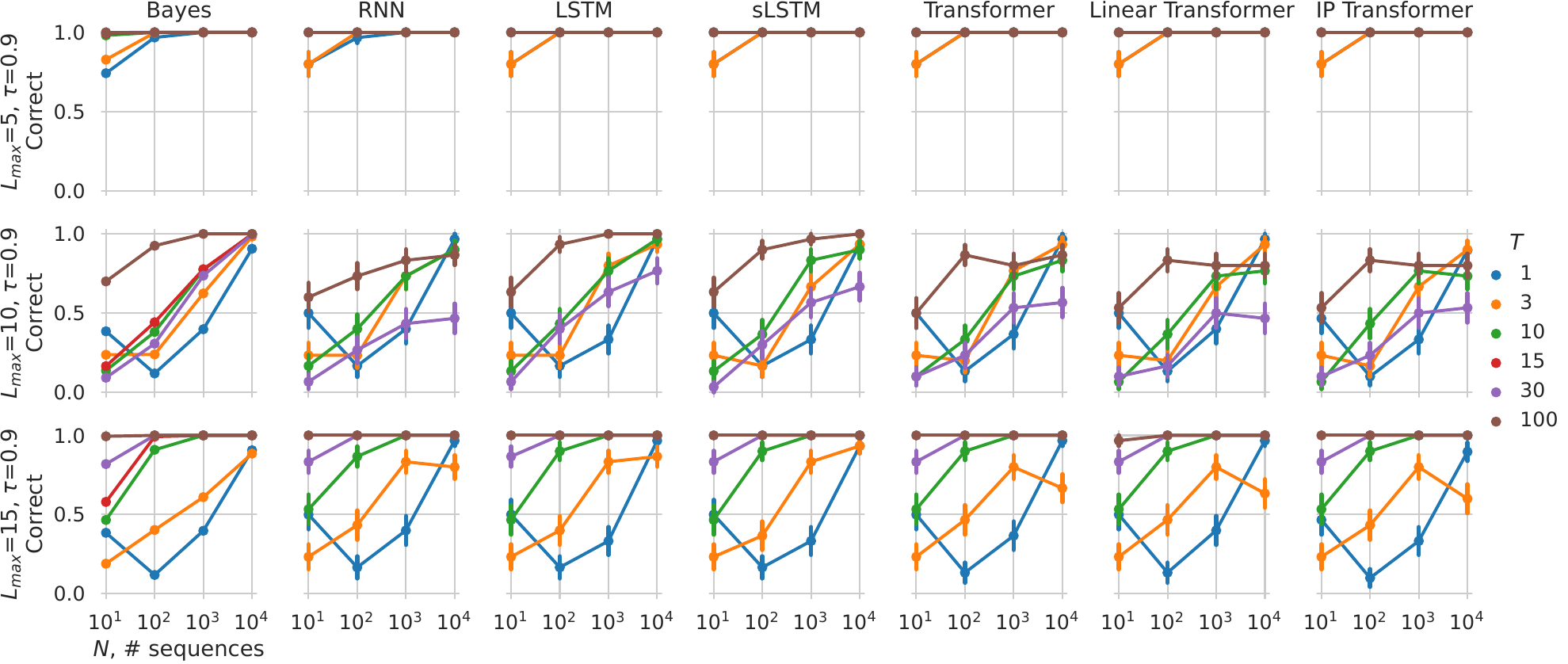}\\
    \caption{Proportion correct of $\hat s$. Same as \cref{fig:beta_categorical} but with more prompting setups.}
    \label{fig:beta_categorical_supp}
\end{figure}

\subsubsection{Distribution of Empirically Optimal Prompts}
As \cref{fig:beta_categorical_emp_dist} shows, the empirically optimal prompts are highly likely correct, suggesting that the distribution is very concentrated at $s^*$. To show how the empirical $\hat s$ differs, we check the distribution of $\hat s$ for the setting $q=\Bern(0.7)$, $N=100$ and $T=3$ in \cref{fig:beta_categorical_emp_dist}. Unlike in previous cases, the prompt distribution is still very close to $s^*$, giving more or less the same ratio of \ONEs and hence consistent interpretation.

\begin{figure}[ht] %
    \centering
    \includegraphics[width=\textwidth]{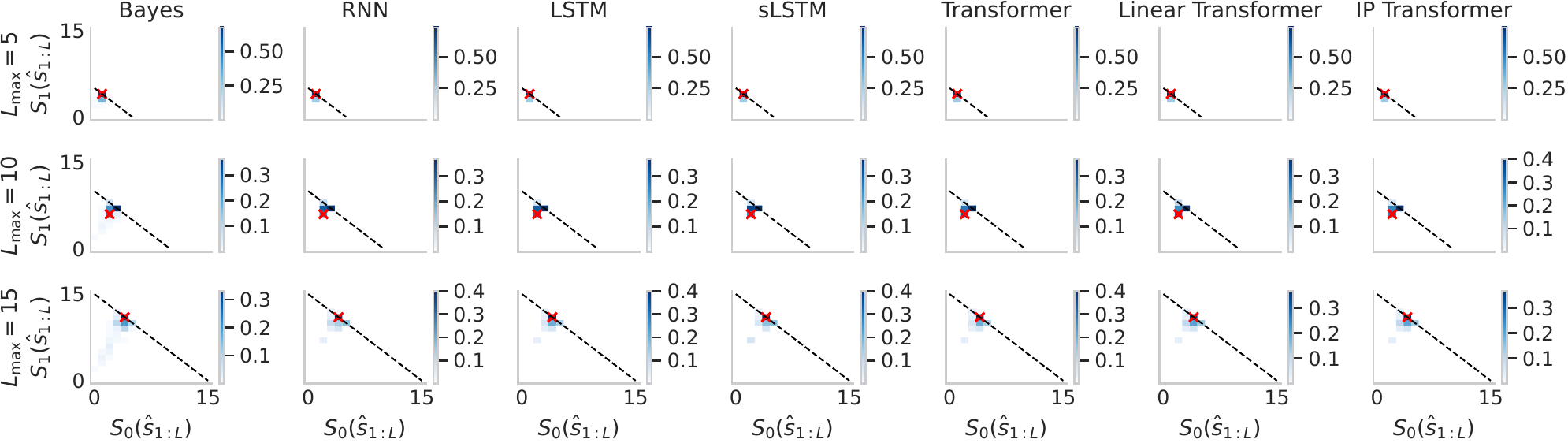}
    \caption{
    The distribution of empirically optimal prompt for $p=\BetaBern(1,1)$, $q=\Bern(0.7)$, $N=100$, $T=3$ for different neural predictors. \theempdistcaption
    \label{fig:beta_categorical_emp_dist}
    }
\end{figure}

\subsubsection{Effect of the Prior at Shorter Sequence Length}\label{sec:betabern_bern_shorterT}

The mismatch at shorter $T$ is due to the uniform prior $\Beta(1,1)$ having pseudocounts of 1 \ONE and 1 \ZERO, which regularizes the predicted bias towards 0.5 by adding 1 to each counter. For shorter $T$, say $T=1$, it is important to get the ratio correct after this regularization, since the Bayes optimal predictor gives $p(x_1=1|s)=(S_1(s) + 1) / (S_0(s) + S_1(s) + 2)$. If we add 1 to each count, then the regularized ratio is closer to the true bias. To reduce the gap between the regularized ratio, the optimal prompt length under the upper limit $\Lmax$ may be less than $\Lmax$. For example, for $\Lmax=10$ and $q=\Bern(0.7)$, the optimal prompt is 2 \ZEROs and 6 \ONEs and has length 8 (see \cref{tab:betabern_seq_len}); the regularized ratio is then $(6+1) / (2 + 6 + 2) = 0.7$, which is optimal for predicting the next $x_1$.

However, as more tokens arrive, the regularized ratio fluctuates due to the randomness in the draws from $\Bern(0.7)$, while ideally it should be fixed at 0.7. Hence, there is additional benefit from being certain about the bias (with larger counts), which can be gained by using more tokens in the prompt. For the same example of $\Lmax=10$ and $q=\Bern(0.7)$, but now $T=10$, the optimal prompt becomes 3 \ZEROs and 7 \ONEs \cref{tab:betabern_seq_len}. There will be a small cost for predicting earlier $x_t$'s, as the regularized ratio is initially not exactly 0.7.

\subsubsection{Non-Uniform Latent Factor}

\begin{table}%
\centering
\caption{Example optimal prompts $s^*_\Lmax$ for $p=\BetaBern(1, \beta)$ and $q=\Bern(0.7)$ of various values of 
$\beta$, $T$ and $\Lmax$.}%
\label{tab:betabern_seq_len}
\begin{tabular}{lccccc}
\hline
\multicolumn{2}{c}{SDs $p$ and $q$} & \multicolumn{4}{c}{Optimal prompt ${s^*_\Lmax}$} \\
$\beta$    & $T$              & $L_{\text{max}}$         & ($S_0$, $S_1$)   &  $S_0$+ $S_1$ & $S_1/(S_0+S_1)$      \\
\hline
1           & 1                & 3                        & (0, 1)          &  1            &  1.00    \\
1           & 3                & 3                        & (0, 2)          &  2            & 1.00   \\
1           & 5                & 3                        & (1, 2)          &  3            & 0.67   \\
1           & 1                & 10                       & (2, 6)          &  8            & 0.75  \\
1           & 10               & 10                       & (3, 7)          &  10            & 0.70  \\
1           & 3                & 12                       & (3, 8)          &  11           & 0.72 \\
1           & 5                & 12                       & (3, 8)          &  11           & 0.72 \\
\hline
2           & 10                & 10                       & (2, 8)         &  10           & 0.8  \\
2           & 100               & 20                      & (5, 15)         &  20           & 0.75  \\
2           & 100               & 50                       & (14, 36)       &  50           & 0.72     \\
\hline
\end{tabular}
\end{table}

In \cref{tab:betabern_seq_len}, we show the optimal prompts on a few prompt setups. When $\beta=1$ and $p_\tau$ is uniform, the optimal prompt contains roughly the correct proportion of \ZEROs and \ONEs even for a short task sequence length $T$, converging to the true bias in $q$ as $\Lmax$ increases. When $\beta=2$ and thus $p_\tau$ is biased towards zero, the optimal prompts have to debias the prior, and thus the empirical ratio of \ONEs is further away from the ground truth bias 0.7, and requires larger $T$ and $\Lmax$ to converge to 0.7.

\subsubsection{Loss ``Landscape''}\label{sec:betabern_bern_landscape}

\Cref{fig:betabern_bern_landscape}(left) shows that the theoretical optimal prompt has a distinctively lower KL divergence compared to other prompts, with the exception that $T=1$ still has a very flat landscape. The other columns show a clearer optimal region, especially for $\tau=0.7$. These are in stark contrast to \cref{fig:sensitivity_landscape,fig:sensitivity_0.6_landscape} where the optimal points do not standout among other suboptimal prompts.

\begin{figure}[!th]
    \centering
    \includegraphics[height=0.27\textheight]{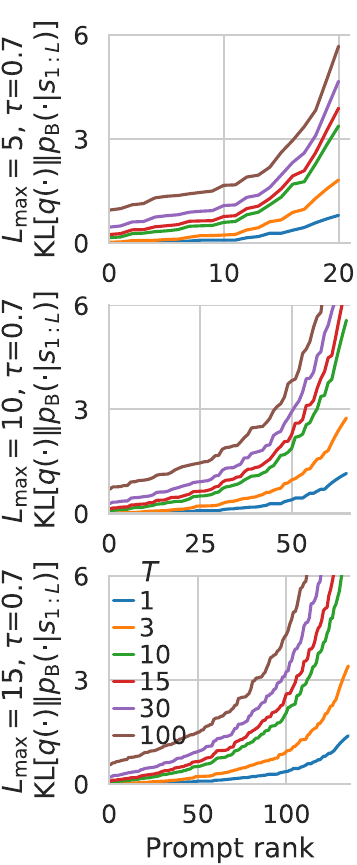}
    \includegraphics[height=0.27\textheight]{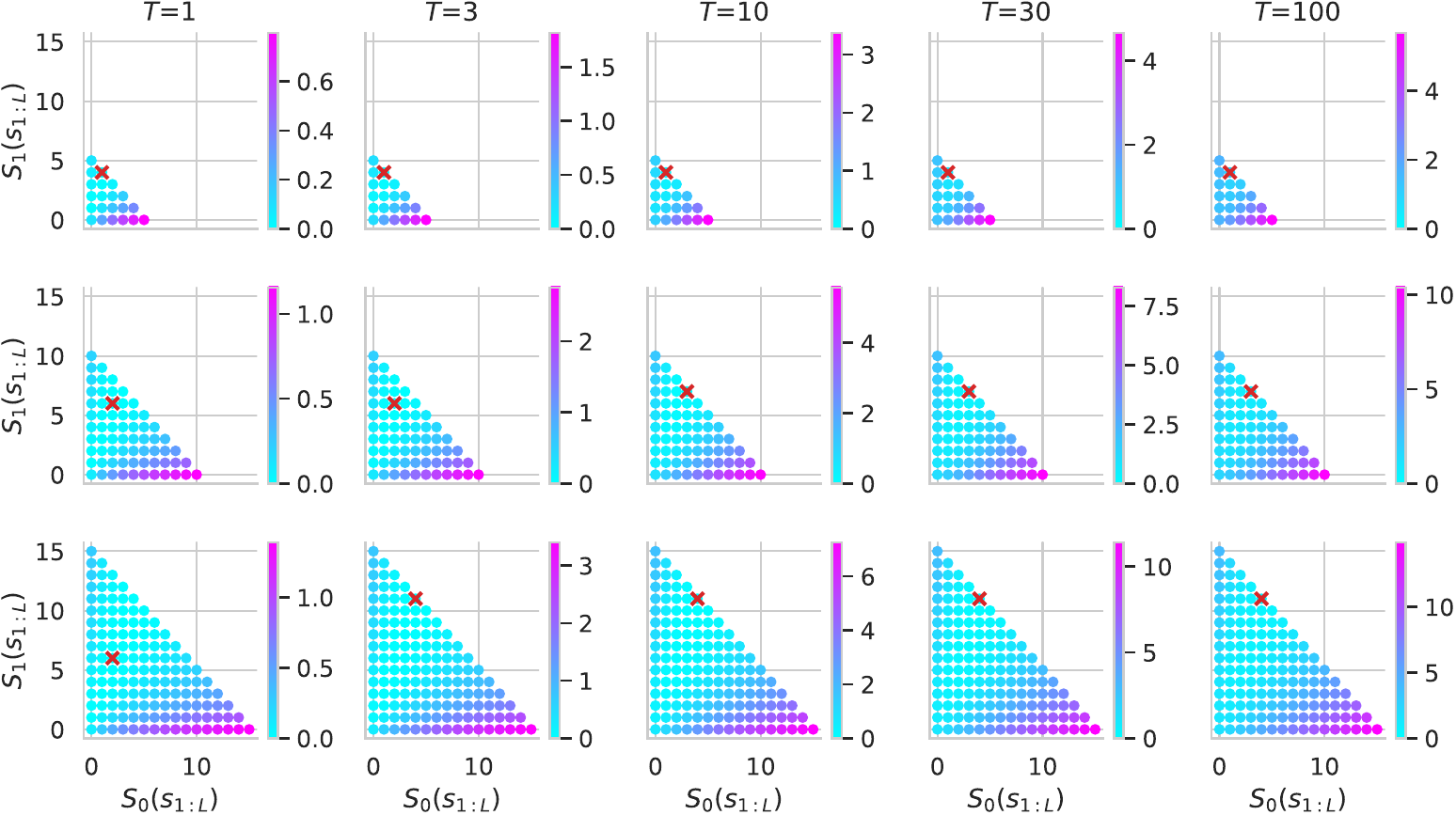}\\
    \vspace{1cm}
    \includegraphics[height=0.27\textheight]{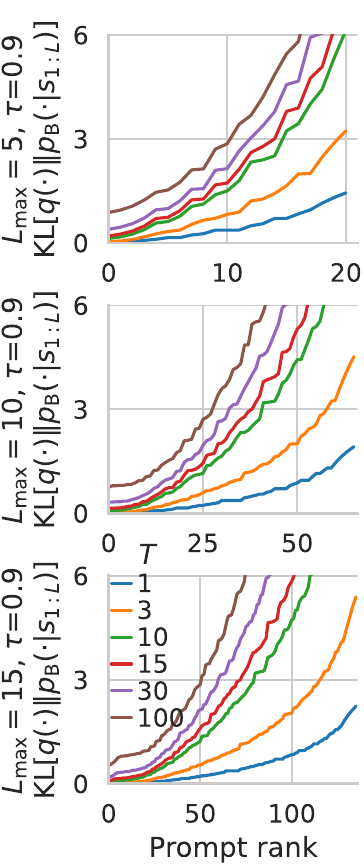}
    \includegraphics[height=0.27\textheight]{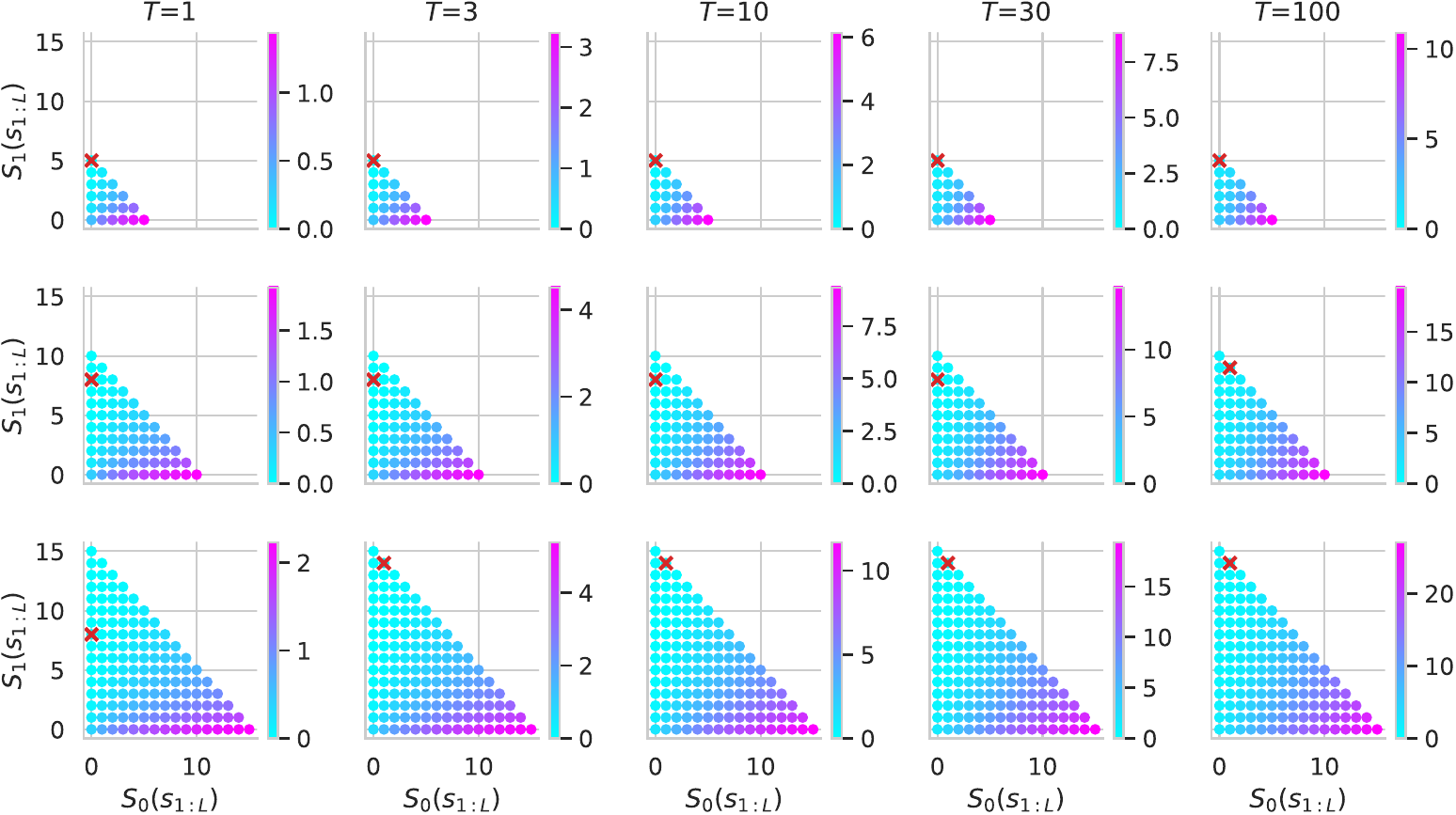}\\
    \caption{The loss ``landscape'' of $p=\BetaBern(1, 1)$ and $q=\Bern(0.7)$ or $q=\Bern(0.9)$.\landscapelegend %
    \label{fig:betabern_bern_landscape}}
\end{figure}

\clearpage

\subsection{Pretraining on \texorpdfstring{$\BetaBern$}{}, Prompting towards \texorpdfstring{$\BernMix$}{} (OOMD)}\label{sec:betabern_bernmix}
\begin{figure*}[ht] %
\centering
\begin{minipage}{0.42\textwidth}
\centering
\footnotesize
\begin{tabular}{cccc|cc}
\hline
\multicolumn{4}{c|}{\textbf{$q=\BernMix(\tau_1,\tau_2)$}} & \multicolumn{2}{c}{\textbf{$s^*_{100}$}} \\
$\tau_1$    & $\tau_2$  & $\Delta\tau$  & $\bar{\tau}$   & ($S_0$, $S_1$)       & ratio  \\
\hline
1/2         & 1/3       & 1/6           & 0.42                  & (20, 14)      & 0.41      \\
2/5         & 3/5       & 1/5           & 0.50                  & (11, 11)      & 0.50      \\
1/5         & 2/5       & 1/5           & 0.30                  & (13, 5)       & 0.28      \\
1/4         & 1/2       & 1/4           & 0.38                  & (9, 5)        & 0.36      \\                  
1/3         & 2/3       & 1/3           & 0.50                  & (3, 3)        & 0.50      \\
1/4         & 3/4       & 1/2           & 0.50                  & (1, 1)        & 0.50      \\
1/5         & 4/5       & 3/5           & 0.50                  & (0, 0)        & -         \\
2/5         & 1         & 3/5           & 0.70                  & (0, 1)        & 0.70\\
\hline
\end{tabular}
\captionsetup{type=table}
\caption{Empirically optimal prompts for ${p=\BetaBern(1,1)}$ and several task DGs ${q=\BernMix(\tau_1, \tau_2)}$. For each case, the optimal $s_\Lmax^*$ is found for large ${\Lmax=100}$ and ${T=100}$. ${\Delta\tau:=(\tau_2-\tau_1)}$, and ${\bar\tau:=(\tau_1+\tau_2)/2}$.
\label{tab:beta_bernmix_optimal}
 }
\end{minipage}\hfill
\begin{minipage}{0.55\textwidth}
    \includegraphics[width=\textwidth]{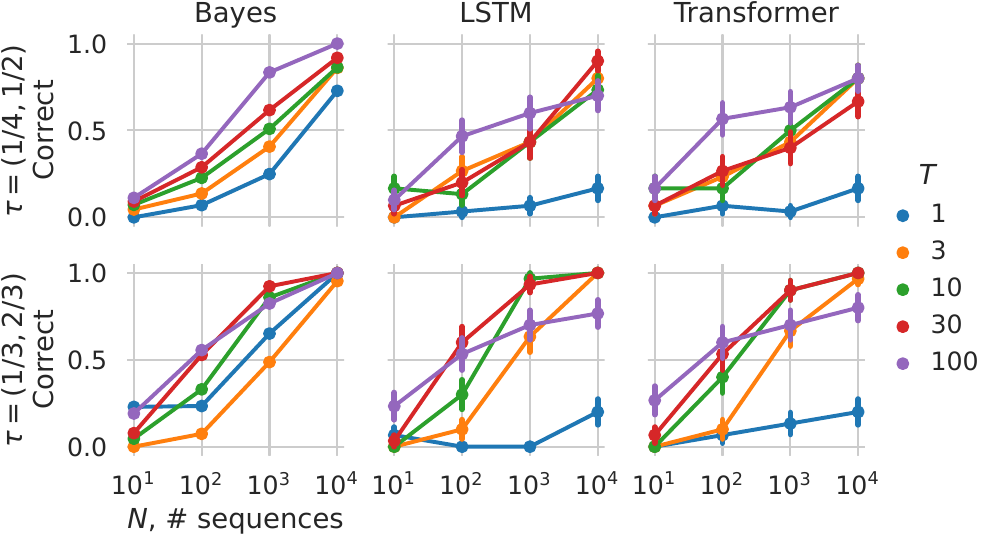}
    \caption{Proportion of empirical prompts that match the theoretically optimal prompt for two of the task DGs in \cref{tab:beta_bernmix_optimal}. The prompt length is capped at $\Lmax=15$, which is sufficient for the empirically optimal prompt in both cases. \Cref{fig:betabern_bernmix_supp} shows additional results. %
    \label{fig:beta_bernmix}
    }
\end{minipage}
\end{figure*}

The difficulty with this mixture task $q$ is that it is impossible to provide a single input sequence (the prompt) that leads to a bimodal $p_{\tau|x}$. Thus, while each component of the task mixture is IMD, the mixture itself is OOMD.
The theoretically optimal prompts $s^*_\Lmax$'s for different instances of such $q$ under a large prompt length limit $\Lmax=100$ are shown in \cref{tab:beta_bernmix_optimal}. Although the empirical ratio reflects the mean bias of $q$ well, it is still unclear why some prompts are longer and some are shorter. Given just the information about $q$, it is difficult to make sense of the varying optimal prompt length.

If we know $p$ fully, then this pattern makes more sense from a posterior concentration perspective. If $\tau_1$ and $\tau_2$ are close together, a helpful prompt should make the posterior  concentrate loosely around these values. The closer the two values, the longer thus the optimal prompt (with a ratio of \ONEs close to the mean of the two mixture components). On the other hand, if the two values are far apart, a helpful prompt leaves the posterior as broad as possible, leading to very short prompts or no prompt at all.

The empirical prompts found on neural predictors are more likely to be correct for larger $T$ and larger $N$ (\cref{fig:beta_bernmix}), similar to the in-meta-distribution case in \cref{sec:continuous_latent}. The overall trend of proportion correct also generally agrees with empirical prompts found on Bayes predictors, except for $T=100$ for the case of $\tau=(1/3, 2/3)$. \Cref{fig:betabern_bernmix_supp} extends \cref{fig:beta_bernmix}.
Across all prompt setups, the agreement between Bayes and neural predictors is worse compared to the previous IMD case shown in \cref{fig:beta_categorical_supp}.

Overall, although the optimal prompts cannot reveal the bimodality nature and the bias in each component of the task DG, they still reveal overall statistical properties. Compared to the OOMD case in \cref{sec:ood_prompting}, here we are able to make sense of the ratio of \ONEs in the optimal prompts by matching them to the mean bias of $q$, but a detailed interpretation of the prompt length relies on knowing $p$.

\cref{fig:betabern_bernmix_emp_dist} shows the distribution of empirical prompts. To show how they are different to the theoretical $s^*$, we pick the prompt setup $N=100$ and $T=30$. For the least reliable case $q=\BernMix(1/4, 1/2)$, the suboptimal prompts turn out to have consistent ratio of \ONEs, giving consistent interpretation of the mean bias in $q$. This also holds for the suboptimal prompts in the case $q=\BernMix(1/3, 2/3)$.

\Cref{fig:betabern_bernmix_landscape} shows the loss ``landscape''. The optimal prompt produces a more distinctive optimal KL divergence compared to the two cases with $\tau$ supported on a finite mixture.
In particular, for the case of $q=\BernMix(1/4, 3/4)$, there is a sharp dip around the optimal prompt, which should imply more reliable identification.
This is consistent with the reliability under the Bayes predictor \cref{fig:betabern_bernmix_supp}.

\begin{figure}[ht] %
    \centering
    \includegraphics[width=0.9\textwidth]{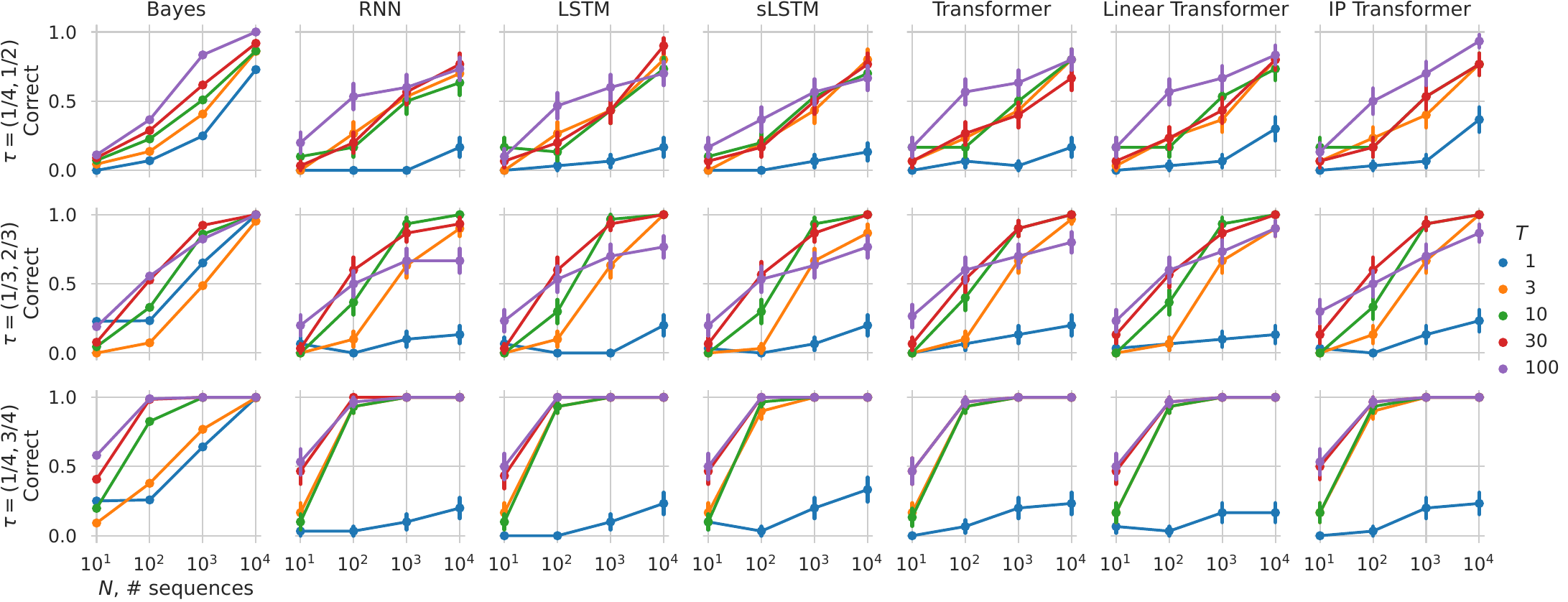}
    \caption{Proportion correct of $\hat s$. Same as \cref{fig:beta_bernmix} but with more prompting setups, using $\Lmax=15$.}
    \label{fig:betabern_bernmix_supp}
\end{figure}

\begin{figure}[ht] %
    \centering
    \includegraphics[width=0.9\textwidth]{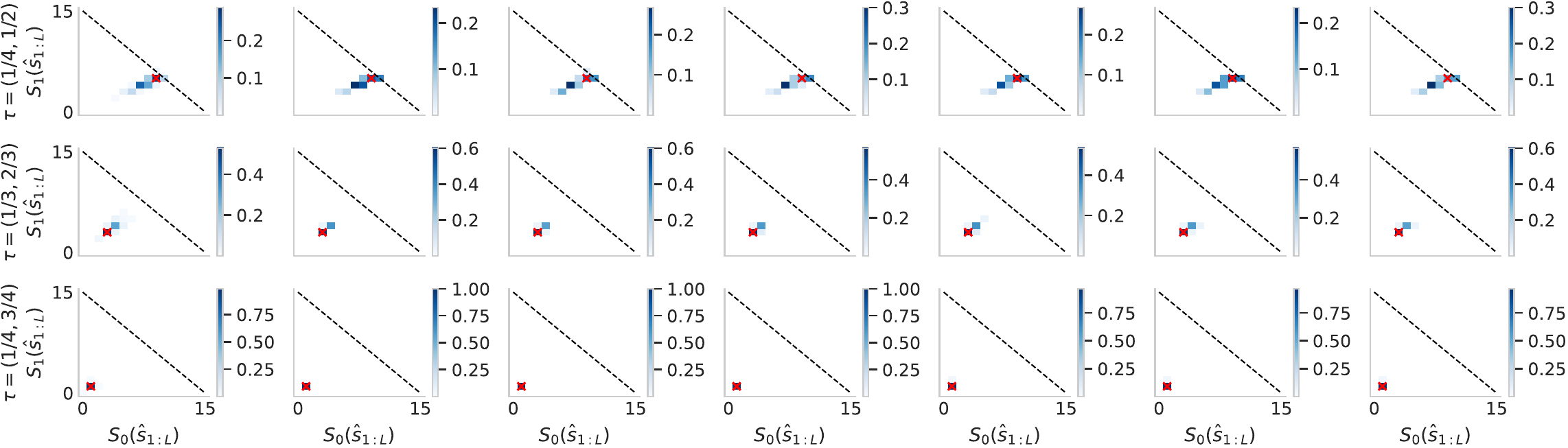}
    \caption{Distribution of empirically optimal prompts. Here, we set the $\Lmax=15$ for all predictors, which is greater than the length of the theoretical $s^*$ in all cases. \theempdistcaption
    \label{fig:betabern_bernmix_emp_dist}
    }
\end{figure}

\begin{figure}[ht] %
    \centering
    \includegraphics[height=0.27\textheight]{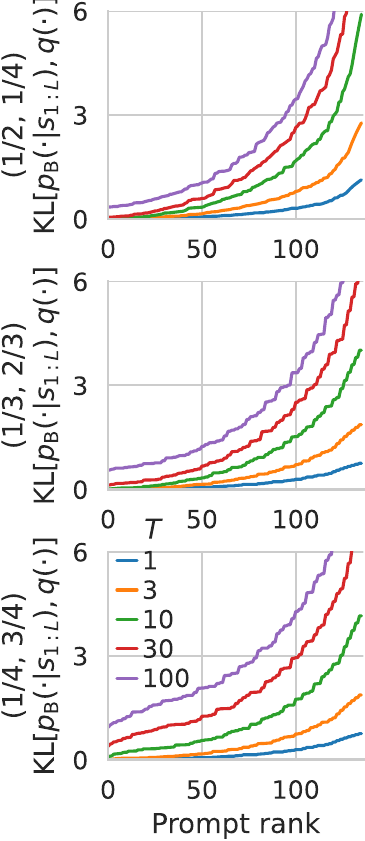}
    \includegraphics[height=0.27\textheight]{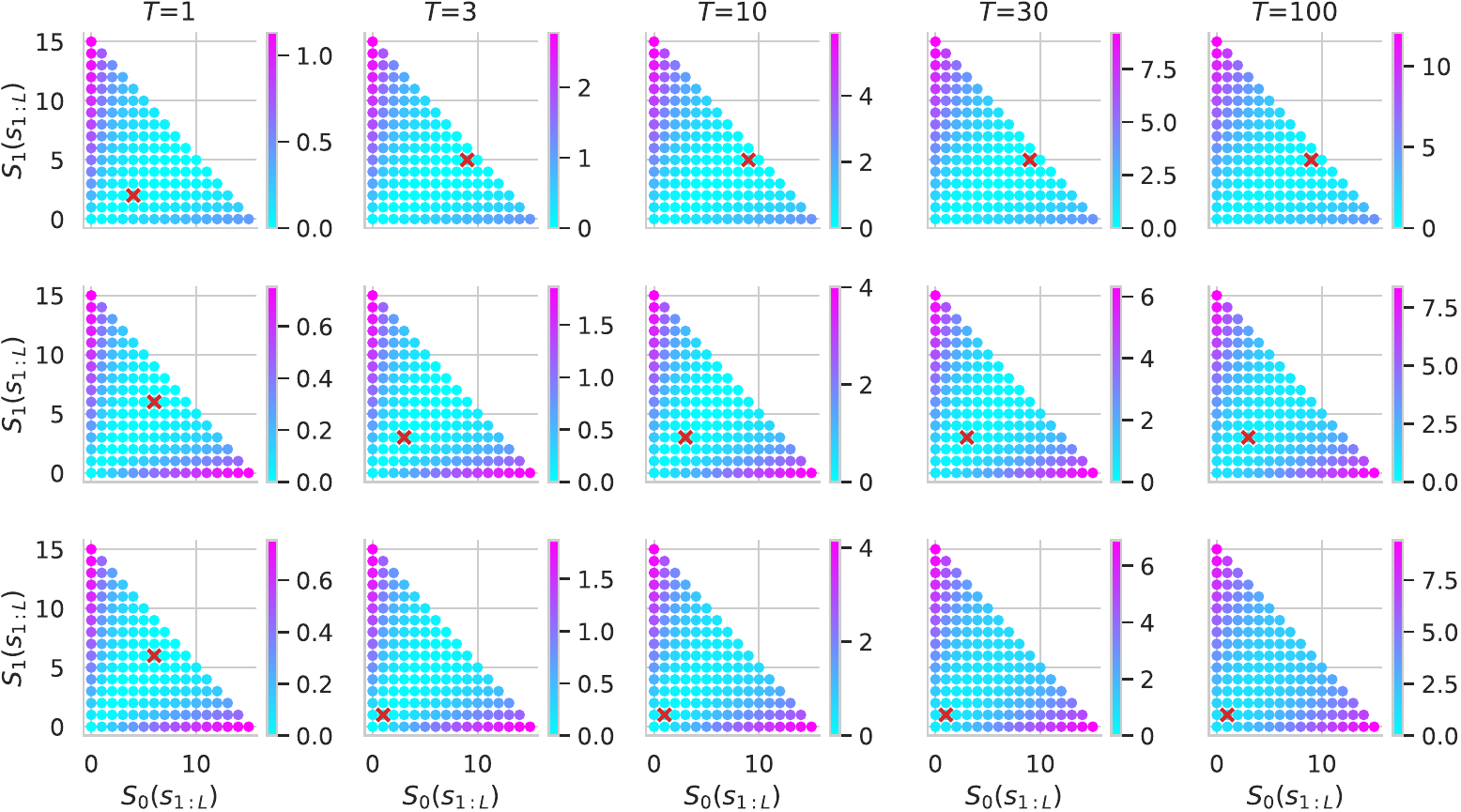}
    \caption{The loss ``landscape'' of $p=\BetaBern(1.0, 1.0)$ and $q=\BernMix(\tau_1, \tau_2)$.\landscapelegend
    \label{fig:betabern_bernmix_landscape}}
\end{figure}

\clearpage

\section{MORE COMPLEX DGS}\label{sec:more_complex_DGs}

We gradually move towards more complex DGs beyond conditionally independent sequences, and show that they too may produce unintuitive behaviors. Interpreting the optimal prompts is easier in the first example with switching latent factor \cref{fig:switching},
but it gets more complicated for the hierarchical and topic model DGs 

\subsection{Switching DGs}\label{sec:switching}

In the first example, we use a DG that switches periodically between two coins with fixed biases, giving a non-i.i.d sequence.
\begin{definition}[Switching Process]\label{thm:switching_process}
$\textrm{SwitchProc}(\eps, \lambda)$ for $\eps\in[0, 1]$ and $\lambda \in \mathbb{N}^+$ generates
\begin{equation*}
\begin{aligned}
     x_t   &\sim \Bernoulli(y_t) ~~ \forall t \in \{1,\ldots, T\}, \text{~~where~~} 
    y_{1:T} = [\underbrace{\eps, \ldots, \eps}_{\lambda ~ \eps\text{'s}},
                \underbrace{1-\eps, \ldots, 1-\eps}_{\lambda ~ (1-\eps)\text{'s} },
                \underbrace{\eps,\ldots,\eps}_{\lambda ~\eps\text{'s}},\ldots]\,.
\end{aligned}
\end{equation*}
Here, the latent factor $\tau:=(\eps, \lambda)$. The pretraining DG $p$ is a finite mixture of the Switching Process:
\end{definition}
\begin{definition}[Random Switching Process] This DG generates sequences by
first sampling 
$\eps \sim \textrm{Uniform}([0, 1])$ and
$ \lambda  \sim \text{Uniform}(\{3, 4, 5\})$,
then
$x_{1:T} \sim \text{SwitchProc}(\eps, \lambda)$.
\end{definition}
We take $\textrm{SwitchProc}(\eps, \lambda)$ with fixed values of $\eps$
and $\lambda$ as a task DG $q$, such that $q\in\mathcal{M}_p$. Examples of $\tau$ and $y$ are shown in \cref{fig:switching}(left). Prompting here is harder; for instance, a prompt that alternates between $\lambda$ \ZEROs and $\lambda$ \ONEs is very informative of $\lambda$, but not for $\eps$ if $\eps\notin\{0, 1\}$.
We set $\Lmax=15$ and search through $s_{1:15}\in\mathcal{A}^{15}$ given data sequences with different lengths $T$.

\begin{figure*}[t]
    \centering
    \includegraphics[width=\textwidth]{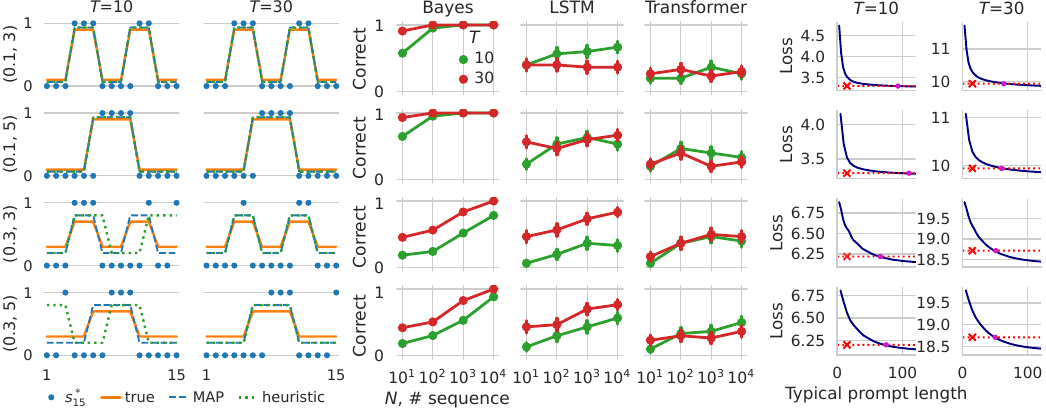}
    \vspace{-1em}
    \caption{Results for the switching DGs. Rows show tasks with different values of $(\eps, \lambda)$. 
    Left two columns: theoretical prompts $s^*$'s (dots), the true latent bias $y$ (orange solid), and a heuristic estimate of the latent $y$ based on $s^*$. 
    Middle three columns: the proportion correct of $\hat s$; \Cref{fig:switching_supp} shows additional results.
    Right two columns, log-loss of typical prompts from $q$ with increasing lengths (blue line), compared to theoretical $s^*$ with length $15$ (red).
    }
    \vspace{-0.2em}
    \label{fig:switching}
\end{figure*}

Can we interpret the optimal prompts? For each of the four tasks in \cref{fig:switching}(left), we plot the biases $y$ under the true $\tau$ (orange solid), and one example theoretical optimal prompt $s^*$ (blue dots).
All the equally optimal $s^*$'s of each $q$ induce the same posterior $p(\tau|x^*)$,
so we show the latent $y$ corresponding to the mode of $p(\tau|s^*_{15})$. The estimated $\lambda$'s are correct for all four $q$'s, and the estimates for $\eps$'s are close to the ground-truth in $q$.
Thus, in this case, the true $\lambda$ in $q$ can be recovered from the theoretical $s^*$ if we know the pretraining $p$ and its Bayes predictor.

What if we do not know $p$? \Cref{fig:switching}(left) also shows the estimated $y$'s from a heuristic method given \emph{incomplete} knowledge of $p$ (see \cref{sec:switching_details}). The estimate is close to the truth $y$ for $T=30$, but can be wrong for $T=10$. Thus, although the uniform $p_\tau$ seems to help interpreting the prompt, knowing $p$ fully is essential to recover the task $\tau$. %
\Cref{fig:switching}(middle) shows the proportion correct of empirical $\hat{s}$. Under the Bayes predictor, increasing $N$ and $T$ leads to higher chances of finding $s^*$, more so for $T=30$ compared to $T=10$. %
This trend is weaker on the neural predictors.

Given the high cost of obtaining optimal prompts, how much do we gain compared to other prompts? 
Here, we compare them to statistically typical prompts (samples) drawn from $q$, which has expected log-loss 
$
    \mathbb{E}_{s_{1:L}\sim q}[\mathcal{L}(q, p, s_{1:L})].
$
Although this is almost always higher than the log-loss \eqref{eq:log_loss_L} under the optimal prompt for the same prompt length, it is much cheaper to get \emph{longer} typical prompts to lower the log-loss at lower computational costs.
\Cref{fig:switching}(right) shows the log-loss of theoretical $s^*$ at $\Lmax=15$ and those of typical prompts with increasing lengths. Typical prompts require 3-8 times the length of the optimal prompt to reach the same log-loss, and are thus much less efficient in terms of the number of tokens, mirroring findings on natural language prompts~\citep{bhargava2023s,renze2024benefits}.
The shorter but performant optimal prompt then has the advantage of occupying shorter context windows, which is more desirable for many applications 
\citep{chang2024efficient}.
\subsubsection{Heuristic Method for Interpreting Prompts of the Switching Tasks}\label{sec:switching_details}

Given a prompt $s_{1:L}$, we want to ``guess'' the corresponding latent causes $\eps$ and $\lambda$. The heuristic method assumes that we know the sequences are generated from the switching process in Definition~\ref{thm:switching_process}, and that  
\begin{enumerate}
    \item $\eps\in[0, 1]$;
    \item $\lambda\in\{3,4,5\}$;
    \item The first bias $y_1$ associated with $s_1$ may \emph{not} be the first $\eps$ appearing in Definition~\ref{thm:switching_process}. In other words, the ``phase'' of the $y$ is unknown. Taking $\lambda=3$ as an example, $y$ can start with any of the following:  
    \begin{equation}\label{eq:all_phases}
\iftrue
    \begin{alignedat}{7}
    [&\eps, &~&  \eps, &~&\eps, &~& 1\!-\!\eps, &~& 1\!-\!\eps, &~& 1\!-\!\eps, &~& \ldots]\\
    [&\eps, &&  \eps, &&1\!-\!\eps, & & 1\!-\!\eps, & & 1\!-\!\eps, & &\eps, & & \ldots]\\
    [&\eps,     &&1\!-\!\eps,&& 1\!-\!\eps, & & 1\!-\!\eps, & &\eps, && \eps, & & \ldots]\\
    [&1\!-\!\eps,  && 1\!-\!\eps, & & 1\!-\!\eps, & &\eps, && \eps, & &\eps, && \ldots]\\
    [& 1\!-\!\eps, & & 1\!-\!\eps, & &\eps, && \eps, & &\eps, &&1\!-\!\eps,  && \ldots]\\
    [& 1\!-\!\eps, & &\eps, && \eps, & &\eps, &&1\!-\!\eps,  && 1\!-\!\eps, & & \ldots]\\
    \end{alignedat}
\else
    \begin{alignedat}{7}
    [&\epsilon, &~&  \epsilon, &~&\epsilon, &~& (1-\epsilon), &~& (1-\epsilon), &~& (1-\epsilon), &~& \ldots]\\
    [&\epsilon, &&  \epsilon, &&(1-\epsilon), & & (1-\epsilon), & & (1-\epsilon), & &\epsilon, & & \ldots]\\
    [&\epsilon,     &&(1-\epsilon),&& (1-\epsilon), & & (1-\epsilon), & &\epsilon, && \epsilon, & & \ldots]\\
    [&(1-\epsilon),  && (1-\epsilon), & & (1-\epsilon), & &\epsilon, && \epsilon, & &\epsilon, && \ldots]\\
    [& (1-\epsilon), & & (1-\epsilon), & &\epsilon, && \epsilon, & &\epsilon, &&(1-\epsilon),  && \ldots]\\
    [& (1-\epsilon), & &\epsilon, && \epsilon, & &\epsilon, &&(1-\epsilon),  && (1-\epsilon), & & \ldots]\\
    \end{alignedat}
\fi
    \end{equation}
\end{enumerate}
When the phase is unknown, it makes sense as a heuristic to first find a $\lambda$ to match the prompt. Take $\epsilon=0$ so that $y$ is binary, and enumerate all possible $y$'s of length $L$ with different phases (as in \cref{eq:all_phases}) and different values of $\lambda$. Pick the $y$ that has the fewest mismatches (smallest Hamming distance) with the binary prompt, note the best match by $y^*$ and the corresponding $\lambda$. Effectively, this $\lambda$ produces the smallest ``mismatch''  between the lower bias and a \ZERO token in the prompt, and between the higher bias and a \ONE token in the prompt.

Given the best matching binary $y^*$, we estimate $\eps$ as the proportion of incorrect matches with the prompt: $$\frac{1}{L}\sum_{i=1}^L [(1-y^*_i)(s_i) + y^*_i(1-s_i)]$$.

\subsubsection{Proportion Correct}

\Cref{fig:switching_supp} extends the results of \cref{fig:switching}(middle). Note that in this experiment we search through all possible binary prompts of length $\Lmax$, and for each prompt we compute the expectation \eqref{eq:log_loss_L} by enumerating all possible sequences of length $T$. For the Bayes predictor this can be done quite efficiently, but for neural predictors this is still quite computationally intensive. As such, we only sweep $T\in\{10, 30\}$. We observe a robust increasing pattern only on the Bayes predictor. For the neural predictors, there is a slight increasing trend only for $\eps=0.3$.

\begin{figure}[ht!]
    \centering
    \includegraphics[width=\textwidth]{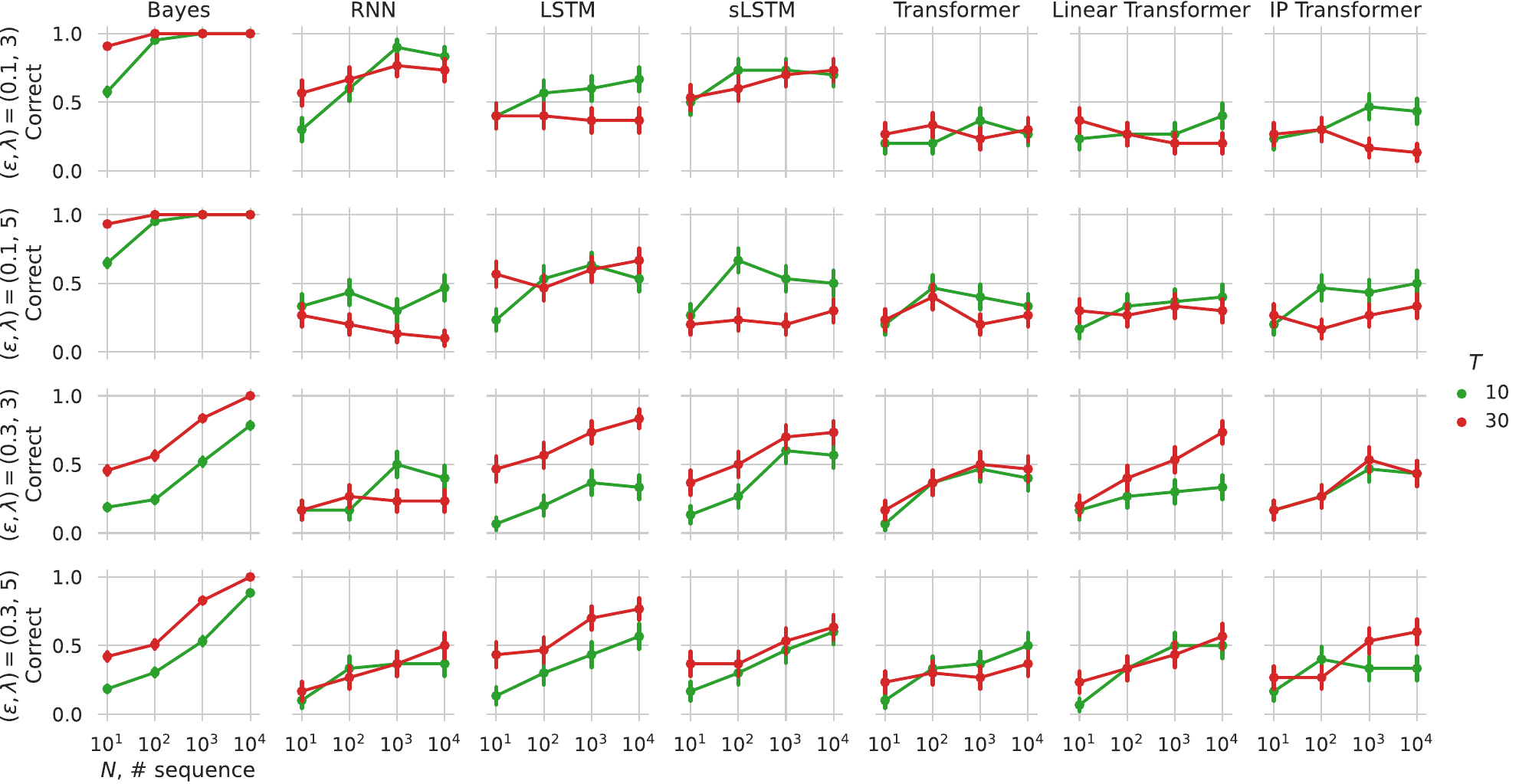}
    \caption{Same as \cref{fig:switching}(middle) but with more prompting setups, using $\Lmax=15$.}
    \label{fig:switching_supp}
\end{figure}

\clearpage

\begin{figure}[ht!]
    \centering
    \includegraphics[width=\textwidth]{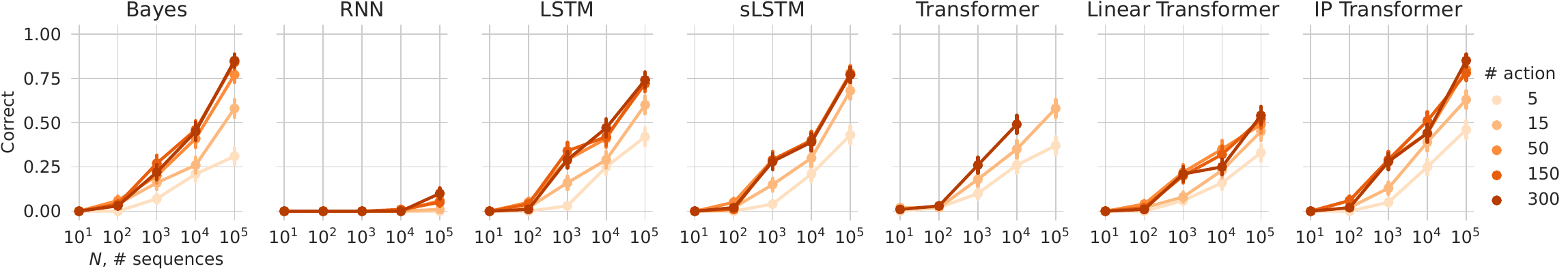}
    \caption{Same as \cref{fig:bandit}(lower left) but with more prompting setups, using $\Lmax=15$.}
    \label{fig:bandit_correct_supp}
\end{figure}

\subsection{Hierarchical DGs}\label{sec:hierarchical}

In the main text, we used CIB-DGs to demonstrate that, even in these seemingly intuitive prompt settings, the optimal prompts can show unintuitive behaviors. Natural languages exhibit more hierarchy and richer structures that break conditional independence. What would happen if we move towards more complex DGs? Here, we show that using more complex and hierarchical DGs only makes the theoretical optimal prompts even less interpretable, and thus the examples in the main text are sufficient for our purpose of being easy to understand. 

Consider a CIB-DG with $\tau$ sampled from a hierarchical model
\begin{equation}
    \begin{gathered}
        c\sim \Bernoulli(0.5),\quad p(\tau|c)=\begin{cases}
            \Beta(\tau; 10, 1) & \text{~for~} c=1\\
            \Beta(\tau; 1, 10) & \text{~for~} c=0
        \end{cases}, \quad p(x_t | \tau) = \Bernoulli(\tau) ~~\forall t\in\{1,\ldots,T\}
    \end{gathered}
\end{equation}
This simulates polarized latent factors underlying, for example, texts with extreme sentiments (e.g., Twitter US Airline Sentiment, Yelp Reviews). We use this as our pretraining DG $p$.

We use the familiar $q=\Bern(0.7)$ as the task DG (an IMD case), and find the theoretical optimal prompts under different values of $\Lmax$ but $T=100$. The results are shown in \cref{fig:hierarchical}(left).
The proportion of \ONEs in the theoretically optimal prompt starts from 1.0 and then oscillates around 0.6, before slowly converging towards the task $\tau=0.7$.
\begin{figure}[h]
    \centering
    \includegraphics[width=0.7\linewidth]{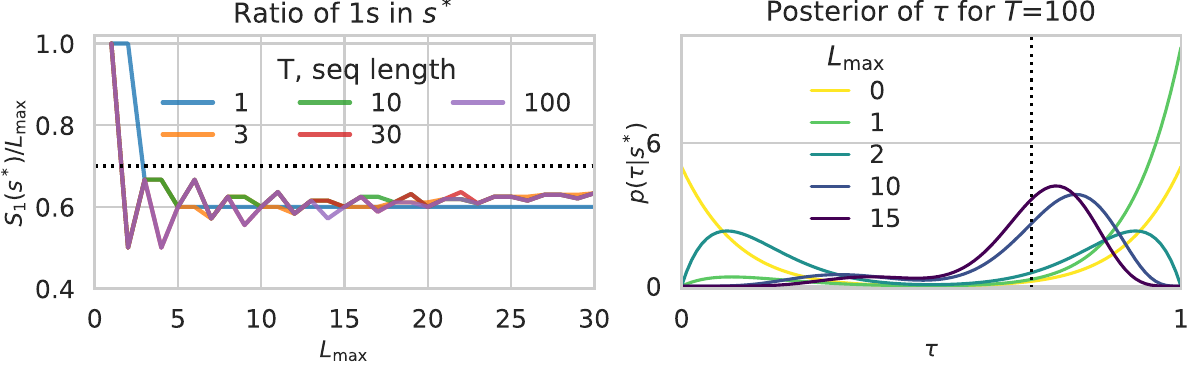}
    \caption{Results of the hierarchical DG experiments. Left, the proportion of \ONEs in the theoretically optimal prompt starts from 1.0 and then oscillates around 0.6, before slowly converging towards the task $\tau=0.7$. 
    Right, for the case of $T=100$, the posterior of $\tau$ given $s^*$. The vertical dotted line shows the true task $\tau=0.7$}
    \label{fig:hierarchical}
\end{figure}
Understanding why this arises is more difficult, because the Bayesian update involves the interaction between the prompt, the posterior mixing weights between the two components, and the posteriors of the two Beta components, so we provide a qualitative explanation. In \cref{fig:hierarchical}(right), we show the posterior distribution of the latent $\tau$ given the theoretical prompt, $p(\tau|s^*)$. The goal of the prompt is to make the probability mass around the task $\tau=0.7$ as high as possible. To do this, the optimal prompt needs to carefully balance the counts to achieve this, especially when $\Lmax$ is short. In this regime, the Beta components are still very extreme, so the mixing weights need to be more or less around 0.5. To shift this towards an intermediate value 0.7, both components need to become less extreme, which can be achieved by prompts with equal numbers of \ZEROs and \ONEs. Since $\tau=0.7>0.5$, we also end up having slightly more \ONEs. Having more \ONEs in the prompt will result in higher mixing weights for the rightmost component, biasing the posterior more towards 1.0. Due to the complexity of the problem, this explanation does not fully describe the posterior update process.

\subsection{Simple Topic Model}\label{sec:topic_model}

Next, consider a further complication of the hierarchical model above, which 
generates a binary ``document'' of 
 $K$ ``sentences''. Each sentence is a binary sequence drawn from two possible ``topics'' (two different Beta distributions).
\begin{equation}
    \begin{gathered}
        c\sim \Bernoulli(0.5),\quad p(\tau_k|c)=\begin{cases}
            \Beta(\tau_k; 2, 1) & \text{~for~} c=1\\
            \Beta(\tau_k; 1, 2) & \text{~for~} c=0
        \end{cases}~~\forall k\in\{1,\ldots,K\}.\\
        p(x_{k,t} | \tau) = \Bernoulli(\tau_k) ~~\forall (t,k)\in\{1,\ldots,T\}\times\{1,\ldots, K\}.
    \end{gathered}
\end{equation}
It is a simple topic model: two equally likely topics, each topic determines the probability of \ONEs, and a document with multiple sentences can have multiple topics. (c.f.\ Latent Dirichlet Allocation~\citep{blei2003latent} adds a distribution over topics for each document.)

The task DG is a document of either two or four sentences (sequence to generate) with lengths 2$T$ or 4$T$. For two sentences, the biases towards \ONEs for the sentences are [25\%, 70\%]. For four sentences, the task biases are [25\%, 70\%, 25\%, 70\%]. The prompt space we optimize over is two    binary ``sentences'' ($s^1$ and $s^2$), each with maximum length $\Lmax$. The prompt is expressible by two pairs of counts: $[(S_0(s^{1}), S_1(s^{1})), (S_0(s^{2}), S_1(s^{2}))]$ defined as [(\#\ZEROs, \#\ONEs) in sentence 1, (\#\ZEROs, \#\ONEs) in sentence 2].

The theoretical prompts are shown in \cref{fig:topic_model}. The pattern is in general very complicated. For longer documents (larger $T$), the sentences in the theoretical prompt become shorter, with a slight bias towards more zeros. For the 4-sentence document task, the two prompt sentences occupy different regions of the prompt space, which makes sense as there is more evidence for the two topics. Other than these, explaining why the theoretical prompts vary is very challenging.

\begin{figure}[h]
    \centering
    \includegraphics[width=0.7\textwidth]{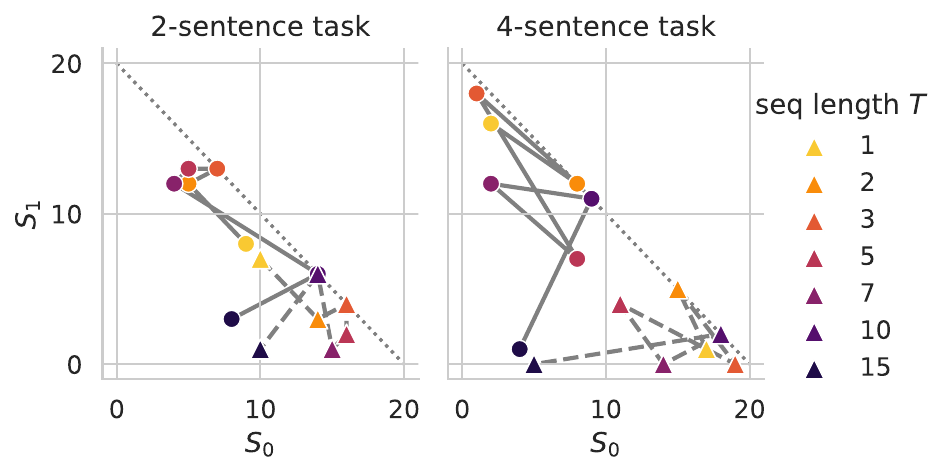}
    \caption{The theoretically optimal prompts for the topic pretraining DG. The solid line with circles represents one sentence, and the dashed line with triangles represent another sentence. The diagonal dotted line represents the maximum prompt length ($\Lmax=20$). The patterns are very challenging to explain.}
    \label{fig:topic_model}
\end{figure}

\section{BANDIT DECISION-MAKER}\label{sec:bandit_details}
\subsection{Skill Levels}\label{sec:bandit_details_skills}
In the main text, we specified that the skill parameter $\tau$ scales  the agent's counts of  the outcomes. Here, we detail how different skill levels are sampled to create the mixture of agents. 

To obtain agents with various skill levels, we modify the beliefs of the Thompson sampling (TS) agent by scaling the pseudocounts in the Beta posteriors of the reward probabilities by a skill level $\tau$. For skill level $\tau\in[0,1]$, for each arm $b\in\{\text{L}, \text{R}\}$, the posterior of the reward probability given past trajectories of $a_{1:t}\in\{\text{L},\text{R}\}^t$ and $r_{1:t}\in\{0, 1\}^t$ is

\begin{equation}\label{eq:bandit_arm_post}
v_{b,t,\tau}|a_{1:t}, r_{1:t},\tau = \Beta(1+\tau S_{b,1,t}, 1+\tau S_{b,0,t}),
\end{equation}
where
\begin{equation}\label{eq:bandit_counts}
\begin{aligned}
    S_{b,1,t}(a_{1:t}, r_{1:t}) &= \sum_{t'=1}^t \mathds{1}[a_{t'}=b] r_{t'}\,,\\
    S_{b,0,t}(a_{1:t}, r_{1:t}) &= \sum_{t'=1}^t \mathds{1}[a_{t'}=b] (1-r_{t'})\,.\\
\end{aligned}
\end{equation}
For each action, the agent first samples the reward from the posterior, and then chooses the action of the more rewarding arm. 

\cref{fig:TS_returns}(left) shows that the skill level affects the return most for lower values. As such, if we uniformly sample the skill level between 0 and 1, then there will be a lot of agents performing close to the optimal TS agent.

To avoid this, we define the skill $\tau=u^k$ where $u\sim \text{Uniform}([0,1])$ and $k>0$. We simulate the agent for different values of $k$ and $u$ for 300 actions repeated for 100k different random seeds, and show the returns as a function of $u$ in \cref{fig:TS_returns}(middle).
Through change of variable, we numerically estimate the distribution of the return for a given value of $k$ \cref{fig:TS_returns}(right). We pick $k=4$ throughout all bandit experiments, so that there is a mixture of agents across all levels.

There is still a significant proportion of performant agents, which should make prompting easier. We could have designed the transformation from $u$ to $\tau$ to be more complicated to induce a more uniform return distribution, but this is not essential to demonstrate our points.
\begin{figure}
    \centering
    \includegraphics[width=\textwidth]{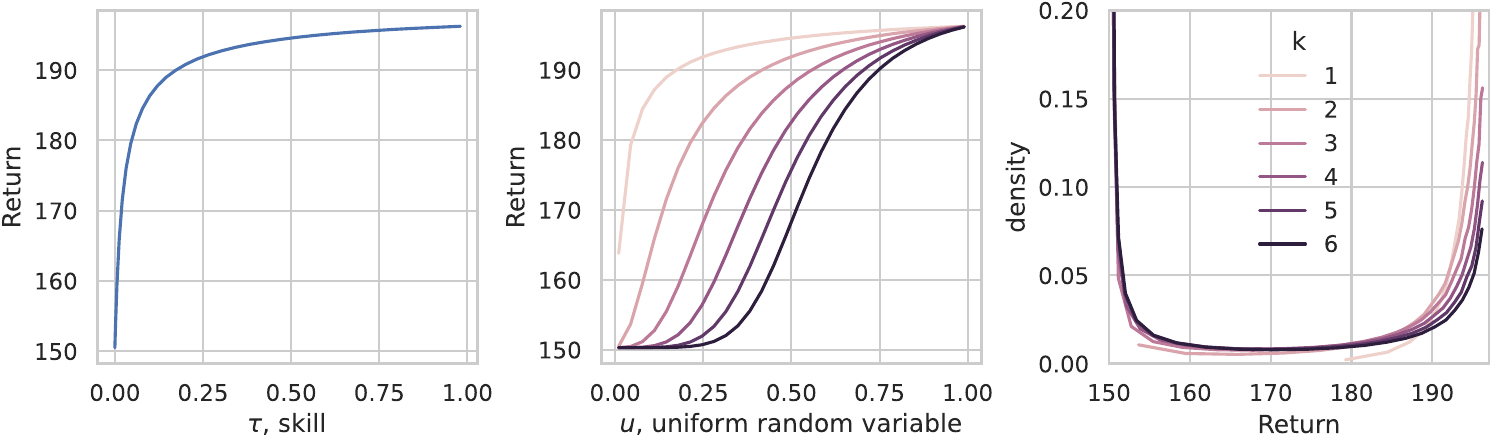}
    \caption{Relationship between the return and the uniform random variable for different values of the power index $k$.}
    \label{fig:TS_returns}
\end{figure}

\begin{algorithm}
\caption{Pretraining Trajectory Generation for the bandit task.\label{alg:bandit_pretrain}}
\begin{algorithmic}
\REQUIRE: $p(a_{t+1}|h_t,\tau)$, the TS-like agent that acts according to posterior reward probability samples with scaled pseudocounts \eqref{eq:bandit_arm_post} , given history $h_t:=(a_i,r_i)_{i=1}^t$.
\REQUIRE A Bernoulli two-arm bandit environment with a uniform distribution of reward probabilities on each arm.

\STATE Sample a skill level $\tau \sim \text{Uniform}(0, 1)$.

\COMMENT{Generate the prompt segment}
\STATE Sample bandit reward probabilities $r_\text{L}$ and $r_\text{R}$.
\STATE Let $x_1$ be an empty sequence.
\FOR{$i = 1$ to $8$}
    \STATE Sample agent action $a_i \sim p(a|h_i, \tau)$.
    \STATE Observe reward $r_i$ from the bandit given $a_i$, $r_\text{L}$ and $r_\text{R}$.
    \STATE Append $(a_i, r_i)$ to $x_1$.
\ENDFOR

\COMMENT{Generate the rollout segment}
\STATE Sample bandit reward probabilities $r_L$ and $r_R$.
\STATE Empty history $h$
\FOR{$i = 1$ to $300$}
    \STATE $a_i \sim p(a|h_i, \tau)$ where $h_i$ is the history up to step $i$.
    \STATE Observe reward $r_i$ from the bandit given $a_i$, $r_L$ and $r_R$.
    \STATE Append $(a_i, r_i)$ to $x_2$.
\ENDFOR

\STATE Concatenate $x = [x_1, \texttt{/}, x_2]$, where $\texttt{/}$ is a separator token.

\ENSURE $x$.
\end{algorithmic}
\end{algorithm}

\subsection{Approximate Bayes Predictor}\label{sec:bandit_details_bayes}
The key quantity required for predicting the action at each time step, marginalizing over all skill levels, is to compute the probability
\begin{align*}
\mathbb{P}(a_{t+1}=\text{L} | a_{1:t}, r_{1:t})
&= \int_0^1\int_0^1\mathds{1}[\vL > \vR]p(\vL, \vR| a_{1:t}, r_{1:t}) \ud \vL \ud \vR\\
&= \int_0^1\int_0^1\int_0^1
    \mathds{1}[\vL > \vR]p(\vL, \vR| a_{1:t}, r_{1:t}, \tau)
    p(\tau|a_{1:t}, r_{1:t}) \ud \vL \ud \vR \ud \tau\\
&= \int_0^1\int_0^1\int_0^1
    \mathds{1}[\vL > \vR]p(\vR| a_{1:t}, r_{1:t}, \tau)p(\vL| a_{1:t}, r_{1:t}, \tau)
    p(\tau|a_{1:t}, r_{1:t}) \ud \vL \ud \vR \ud \tau\\
&= \int_0^1\mathbb{P}(v_{\text{L},t,\tau} > v_{\text{R},t,\tau})p(\tau|a_{1:t}, r_{1:t})  \ud \tau,
\end{align*}
where the third equality uses conditional independence between reward probabilities given history and $\tau$. To approximate the last integral, we discretize the support at 1000 evenly spaced grid points. For each value of $\tau$ on the grid, we now need to compute the probability that one Beta random variable is greater than another. This does not have a closed  form solution, but we found the technique by~\citet{cook2012fast} to be fast and accurate compared to a Monte Carlo approximation. Finally, to compute $p(\tau|a_{1:t}, r_{1:t})$, we use the following recursion.
\begin{align}\label{eq:bandit_post_tau}
    p(\tau|a_{1:t}, r_{1:t}) \propto  p(\tau|a_{1:t-1}, r_{1:t-1}) p(a_t | a_{1:t-1}, r_{1:t-1}, \tau),
\end{align}
which is also approximated on the evenly spaced grid for $\tau$. 

\subsection{Theoretically Optimal Prompts}\label{sec:bandit_details_optimal_prompts}

The theoretically optimal prompt on the Bayes predictor above are found using the following steps. We first estimate the return using $10^5$ Monte Carlo rollouts for each of the $2^{16}$ prompts, using the same sequence of random seeds for actions selection and reward outcomes.
We then take the best 20 prompts with the top 20 estimated returns, and re-evaluate using $10^7$ Monte Carlo rollouts, using the same seed sequence for actions and rewards as above.
We sort the prompts according to the return evaluated on $10^5$ rollouts, and plot the return against the prompt rank in \cref{fig:bandit_prompt_index}. In this case, we can see the loss ``landscape'' is very sharp, indicating that the optimal prompts should be reliably identified. 

\begin{figure}[ht]
    \centering
    \includegraphics[width=0.8\textwidth]{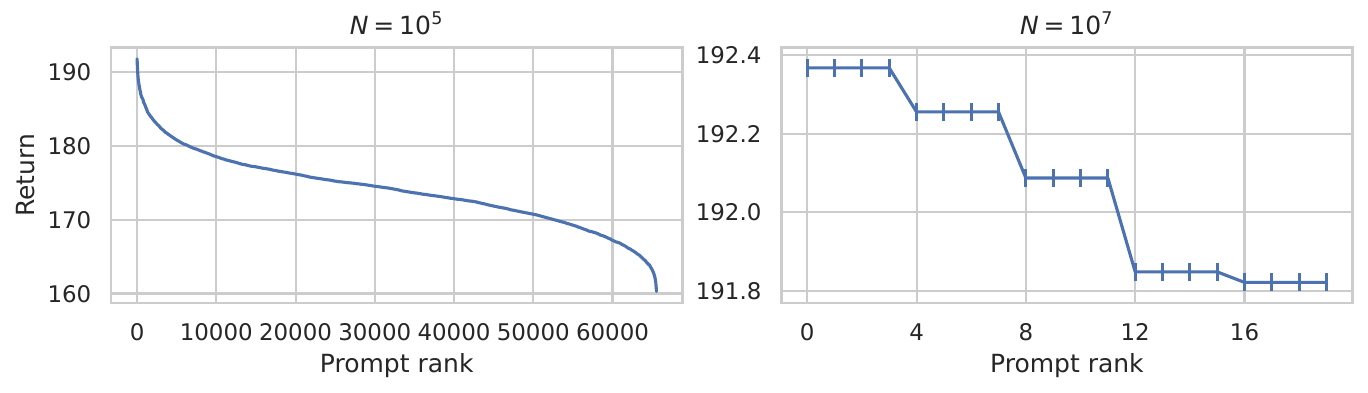}
    \caption{
    Estimated return of each prompt by Monte Carlo against the prompt rank. The prompts are sorted using estimated return under $N=10^5$ rollout trajectories. 
    \label{fig:bandit_prompt_index}
    }
\end{figure}

\paragraph{Why does the prompt look strange?}
These theoretically optimal prompts shown in \cref{fig:bandit} share the same pattern: try one arm and get no reward, then stick to the other arm and always get rewarded, except that the last reward may be missing. It may be striking to see that sticking to the first rewarding arm is the optimal strategy. 
The explanation of such persistence relies on how different skill levels are generated by the pretraining distribution: the reward pattern indicates that the second chosen arm is highly rewarding, and persistence to the more rewarding arm implies that the skill factor $\tau$ is likely large, which is desirable as it promotes more TS-like behavior. In addition, the first unrewarded outcome induces posterior Beta$(1, 1+\tau)$ over the reward probability on this arm, which is biased towards 0. The constantly rewarding streak from the other arm induces a posterior Beta$(1+7\tau,1)$ . 
Keep choosing the rewarding arm then indicates that $\tau$ is large. Essentially, a large \emph{reward gap} between the two Beta distributions helps the predictor identify $\tau$. The subtle interactions between the latent factor that we intend to manipulate $\tau$ and other latent factors (reward probabilities) resulted in the surprising prompts being in fact optimal.

Another large reward gap can be induced by other reward patterns, such as Beta$(1, 1+4\tau)$
and Beta$(1+4\tau, 1)$, which is brought by 4 unrewarded outcomes from one arm, and 4 rewarded outcomes from the other. However, choosing a previously unrewarded arm is unlikely to happen for an agent with large $\tau$, so such reward gap is not as effective as the one above in shifting the posterior of $\tau$ towards 1. 

\paragraph{Multiple optimal prompts.} The four equivalent optimal prompts are because of a simple symmetry between the left and right arm, and the fact that the posterior \eqref{eq:bandit_post_tau} does not depend on the last reward.

\subsection{Empirically Optimal Prompts}\label{sec:bandit_interp}

For each empirically optimal prompt, we estimate the return in the rollout segment using 300 actions and $N=10^6$ sequences. The results in \cref{fig:bandit_return} suggest that the performances of these prompts are very close to the ceiling on the Bayes predictor, even when optimizing using  $1000$  rollout trajectories with $50$ actions and rewards.

\begin{figure}[ht]
    \centering
    \includegraphics[width=\textwidth]{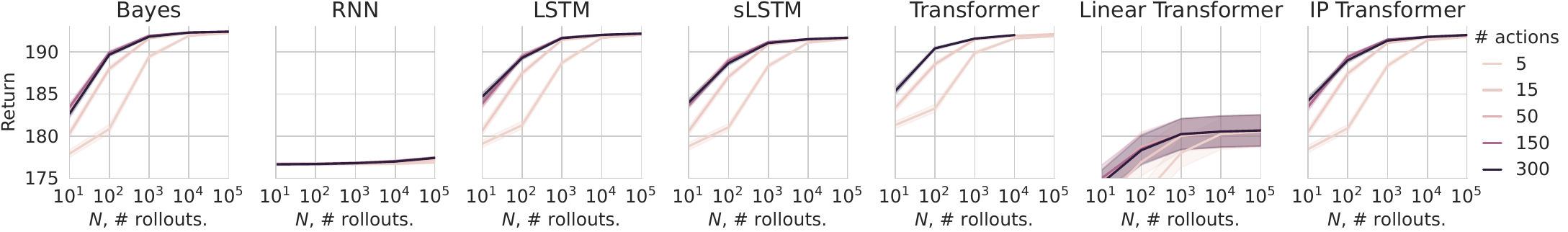}
    \caption{The estimated rollout return for empirically optimized prompts. Error bars show 1 SEM from 100 seeds.}
    \label{fig:bandit_return}
\end{figure}

However, the prompts may differ and cause inconsistent interpretations.
In order to interpret and visualize the empirically optimal prompts, we map each prompt to the Win-Stay/Lose-Shift (WSLS) probabilities which have been used in psychology to analyze human behavior on playing bandits~\citep{bruner1957perceptual,nowak1993strategy}. Specifically, WS is the probability that the previous action is repeated when receiving a reward, and LS is the probability of shifting to the other arm after an unrewarded outcome. 
For the theoretically optimal prompts, these probabilities are both 1.0. 
We compute the WSLS of the empirically optimal prompts from all predictors and show the distribution in \cref{fig:bandit_interp}. It shows that the empirical prompts may support multiple likely values for WSLS unless $N=100000$, in which case the WSLS is more concentrated at (1.0, 1.0). Therefore, the suboptimal prompts can lead to different interpretations, under the WSLS metric.

\begin{figure}[ht]
    \centering
    \includegraphics[width=\textwidth]{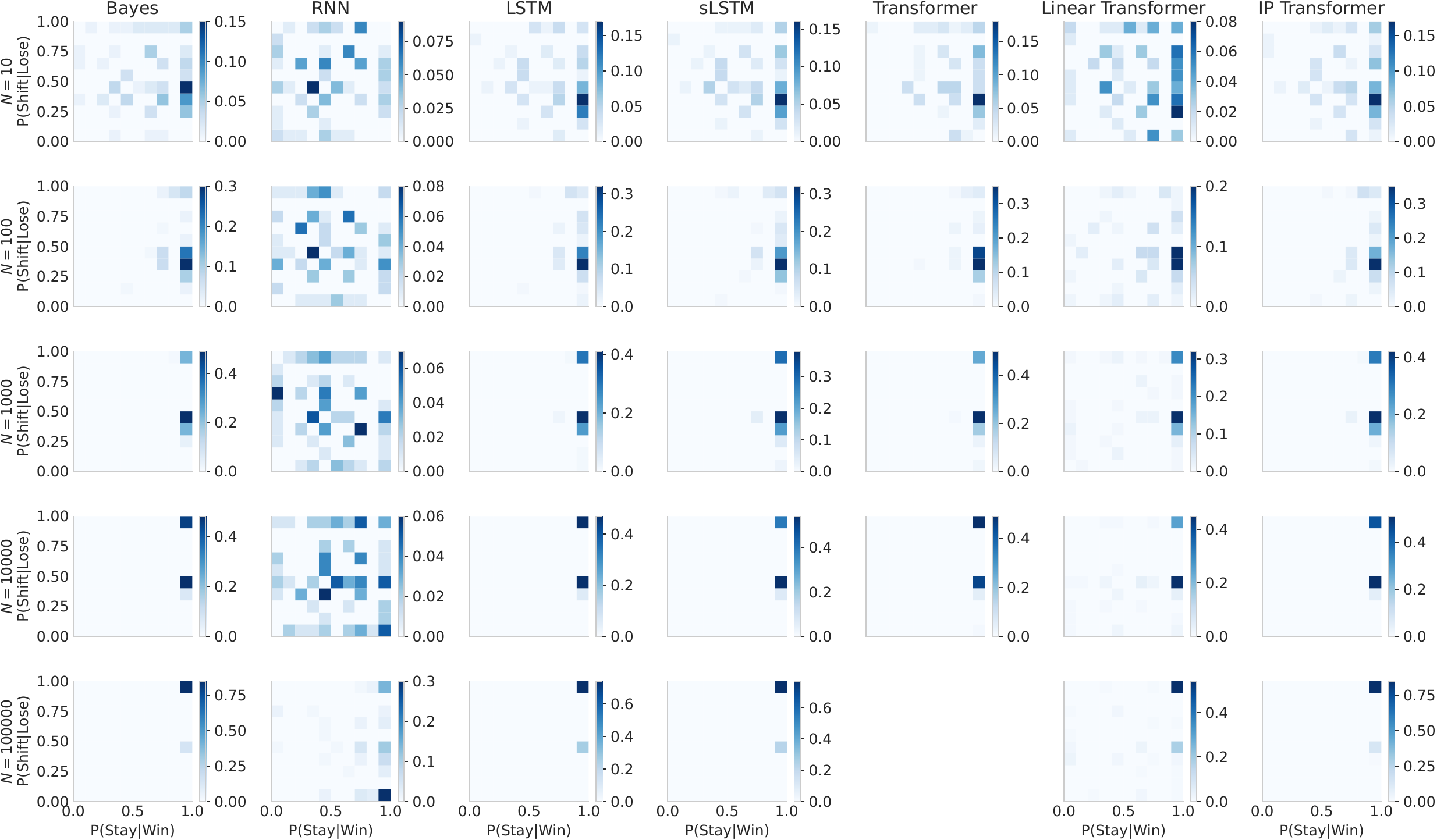}
    \caption{
    The distribution of Win-Stay/Lose-Shift for all empirical prompts for $300$ rollout actions and rewards.
    }
    \label{fig:bandit_interp}
\end{figure}

\subsection{Comparing Optimal and Typical Prompts}\label{sec:bandit_typical_prompt}

As in the switching problem in \cref{sec:switching}, we report the performance of statistically typical prompts in \cref{fig:bandit_typical_prompt}. In terms of expected total return, the optimal prompt is equivalent to typical prompts of length 60, roughly 4 times the length of the optimal prompt. This is consistent with the instantaneous regret. Note that, in the context of the bandit problem class, using longer expert demonstrations not only takes up more context window in a model, but also induces higher costs to the expert.

\begin{figure}[h]
    \centering
    \includegraphics[width=0.8\textwidth]{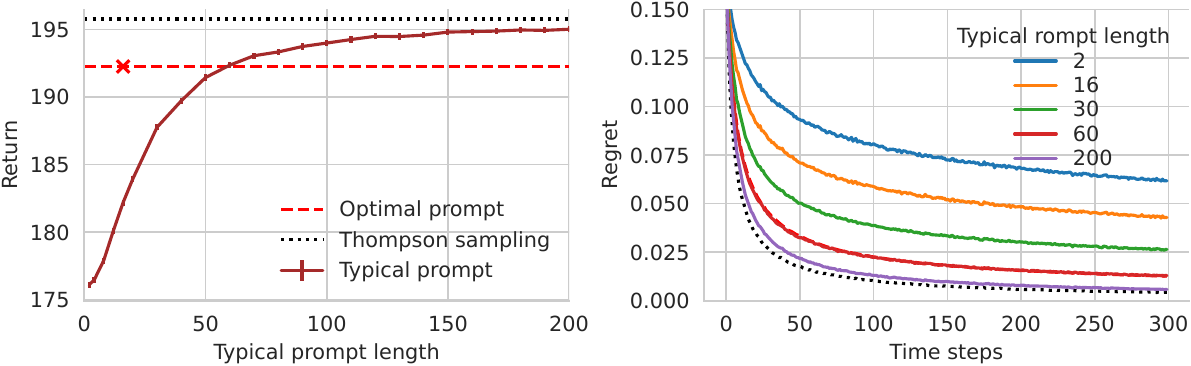}
    \caption{
    Total reward and regret of typical prompts, compared with Thompson sampling agent and the optimal prompt applied to the Bayes predictor. Estimated using $100\,000$ rollouts and typical prompts. Error bars on the left panel show 1 SEM.
    \label{fig:bandit_typical_prompt}
    }
\end{figure}

\clearpage

\section{REAL LLMS}\label{sec:real_llms_details}

\subsection{Experiment Design}
We ran experiments on open-source GPT-2~\citep[][MIT]{radford2019language} and Gemma-3~\citep[][Gemma]{team2025gemma} (Gemma3-1B-PT) and IMDB movie reviews~\citep[][non-commercial]{imdb}, using prompts generated by Gemini Pro 2.5~\citep{comanici2025gemini}. The choice of this dataset is motivated by the relatively small semantic space (positive or negative sentiments) and yet sufficient linguistic diversity in movie reviews. The goal is to explore behaviors of practical LLMs using our experimental framework, rather than to extrapolate our predictions from simplified DGs to the natural language domain.

Following the framework in \cref{sec:CIB-DG_methods}, we control the prompt setup (batch size of the reviews $N$, the length of prompt $L$, and the maximum review length at which we truncate the original reviews, similar to $T$) and the sentiment $\tau$ of the review (sequence-to-predict) and see how they affect the best prompt found under each prompt setup (referred to as optimized prompt). Inspired from the findings on binary sequences, we explore the following key features:
\begin{enumerate}
    \setlength\itemsep{-0.2em}
    \item the reliability (proportion correct) with which we can identify the optimal prompts (defined below);
    \item whether the sentiment of the prompt agrees with the sentiment of the review;
    \item the length of the optimized prompts.
\end{enumerate}

\paragraph{Finetuning.} We first finetune the pretrained GPT-2 and Gemma-3 model (no instruction tuning) on the IMDB training set. The data order has been shuffled under 50 different training random seeds. We take the snapshot with the lowest loss on the validation set. %

\paragraph{Prompt generation.} Instead of running any specific prompt optimization, we generate prompts by asking a more advanced LLM (Gemini 2.5 Pro). The prompts are descriptions of movies with positive and negative sentiment, suitable for pretrained models. We obtain prompts of various lengths (short, 1 adjective; short, \~ 5 words; and long, \~ 20 words) and of positive/negative sentiments, by giving these descriptions as prompt, together with instructions that encourage diversity and avoid repetition.  We assign a sentiment value of $+1.0$ to positive prompts, and $-1.0$ to negative prompts.

We evaluate the perplexity (results on log-loss are similar) on the full validation dataset at maximum length $T=512$, averaged over the prompts of each type. We also asked Gemini 2.5 Pro to generate single neutral adjectives as a control.  The results in \cref{fig:real_llm_logliks} shows that the single adjectives modulated the perplexity as expected. However, surprisingly, longer prompts resulted in higher average perplexity than shorter prompts for both positive and negative reviews. We later examine the optimal perplexity. 
\begin{figure}
    \centering
    \includegraphics[height=0.28\textheight]{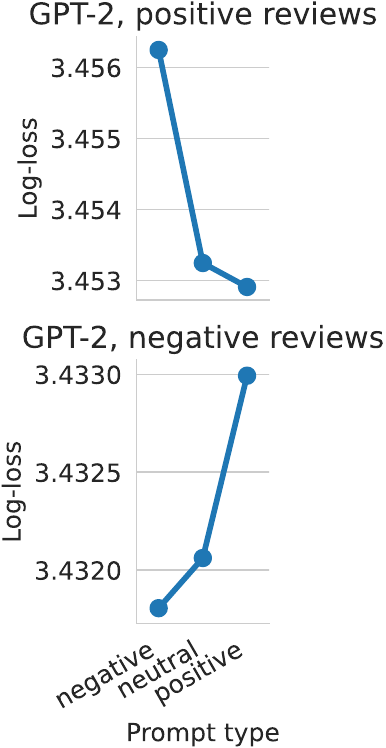}
    \includegraphics[height=0.28\textheight]{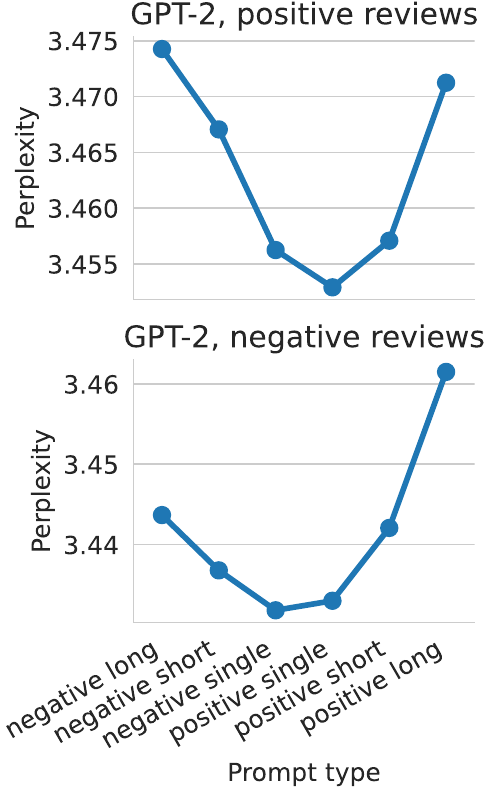}\hfill
    \includegraphics[height=0.28\textheight]{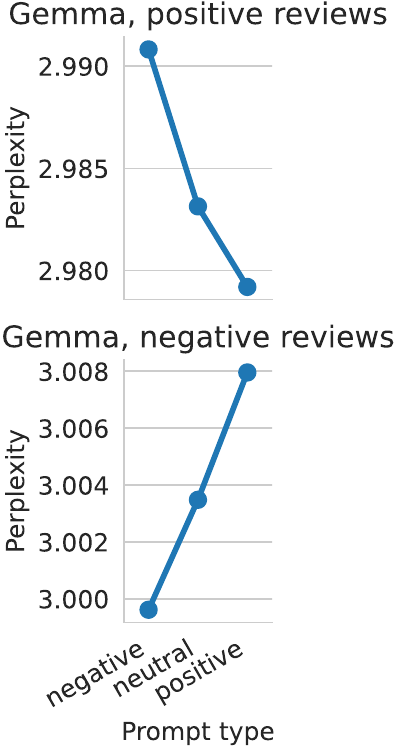}
    \includegraphics[height=0.28\textheight]{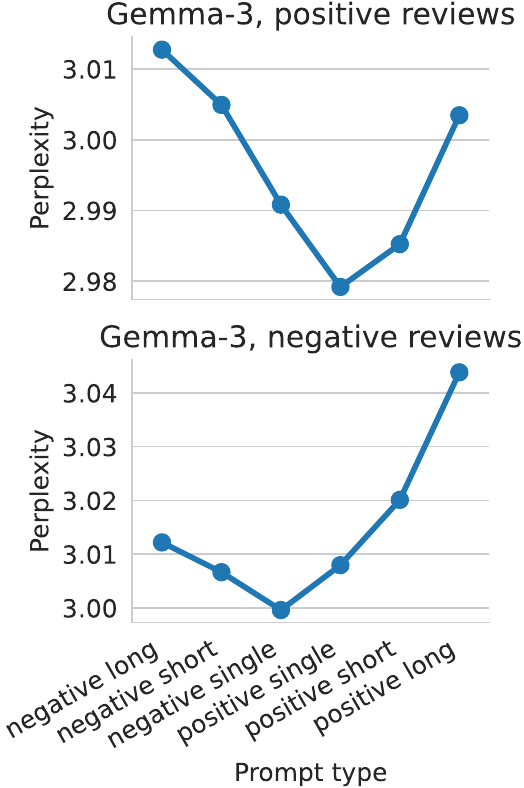}
    \caption{Perplexity of all positive and negative reviews, averaged over all prompts within each type (sentiments and lengths). The 1-sem error bars are not visible.}
    \label{fig:real_llm_logliks}
\end{figure}

\paragraph{Optimal prompt.}
For this experiment only, we choose the best 3 prompts from the prompt setup $N=25000$ and $T=512$ from all 50 finetuned models, and take their union as our set of optimal prompts. They are listed in \cref{tab:real_llms_opt_prompts}. They are mostly consistent with the target sentiment, except for the prompt ``Already forgetting about it.'' which was optimal for both sentiments on GPT-2.

\begin{table}[h]
\centering
\caption{The optimal prompts for each model and target sentiment of the reviews.}
\begin{tabular}{l|l|l}
         & \multicolumn{1}{c|}{GPT-2}                                                                                                            & \multicolumn{1}{c}{Gemma-3}                                                                                                \\ \hline
Positive & \begin{tabular}[c]{@{}l@{}}Already forgetting about it.\\ Perfect.\\ Foremost.\\ A powerful recommendation from me.\end{tabular}     & \begin{tabular}[c]{@{}l@{}}An easy 10 out of 10.\\ Recommended.\\ A+.\\ Fantastic.\\ Definitely.\\ Excellent.\end{tabular} \\
\hline
Negative & \begin{tabular}[c]{@{}l@{}}Already forgetting about it.\\ Could not have been worse.\\ Remiss.\\ Done.\\ Inconsiderate.\end{tabular} & \begin{tabular}[c]{@{}l@{}}Regrettable.\\ Negative.\\ An easy 1 out of 10.\\ Deficient.\\ Hooey.\end{tabular}             
\end{tabular}
\label{tab:real_llms_opt_prompts}
\end{table}

\subsection{Experiment Design}
The results are shown in \cref{fig:real_llms}. First, the sentiment of an optimized prompt depends on the LLM used and the target sentiment of the reviews. For GPT-2, increasing the dataset size makes the prompts more negative when prompting for negative reviews, but this is not the case when prompting for positive reviews. For Gemma-3, the prompt sentiment mostly agrees with the target sentiment, but when the data size $N$ is small, the sentiment can be less consistent for longer sequences $L$.

\begin{figure}[ht]
    \centering
    \begin{minipage}{0.3\textwidth}
    \includegraphics[width=\textwidth]{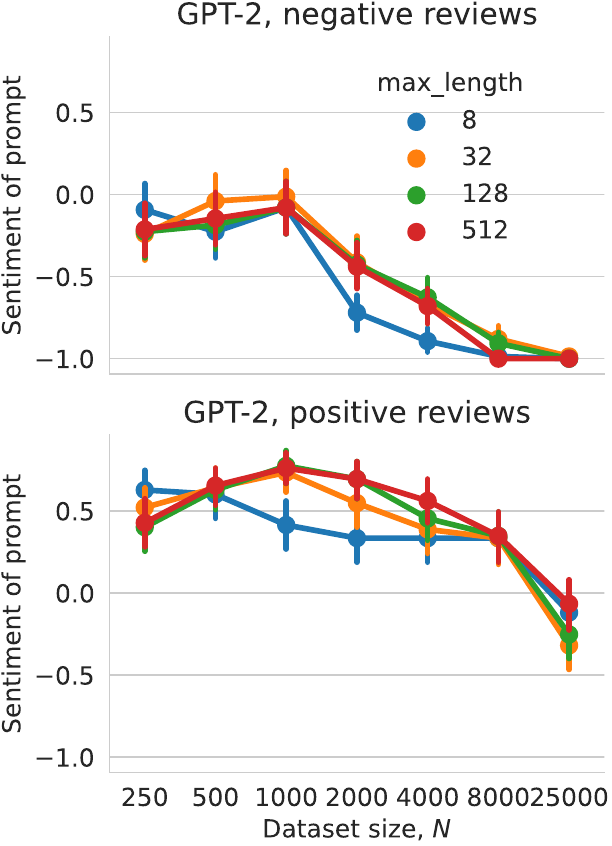}
    \end{minipage}\hfill
    \begin{minipage}{0.3\textwidth}
    \includegraphics[width=\textwidth]{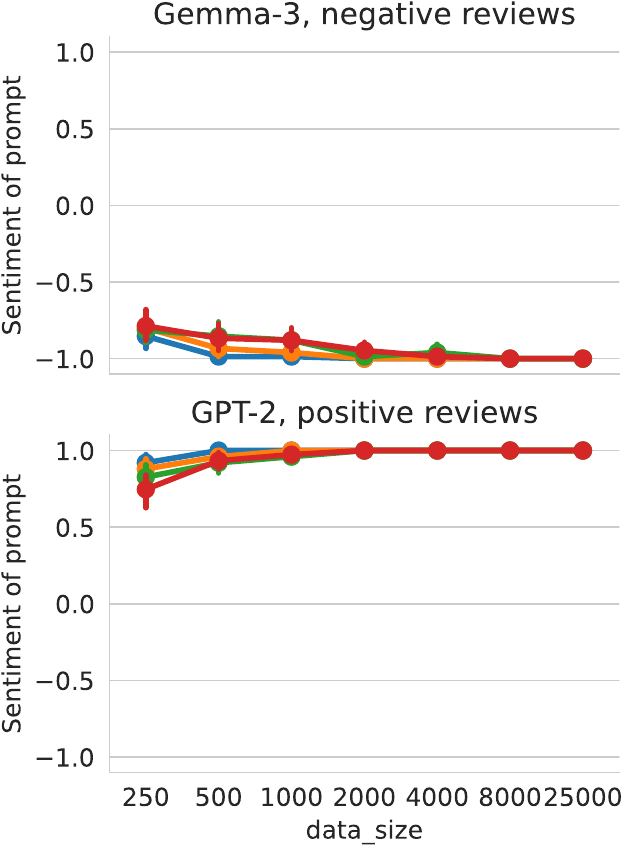}
    \end{minipage}\hfill
    \begin{minipage}{0.3\textwidth}
    \includegraphics[width=\textwidth]{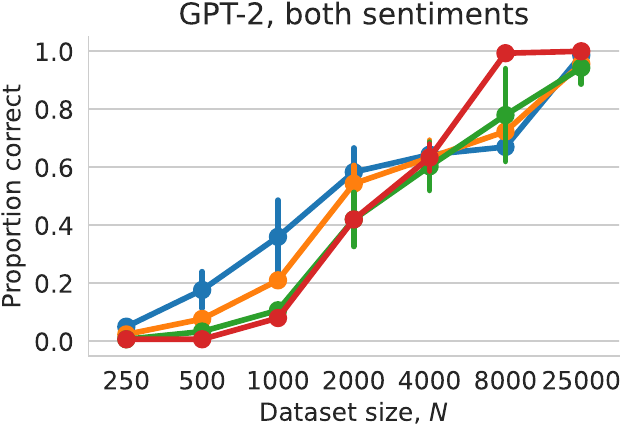}\\
    \includegraphics[width=\textwidth]{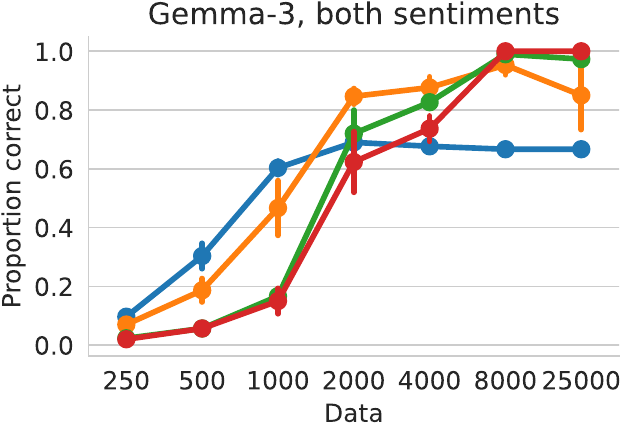}
    \end{minipage}
    \caption{Results of applying the analysis framework to LLMs. Colors show maximum review length. Error bars show 1 sem.}
    \label{fig:real_llms}
\end{figure}

Second, to reach 50\% chance of identifying the optimal prompt, one needs $N$ between 1000-2000 (\cref{fig:real_llms} right), much larger than the typically used 200~\citep{deng2022rlprompt,pryzant2023automatic,fernando2023promptbreeder,hao2024optimizing}. For dataset size $N$ between 250 and 3000, using longer sequences actually made it less likely to find the optimal prompt than using longer prompts.
These results suggest that the optimized prompts are less likely to be optimal and may not always reflect the sentiment of the review when using a small optimization batch size $N$.

Lastly, we show the average length of the optimized prompts in \cref{fig:real_llms_opt_prompt_length}. Overall, the lengths of optimized prompts tend to be short, averaging around 3 words.  This is consistent with other experimental findings that shorter prompts can be more effective~\citep{bhargava2023s,renze2024benefits,kusano2024longer,wang2025towards,lester2021power}. The length grows slightly with increasing $N$, but the effect of maximum review length is less consistent. 

\begin{figure}
    \centering
    \includegraphics[width=0.49\linewidth]{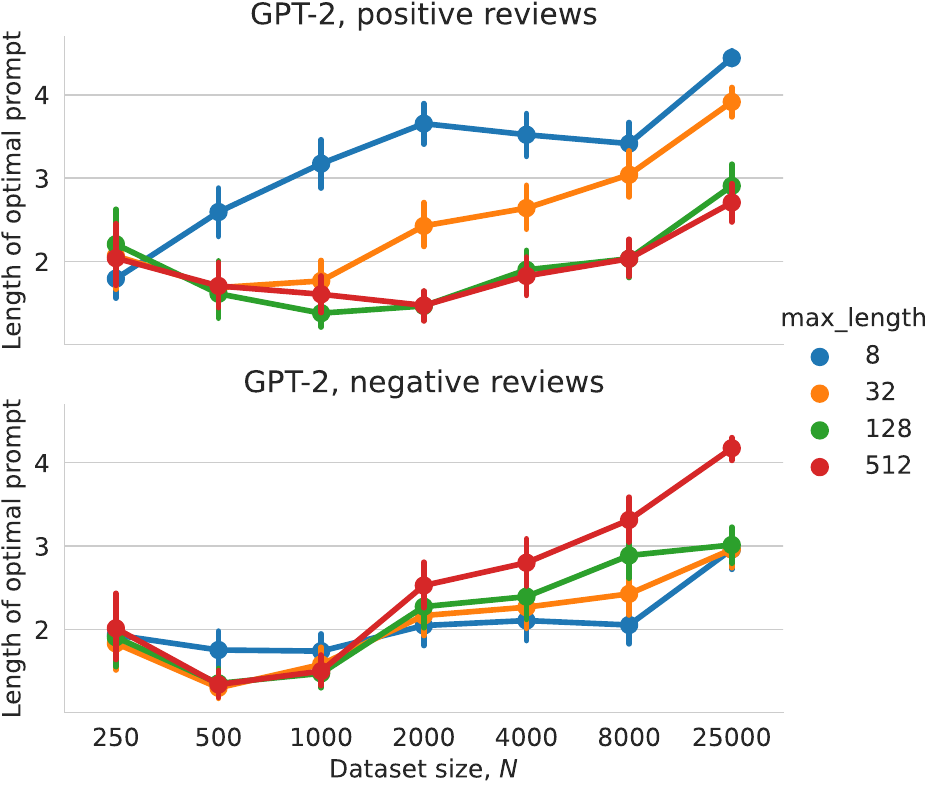}
    \includegraphics[width=0.49\linewidth]{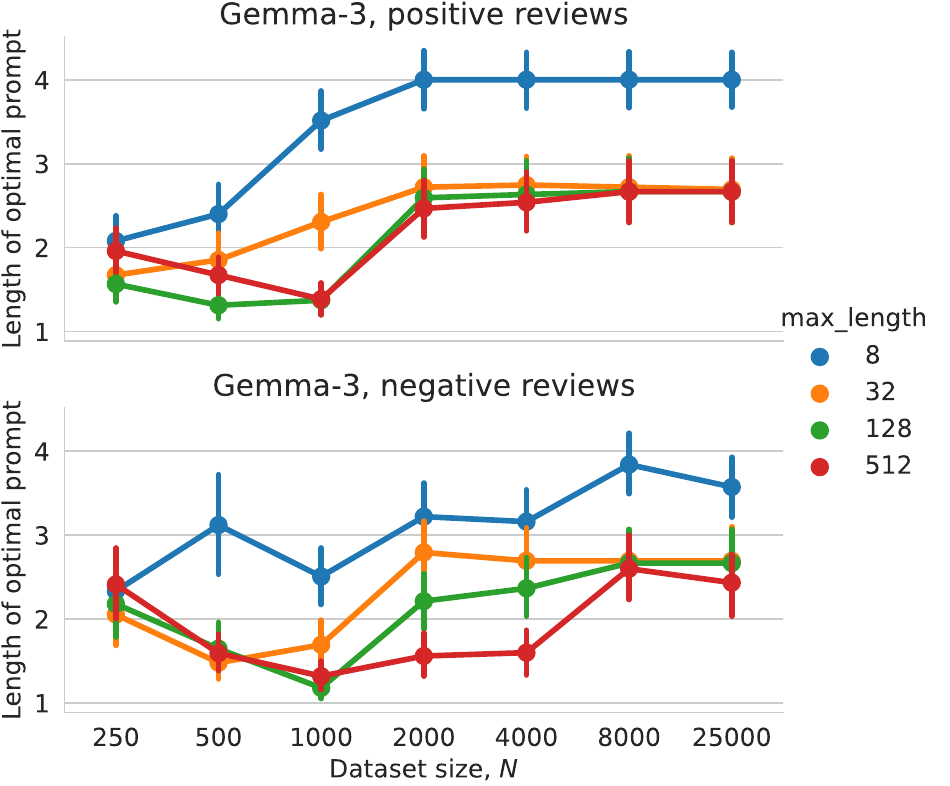}
    \caption{The average length of the optimized prompts. Colors show different maximum review length. Error bars show 1 sem.}
    \label{fig:real_llms_opt_prompt_length}
\end{figure}

\clearpage

\section{DETAILED DISCUSSIONS}

\subsection{Practical Guidance}\label{sec:practical_guidance}

Our results can lead to the following suggestions for practitioners:
\begin{enumerate}
    \item Our results show that both the support of the distribution (e.g., categories, data sources) and their proportions matter. For LLM owners, providing at least some information about the pretraining distribution can be useful for users to prompt.
    \item Try to determine whether the task is likely IMD or OOMD; this can affect the interpretability and performance of optimal prompts.
    \item When prompting an agent for in-context learning, bear in mind that some apparently idiosyncratic prompts can induce expert behavior better than expert demonstrations. 
    \item The common batch size (200) used in prompt optimization work is not enough to produce reliable prompts across different runs of the algorithm. Increasing the batch size makes the prompt more reliable and more effective, and more interpretable for IMD cases.
    \item Based on \cref{sec:ood_prompting}, the finding that shorter prompts may be better could be due to the increasing search space, which led to insufficient exploration. Spending more resources searching in this space may result in more effective prompts that are longer, but can be less cost-effective.
    \item If optimal performance is attainable by prompting, then the task is likely IMD, and the prompt could be more interpretable; if the task is OOMD, prompting may still bring some benefits, but the prompt can be less interpretable.
    \item The patterns in the optimal prompt of the bandit problem suggest that the most effective prompt that controls one latent factor, such as the skill level, assumes a trivial task setting represented by other latent factors: two arms with a huge reward gap. Future prompt engineering endeavors could consider choosing demonstrations corresponding to a variety of settings in other latent factors of the problem class, including cases that would be considered simple or trivial.

\end{enumerate}

\subsection{Relation to Previous Findings on LLM}\label{sec:related_findings}

First, the most effective prompt may not necessarily be typical samples from either the pretraining or the task distribution. 
This is because the optimal prompts serve to overcome and exploit the biases built into the predictor at pretraining to perform a task.
The fact that the pretrained distribution is usually unknown makes it very difficult to identify or to interpret optimal prompts based only on the task.
Previous work on LLMs found that the examples that make in-context learning work well can be counter-intuitive, and may even include ``wrong'' examples~\citep{min2022rethinking,yang2024prompts}. The influence of word frequencies on task performance has also been illustrated by~\citet{razeghi2022impact} and~\citet{wei2021frequency}. In our work, we additionally analyzed the effects of having discrete and continuous latent causes in the data distribution~\citep{xieexplanation,jiang2023latent}, and whether or not the pretraining distribution includes the task in terms of the latent cause distribution (IMD \emph{versus} OOMD). In particular, we showed that prompts under OOMD tasks can be harder to interpret.

Second, typical and thus interpretable samples from the task distribution may require a longer length to induce a good performance than optimal prompts do. This finding is consistent with~\citet{bhargava2023s,renze2024benefits,kusano2024longer,wang2025towards,lester2021power} who discovered that shorter/fewer \emph{selected} (not necessarily typical) demonstrations can work surprisingly well compared to longer prompts or more demonstrations.
On the bandit decision-maker experiment, heuristic prompts can also lead to unexpectedly good or bad results compared to sample trajectories from an expert agent, which is reminiscent of common experience interacting with LLMs~\citep{zamfirescu2023johnny,khurana2024and}. In our bandit experiment, again, knowing the pretraining distribution may help explain this mystery.

Third, with the optimal prompts obtained on Bayes predictors as ground-truth, we have shown that the empirically optimized prompts based on a finite dataset may or may not converge to the optimal prompts, even when using a large task dataset with long sequences. The unreliability of finding an optimal prompt means that the empirically optimized prompts on LLMs may vary substantially and unpredictably across different runs, predictors, prompt lengths, batch sizes, even for a fixed pretrained model, making interpretation more challenging. Previous works on prompt optimization methods (e.g.,~\citep{deng2022rlprompt,pryzant2023automatic,fernando2023promptbreeder}) typically do not compare or report how their methods perform as these hyperparameters vary. In particular, the batch size plays a significant role on proportion correct of empirically optimized prompts, and quite often it needs to be very large to ensure high chances of reaching the optimal prompt. 

\subsection{Limitations and Future Work}\label{sec:limitations}

\paragraph{Binary tokens vs text tokens.}

Binary tokens are a drastic simplification from text tokens.
While previous works on different data domains tries to understand the implications of different alphabet sizes~\citep{rajaraman2024toward,ieremie2024protein,gagie2012note,heurtel2024compression}, they do not directly predict how our results on optimal prompts are affected by alphabet size.

Using sequences over binary tokens does not change the nature of the sequence prediction task, but the token space is much more limited than text tokens. A single token contains much more information, richer semantics, and reflect more complex structure. Although generalizing our data generators from Bernoulli to categorical distributions is straightforward, the latent factor also becomes complex, i.e. the number of biases grows with the alphabet size. Our results on optimal prompt do not explicitly depend on dimensionality and should still hold qualitatively in those regimes, although the interpretations may change, especially when there are additional hierarchical structures behind text tokens.

Binary sequence provides a neat and intuitive data domain to demonstrate how the Bayes meta-learning theory accounts for unintuitive prompts, but it is not critical to the core theory---one can use more complex data generators (with deeper hierarchies, nonstationarity, $n$-ary tokens) and still have access to closed-form Bayes predictors; in addition to those already complex examples in \cref{sec:more_complex_DGs}. In particular, thanks to using binary sequences, we are able to summarize a sequence of tokens into informative statistics, revealing rich features in the empirically optimal prompts and even their distributions. In addition, our experimental design and some metrics extend to natural languages and LLM scales (see our new LLM experiments below). Some summary results (e.g., proportion correct) are still available under more complex DGs and even natural languages, but the prompts would not be easily visualised.

\paragraph{Permutation invariance.}
Languages are not permutation-invariant, which makes it hard to analyse and summarise. Permutation-invariance in our experiments makes it possible to describe a binary prompt by counts, helping us visualize them and even their distributions; it also makes it easy to search for theoretically optimal prompts specified by the DGs, to generate concrete optimal prompt instances from the otherwise abstract theoretical principles. Such simplified DGs still led to non-trivial patterns shown in the paper, avoiding confounds that would arise from, say, more intricate semantics in languages and suboptimal prompt search.
As such, the permutation invariance and binary tokens should be regarded as simplifying or assistive experimental constructs that still allow us to appreciate the intrinsic complexities of optimal prompts.

The neural predictors used in our experiments are not permutation-invariant; this means that they are not guaranteed to be Bayes-optimal on these tasks~\citep{chlon2025llms}. We do not make this assumption when searching for optimal prompts, and searched exhaustively over all sequences, rather than counts as done for Bayes predictor. 

\paragraph{Non-statistical aspects of language prediction.}

We have investigated specifically the mechanism of sequence prediction, from a Bayesian meta-learning perspective. In particular, the data generators in this work have Bayes predictors that can be expressed in terms of fixed-dimensional sufficient statistics. They fully summarize the prompt history of any length as a posterior distribution over a fixed-dimensional task variable. This makes it natural to take the Bayesian view to explain many interesting phenomena shown in this work.

Previous work on synthetic datasets considered more ``retrieval-like'' mechanisms~\citep{xieexplanation, jiang2023latent}. 
For example,~\citet{allen2023physics} showed the observation that permuting the sentences can improve question answering or information retrieval .
The Bayesian perspective, though certainly applicable in this and all other sequence prediction problems, 
may not provide the most intuitive explanation to such phenomena.

Further, the Bayesian view also does not account for any post-pretraining stages of real LLMs, such as supervised finetuning, human preference learning, and those that promote search and reasoning capabilities. For example, it is possible to ask a language model to produce 70\% \ONEs with the prompt ``Give me a sequence of 70\% \ONEs.'', which is clearly out-of-scope for the Bayesian meta-learning theory. It is nonetheless possible that supervised finetuning makes it easy to prompt by effectively making the latent factor distributions more uniform over certain semantic domains, thus reducing the bias from just pretraining. Previous work also found distinct ways of generalization between in-context and in-weight learning~\citep{chan2022transformers}. Understanding these post-pretraining effects by probing the latent factor distribution could be an interesting future direction.

\paragraph{Typical prompt.}

When comparing the advantage of optimal prompts to typical prompts, we computed the \emph{expected} performance of samples or demonstrations from the task distribution. However, the performance of prompting with individual prompts can vary drastically. In practice, people often perform \emph{some} form of prompt selection within allowable budget. While hitting the optimal prompt by random chance is small, it is possible that for the task has a very flat ``loss landscape'', in which case finding a performant prompt by chance can be quite high. The advantage of optimal prompt is likely diminished in this case.

\paragraph{Importance of pretraining distribution.}

We attributed the cause of unintuitiveness to unknown pretraining distribution. It is often believed that the \emph{coverage} of pretraining data affects the range of capabilities in downstream applications, leading to the notion that if some behavior is included in the dataset, then the capability can be induced by prompting. Our results on in-meta-distribution experiments indicate further that the \emph{distribution} of the pretraining data affects the \emph{difficulty} in discovering and understanding optimal prompts. In other words, the existence of some behavior in the pretraining dataset does not imply that this behavior can be intuitively prompted, but rather the distribution of all behavior matters. To verify these hypotheses, one would need to pretrain different language models on datasets with controllable latent factors. Alternatively, there are techniques to debias the generated content by building a model of the generative behavior~\citep{gagne2023inner}.

\paragraph{Interpreting prompts.}

We have chosen to interpret rather simple semantics in binary sequences: the bias of a hypothetical coin, or actions and rewards in Bernoulli bandits. There are also symmetries in the prompts that would correspond roughly to synonyms or paraphrases of the same meaning. In real language models, however, the nature of the semantics may be quite different to a real-valued coin bias. Nonetheless, the simplicity of interpreting coin biases induces minimal subjective biases, which may be an issue with interpreting natural languages.

\paragraph{Other data modalities.}

Image models are often trained by maximum-likelihood (or methods based on its lower bounds). Consequently, the Bayesian meta-learning framework still applies: the difficulty of inducing certain features in an image can be attributed to characteristics of the unknown image data distribution, and the opaque mapping from a conditioning input to the desirable predictive distribution of the pixels to generate.

Examples of text-to-image applications may be easier to relate. As a simple example, suppose we restrict the text to a sequence of binary tokens, and let the image be pictures of a random mixture of apples and bananas, with the ratio of \ONEs and the proportion of apples determined by a common latent cause 
, then applying the same modeling assumption as done in our current paper allows us to make the exact same observations: the best binary sequence may not match the desired number of apples in the image. The binary tokens can be replaced by languages that reflect preferences over apples and bananas.

In the case of image in-painting, one can form a correspondence between the latent factors in the switching DGs in \cref{sec:switching} and basic features of images:
\begin{itemize}
    \item wavelength: frequency content in images;
    \item noise level: blurriness or noise level (e.g., salt-and-pepper noise), respectively.
\end{itemize}
For an image model trained on uniformly distributed frequency content and noise levels, the results in \cref{fig:switching} translate to the following prediction: the optimal partial image may have a different frequency level to the desired frequency level to be inpainted. In terms of higher-level semantics, the noise can also be related to, for instance, the number of pedestrians on a crowded street with a background wall, and the wavelength can be related to frequency patterns of the graffiti on that wall partially occluded by the pedestrian. To generate the same graffiti pattern, the level of occlusion noise can affect the best patterns of the partial graffiti of the prompt. Of course, these are abstract extrapolations from our experiments to richer data domains, and they need to be verified in future experiments.

\paragraph{Practical prompt optimization.}
The theoretically optimal prompts are defined by the problem setup (pretraining and task DG), and are independent of any search strategy. We use exhaustive search to guarantee that we identify this optimal prompt, without introducing confounds that interfere with the Bayes meta-learning perspective. Using practical but inexhaustive search will produce suboptimal prompts, which is not what we need for the theory.

We discuss the effect of using inexhaustive prompt search on the reliability of finding the optimal prompt. The chances that an inexhaustive prompt search will land on $s^*$ depend on the number of equivalently optimal prompts (\#OP), the size of all possible prompts (\#PP), and any search strategies used. The reliability of recovering the optimal prompt will be reduced by the ratio of \#OP / \#PP. More involved update strategies may improve on this ratio. However, in cases where the loss ``landscape'' has ``local optima'' (e.g., \cref{fig:sensitivity_0.6_emp_dist}), any prompt proposal strategy based on local mutations may get stuck at a suboptimal prompt.

\paragraph {General vs specific tasks.} In our LLMs experiment, the task was to generate positive or negative reviews. There, we found that shorter prompts could be more effective. It is possible that longer prompts may provide too specific descriptions, which may be inconsistent with certain aspects of entire sets of positive and negative reviews, such as certain styles, genres, etc. However, if the task is simply to generate a specific review, then the optimal prompt could be the review itself, followed by ``Now I repeat my review verbatim:'', which is a trivial solution with very long prompt length. 
We can then consider prompting for subsets of the IMDB reviews, such as those within certain genres, directors or franchises, and see how the prompt length varies depending on the generality of the reviews.  

\clearpage

\section{LIST OF NOTATION}\label{sec:notation}
\begin{tabbing}
  \hspace{0.13\textwidth} \= \hspace{0.73\textwidth} \= \kill
  {\bf Symbol }     \> {\bf Explanation}                                                    \\[0.5ex]
  DG                \> data generator                                                \\[0.5ex]
  CIB-DG            \> conditionally independent Bernoulli data generator  \\[0.5ex]
  $\tau$            \> hidden factor (e.g., $\in[0,1]$ for Bernoulli bias parameter and bandit skill level) \\[0.5ex]
  $\cX\ni x_t$      \> binary token alphabet                                                \\[0.5ex]
  $T\in\mathbb{N}^+$\> length of a task sequence to be predicted  \\[0.5ex]
  $x_{1:T}\in\cX^T$ \> a sequence (to be predicted) of length $T$  \\[0.5ex]
  $p_{x|\tau}(x_{1:T}|\tau)$ \> distribution of sequence with bias $\tau$                   \\[0.5ex]
  $p_\tau(\tau)$    \> prior distribution over $\tau$                                            \\[0.5ex]
  $p_x(x_{1:T})$      \> = $\int p_{x|\tau}(x_{1:T}|\tau)\ud p_\tau(\tau)$, pretraining distribution \\[0.5ex]
  $p$      \> $p_x$ above, reference to a previously mentioned pretraining distribution/DG \\[0.5ex]
  $L\in\mathbb{N}^+$\> context length                                                       \\[0.5ex]
  $s_{1:L}\in\cX^L$ \> prompt of length $L$ to condition on                                                \\[0.5ex]
  $p_\text{B}(x_{1:T}|s_{1:L})$ \> Bayes predictor under DG $p$ (= $\int p_{x|\tau}(x_{1:T}|\tau)p_{\tau|x}(\tau|s_{1:T})$ for piecewise conditionally independent DG)  \\[0.5ex]
  $p_\theta(x_{1:T}|s_{1:L})$ \> Neural predictor $s_{1:L}$ trained on data from $p$ given $s_{1:L}$   \\[0.5ex]
  $S_x(s_{1:L})$    \> number of times $x$ appears in $s_{1:L}$                                           \\[0.5ex]
  $\Lmax\in\mathbb{N}^+$           \> maximal prompt length                                                \\[0.5ex]
  $q(x_{1:T})$      \> task distribution                                         \\[0.5ex]
  $q$      \> reference to a previously mentioned task distribution                                         \\[0.5ex]
  $N\in\mathbb{N}^+$               \> task dataset size                                                        \\[0.5ex]
  $\DqN$          \> $= \{x_{1:T}^n\}_{i=n}^N$ data set                                   \\[0.5ex]
  $\beta$  \> parameter in the Beta prior distribution \\[0.5ex]
  $\epsilon$ \> Bernoulli parameter in the switching process \\[0.5ex]
  $\lambda$ \> half period of the switching process \\[0.5ex]
  $w$               \> probability of $\tau_2$ in $\BernMix{\tau_1, \tau_2}$ when $p_\tau=(1-w)\delta_{\tau_1} + w\delta_{\tau_2}$                                             \\[0.5ex]
  $w_L(s_{1:T})$    \> posterior weight of $\tau_2$                                         \\[0.5ex]
  ${\cal L}(p,q,s_{1:L})$ \> Log-loss of predictor $p(x_{1:T}|s_{1:L})$ under prompt $s_{1:L}$ for data from $q(x_{1:T})$              \\[0.5ex]
  $\hat{\cal L}(p,{\DqN}, s_{1:L})$ \> empirical log-loss given dataset $\DqN$ \\[0.5ex]
  $s^*_{1:L}(p,q)$  \> theoretically optimal ($\mathcal{L}(p,q,\cdot)$-maximizing) prompt of length $L$  \\[0.5ex]
  $s^*_\Lmax(p,q)$  \> theoretically optimal ($\mathcal{L}(p,q,\cdot)$-maximizing) prompt of length $\leq\Lmax$  \\[0.5ex]
  $\hat s_{1:L}(p,q)$\> empirically optimal ($\hat{\cal L}(p, \DqN, \cdot)$-maximizing) prompt of length $L$ \\[0.5ex]  
  $\hat s_\Lmax(p,q)$  \> empirically optimal ($\hat{\cal L}(p, \DqN, \cdot)$-maximizing) prompt of length $\leq\Lmax$ \\[0.5ex]  
\end{tabbing}

\end{document}